\def\eqref#1{equation~\ref{#1}}
\def\floor#1{\lfloor #1 \rfloor}
\def\1{\bm{1}}
\DeclareMathAlphabet{\mathsfit}{\encodingdefault}{\sfdefault}{m}{sl}
\SetMathAlphabet{\mathsfit}{bold}{\encodingdefault}{\sfdefault}{bx}{n}
\newcommand{\E}{\mathbb{E}}
\newcommand{\Var}{\mathrm{Var}}
\newcommand{\given}{\mid }
\newcommand{\x}{\ensuremath \bm{x}}
\newcommand{\y}{\ensuremath \bm{y}}
\newcommand{\w}{\ensuremath \bm{w}}
\newcommand{\bP}{\mathbb{P}}
\newcommand{\bR}{\mathbb{R}}
\newcommand{\bE}{\mathbb{E}}
\newcommand{\Ber}{\mathrm{Ber}}
\newcommand{\Bin}{\mathrm{Bin}}
\newcommand{\rank}{\mathrm{rank}}
\newcommand{\dist}{\mathrm{dist}}
\DeclarePairedDelimiter{\norm}{\lVert}{\rVert}
\definecolor{NavyBlue}{HTML}{006EB8}
\newcommand{\xhdr}[1]{\textbf{#1}\:}
\newcommand{\defined}[1]{{\color{NavyBlue}\emph{#1}}}
\newcommand{\unidble}{\ensuremath \mathcal{U}}
\newcommand{\unidbleA}{\ensuremath \mathcal{U}_{A}}
\renewcommand{\dim}{n}
\newtheorem{definition}{Definition}
\newtheorem{theorem}{Theorem}
\newtheorem{lemma}{Lemma}
\title{Identifiability Challenges in Sparse Linear\\ Ordinary Differential Equations}
\author{Cecilia Casolo,\: Sören Becker,\: Niki Kilbertus \\
  Technical University of Munich\\
  Helmholtz Munich\\
  Munich Center for Machine Learning (MCML) \\
  \texttt{first.last@helmholtz-munich.de} \\
}
\begin{document}

\maketitle

\begin{abstract}
 Dynamical systems modeling is a core pillar of scientific inquiry across natural and life sciences. Increasingly, dynamical system models are learned from data, rendering identifiability a paramount concept. For systems that are not identifiable from data, no guarantees can be given about their behavior under new conditions and inputs, or about possible control mechanisms to steer the system. It is known in the community that ``linear ordinary differential equations (ODE) are almost surely identifiable from a single trajectory.'' However, this only holds for dense matrices. The sparse regime remains underexplored, despite its practical relevance with sparsity arising naturally in many biological, social, and physical systems.
 In this work, we address this gap by characterizing the identifiability of sparse linear ODEs. Contrary to the dense case, we show that sparse systems are unidentifiable with a positive probability in practically relevant sparsity regimes and provide lower bounds for this probability. We further study empirically how this theoretical unidentifiability manifests in state-of-the-art methods to estimate linear ODEs from data. Our results corroborate that sparse systems are also practically unidentifiable. Theoretical limitations are not resolved through inductive biases or optimization dynamics. Our findings call for rethinking what can be expected from data-driven dynamical system modeling and allows for quantitative assessments of how much to trust a learned linear ODE.\end{abstract}

\section{Introduction and related work}\label{sec:intro}
The field of dynamical systems has emerged early as a corner stone of scientific modeling, primarily due to the high demand for modeling temporally evolving systems in the natural and life sciences.
For the most part, dynamical systems have been modeled ``manually,'' i.e., by humanly-prescribed differential equations or simulators inspired by, and validated in real-world systems.
With the advent of machine learning and large data collection efforts, dynamical systems are increasingly learned from data.
This new perspective also brought about a shift from an original focus on the \emph{forward problem}---solving given differential equations from different initial conditions for predictions---to the \emph{inverse problem}---finding the governing differential equation from observed trajectories.
Inferring the underlying dynamical laws governing a system solely from observational data is a longstanding aspiration across numerous scientific disciplines.
A fundamental prerequisite to realize this goal is the identifiability of the governing laws, i.e., that the true dynamics can, in principle, be uniquely determined from the available observations.
If multiple different laws could result in the exact same observations, we are stuck.
This is the problem of identifiability, as illustrated in \cref{fig:concept_figure}: could the observed data have arisen from only a single unique dynamic?
While solving the inverse problem poses a host of practical challenges, no guarantees can ever be given for unidentifiable systems.

The identifiability of different types of dynamical systems from different types of observed data has been studied in a variety of domains like natural science \citep{dona2022constrained,munoz2018review}, control theory \citep{ding2020dynamical,gargash1980necessary}, experimental design \citep{raue2010identifiability} and many more. \citet{miao2011identifiability} provide a well-structured overview of how identifiability analysis of different types of (non-linear) dynamics is approached in practice while \citet{scholl2022symbolic,scholl2023uniqueness} have characterized necessary and sufficient conditions about what needs to be observed for identifiability in different functional classes of dynamics.
The bio-mathematics community has been among the first to develop rigorous approaches to study identifiability \citep{bellman1970structural}, referring to it as ``structural identifiability.''
Similarly, researchers in epidemiology have a strong track record in analyzing identifiability for domain-specific compartment models \citep{saccomani2011effective, miao2011identifiability, xia2003identifiability, tuncer2016structural}.
\citet{cunniffe2023identifiability} recently analyzed the connection between identifiability and observability, as a critical aspect of epidemiological models, where we typically cannot observe the entire state.
Stochastic differential equations often enjoy stronger theoretical identifiability guarantees, typically because the stochastic component is assumed to have full support, allowing to ``probe the entire space'' \citep{bellot2021neural,wang2023neural}.

Irrespective of theoretical considerations, new practical methods for learning dynamical systems from data are proposed continuously. These range from traditional parameter estimation techniques for ODEs \citep{lavielle2011maximum, commenges2011inference, huang2006hierarchical, li2005parameter, sindy}, to neural-network-based parameter estimation \citep{rubanova2019latent,qin2019data} as well as deep-learning based approaches that learn the dynamics as a neural net \citep{chen2018neural} or predict symbolic expressions \citep{becker2023predicting,d2023odeformer}.
While these methods are reported to achieve strong \emph{reconstruction performance}, i.e., they find dynamics that, when integrated from the same initial condition, recover the observed trajectories well, it is rarely reported to which extent they actually \emph{identify the originally underlying dynamical law}.
Due to unidentifiability, these methods could learn dynamical systems that do not generalize beyond the observed time spans or to new initial conditions, let alone warrant claims of scientific insights.

\begin{figure}
    \centering
    \vspace{-0.8cm}
    \includegraphics[width=\linewidth]{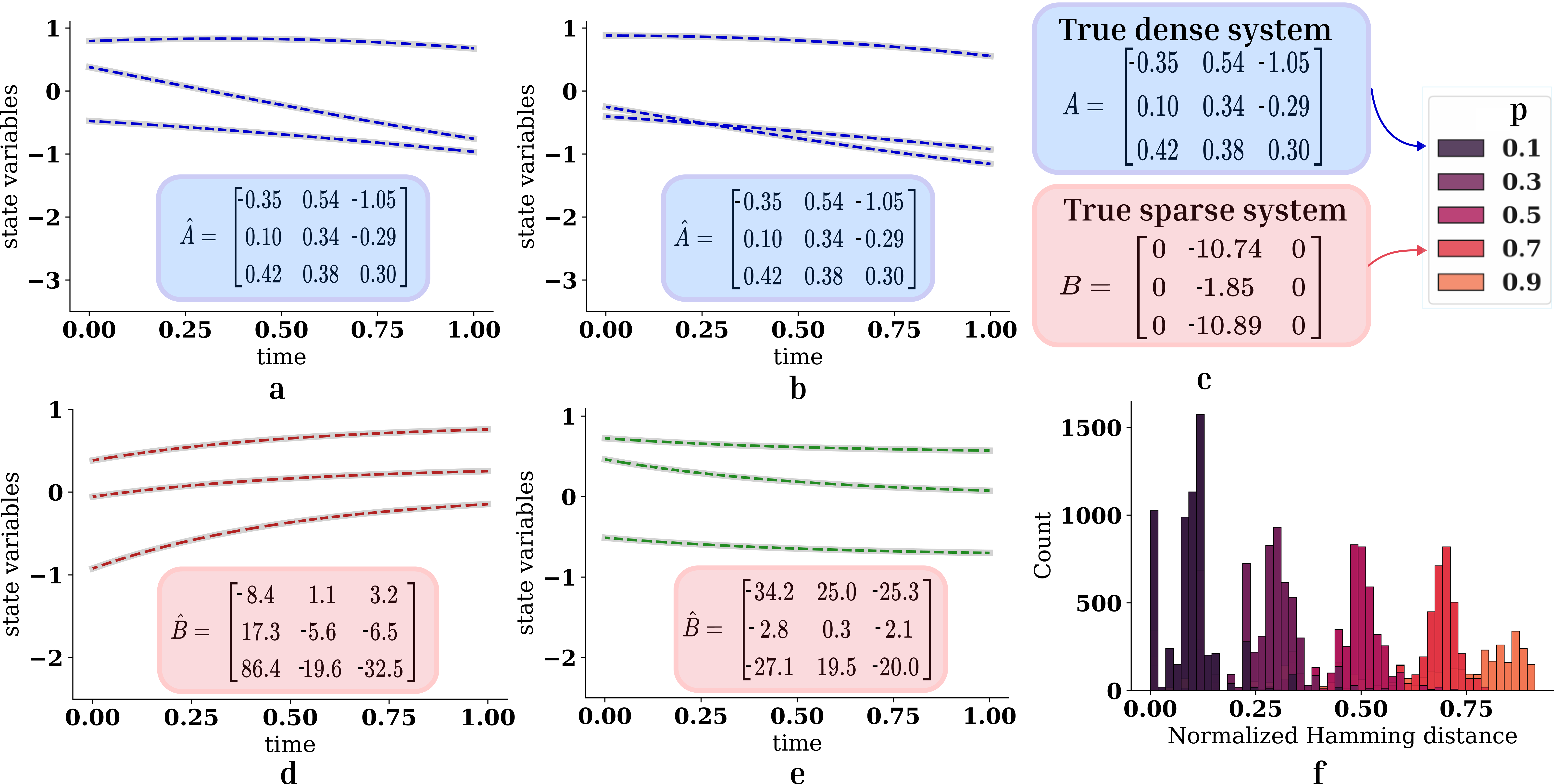}
    \caption{Identifiability differs between dense and sparse systems. The trajectories in \textbf{a} and \textbf{b} correspond to solutions for two different initial conditions of the dense system in \textbf{c}; similarly, trajectories in \textbf{d} and \textbf{e} correspond to solutions of the sparse system in \textbf{c}. All estimated trajectories (colored dashed curves) fit the respective observations (thick gray curves) well, yet only the estimate of the dense system is correct. The histogram in \textbf{f} shows that the normalized Hamming distance between the estimated and true system matrix systematically increases as sparsity $p$ increases.}
    \label{fig:concept_figure}
\vspace{-0.8cm}
\end{figure}

In this work, we will focus on linear, homogeneous, autonomous ordinary differential equations (ODE), $\dot{x}(t) = A x(t)$, heavily relied upon in many domains while also amenable to rigorous theoretical analysis.
Pioneering works by \citet{stanhope2014identifiability} and, more recently, \citet{qiu2022identifiability} established as a key result that ``almost all such linear ODEs are identifiable from a single solution trajectory.''
Here, it is assumed that we have observed the entire trajectory without any observation noise in continuous time.
This is often paraphrased as ``linear ODEs are identifiable'' and summoned by practitioners as justification to assume that whenever a linear ODE is estimated well from data, one has actually found the unique underlying governing law of the system with probability 1.
The core contributions in this work build on the observation that this, rightfully celebrated, result only holds for dense matrices, i.e., they assume a measure over systems $\bR^{\dim \times \dim}$ that is absolutely continuous with respect to the $\dim$-dimensional Lebesgue measure $\lambda^{\dim}$.

This is not satisfied by sparse systems.
Sparsity in dynamical systems means that not all variables depend on (or interact with) all other variables \citep{aliee2021beyond, aliee2022sparsity}.
Hence, sparsity allows us to understand and interpret complex networks pervasive in nature, society, and technology.
In fact, dynamical models in biology \citep{lu2011high} or social networks \citep{ravazzi2017learning} typically exhibit high degrees of sparsity.
For example, \citet{liben2005algorithmic} asserts that ``on average, a person has on the order of at most a thousand friends.'' Similarly, interactions in gene regulatory networks are known to be sparse, with only a tiny fraction of possible pair-wise interactions being nonzero \citep{carey2018big}.
Hence, in many settings researchers only attempt to learn dynamical models from data when they assume interactions to be sparse, because meaningful interpretation is only possible in sparse models in the first place \citep{xu2023sparse}.
The resulting models are also believed to have a lower risk of overfitting, particularly when data is limited \citep{bartoldson2020generalization}.

\xhdr{Contributions.}
This work puts an asterisk on the result that ``almost all linear ODEs are identifiable'' proving positive lower bounds on the probability of \emph{sparse systems} being unidentifiable.
We also quantify theoretically to what extent ``near unidentifiability'' poses a challenge for practical identification in identifiable cases.
Finally, we bridge the gap from theory to practice and demonstrate that (near) unidentifiability is not circumvented in practical methods by potential inductive biases or optimization dynamics.
Our work clearly characterizes the regimes (in terms of dimensionality and sparsity) in which unidentifiability is an issue, and what observed time horizons are required to escape near unidentifiability.

\vspace{-0.15cm}
\section{Background and problem setting}
\label{sec:background}
We focus on autonomous, homogeneous, linear, noise-free ordinary differential equations (ODE), i.e., initial value problems (IVP) of the form
\begin{equation}\label{eq:ivp}
    \dot{\x}(t) := \frac{d\x(t)}{dt} = A \x(t)\:,\qquad
    \x(0)=\x_0\:,
\end{equation}
where $A \in \bR^{\dim \times \dim}$ is also called the \defined{system} or simply referred to as the ``ODE,'' $\x: [0, T] \rightarrow \bR^{\dim}$ denotes the \defined{trajectory}, $\x_0 \in \bR^{\dim}$ is the \defined{initial condition}, and $t$ denotes time.
In autonomous systems, we can assume without loss of generality that the initial time is $t=0$ as
autonomy implies that the system $A$ is not time-dependent.
Homogeneity ensures that the dynamics are not externally forced or perturbed.
The system matrix $A$ is also referred to as the \defined{adjacency matrix} following a graph theoretic view on the dynamics.
The key novelty in our work is in considering \emph{sparse} systems, i.e., sparse matrices $A$.

\xhdr{Discussion of assumptions and limitations.}
Linearity, autonomy, homogeneity and the absence of noise heavily restrict the types of systems studied in our work.
The linearity assumption is in line with existing work \citep{stanhope2014identifiability,qiu2022identifiability, fedoryuk2012asymptotic, ovchinnikov2022input} and required for the feasibility of theoretical analysis.
Autonomy and homogeneity are adopted to represent our focus on passive observations of naturally evolving systems without external forcing, which corresponds closely to the idea of an observational distribution in the causal inference literature and differentiates our work from approaches towards controllability/observability in the (optimal) control and system identification literature.
Some of the existing work on identifying linear ODEs from a single trajectory have been extended to post-nonlinear models (breaking linearity) \citep{miao2011identifiability}, affine systems (extending homogeneity) \citep{duan2020identification}, systems with hidden confounder \cite{wang2024identifiabilityb},  and include discrete, noisy observations \citep{wang2024identifiability}.
We believe these works offer fruitful pointers towards extending our analysis of sparse systems here beyond its current limitations.
At the same time, \citet{scholl2023uniqueness} show that in more general non-linear function classes identifiability requires observing trajectories that essentially ``cover the entire state space'', putting theoretical limits on what one can hope for in generic non-linear settings.
More implicitly, we also assume full observations, i.e., that the entire state relevant for the evolution of the system can be observed. In many practical settings, one may be limited by partial observability.
Existing results indicate that identifiability is generally impossible in such settings without strong assumptions on the observation function \citep{cobelli2007controllability}, which is why partial observations are beyond the scope of this work. We discuss our assumptions and their implications in detail in \cref{app:assumptions}.

\xhdr{Main goal.}
Our main goal is to answer the following question: Given a trajectory $\x:[0, T] \to \bR^{\dim}$ that solves the IVP in \cref{eq:ivp}, (when) can we uniquely infer $A$ under the assumption that $A$ is sparse?

Let us first formalize and concretize this identifiability problem.
\begin{definition}[identifiability from $\x$ in $\Omega \subseteq \bR^{\dim \times \dim }$]\label{def:ident_from_x_in_A}
    Let $A \in \Omega \subseteq \bR^{\dim \times \dim }$ and let $\x:[0,T]\rightarrow \bR^{\dim}$ be a solution of $\dot{\x}(t):=\frac{d\x}{dt}(t)=A\x(t)$. We call $A$ \defined{identifiable from $\x$ in $\Omega$} if there exists no $B \in \Omega$ with $B \neq A$ and $\frac{d\x}{dt}(t)=B\x(t)$.
\end{definition}
\xhdr{Remarks.}
The IVP in \cref{eq:ivp} has a unique solution $\x$ on all of $\bR$ for all $A$ and $\x_0$ by the Picard-Lindelöf theorem and the fact that linear functions are globally Lipschitz \citep{arnold1992ordinary}.
Therefore, instead of $[0,T]$ we can use any interval (including all of $\bR$) as time interval in \cref{def:ident_from_x_in_A}.
In this work, we are specifically interested in identifying $A$ \emph{for a given observed trajectory}.
Related but different notions of identifiability studied in the literature include, for example  \citep{stanhope2014identifiability,qiu2022identifiability}:
(a) For a fixed initial condition $\x_0 \in \bR^{\dim}$, all pairs of distinct systems $A,B \in \Omega$ lead to different trajectories.
(b) For all pairs of distinct systems $A, B \in \Omega$, there exists some initial condition $\x_0 \in \bR^{\dim}$ leading to different trajectories.
(c) For all pairs of distinct systems $A, B \in \Omega$, every initial condition $\x_0\in \bR^{\dim}$ leads to distinct trajectories.
It is easy to see that notion (b) is true ``globally,'' i.e., even for $\Omega = \bR^{\dim \times \dim }$ \citep{stanhope2014identifiability}.
This relates closely to the typical linear time invariant (LTI) setting in system identification including controls: when we can ``probe'' the system by choosing different initial conditions as controls, we are always able to identify it fully.
Instead, we concretely focus on identifiability from merely observational data in the form of a single solution trajectory.
On the flip side, (c) is not true for $\Omega = \bR^{\dim \times \dim }$ (and does not even hold for ``most'' $\Omega \subseteq \bR^{\dim \times \dim }$, see \citealp[Sec.~S1]{qiu2022identifiability}).
This is primarily due to the existence of ``unlucky'' initial conditions, which we will make rigorous below.

To decide when a system can be identified from a given trajectory, it is useful to characterize whether there are systems for which this is never possible.
Throughout this work, we will primarily focus on the setting where $\Omega = \bR^{\dim \times \dim }$ reflecting the fact that we typically cannot exclude any systems from potentially being the ground truth, while still allowing for different probability measures on $\Omega$.
\begin{definition}[global unidentifiability]\label{def:global_unident}
    We call $A \in \bR^{\dim \times \dim }$ \defined{globally unidentifiable} in $\Omega$, if there exists no $\x_0 \in \bR$ such that $A$ is identifiable from the corresponding solution trajectory $\x$ in $\bR^{\dim \times \dim }$.
    We denote the set of all globally unidentifiable $A$ by $\unidble \subseteq \bR^{\dim \times \dim }$.
\end{definition}
In words, a system $A$ is globally unidentifiable if any solution trajectory is also the solution to some other system $B$ (this need not be a single alternative $B$, but for each trajectory there must exist at least one).
A necessary and sufficient condition for global unidentifiability that characterizes these ``hopeless'' systems reads as follows.
\begin{theorem}[{\citealp[Thm~2.5 + Thm~3.6]{stanhope2014identifiability}}]\label{thm:unident}
A system $A \in \bR^{\dim \times \dim }$ is globally unidentifiable if and only if it has more than one Jordan block corresponding to the same eigenvalue (for at least one eigenvalue) in the Jordan normal form of $A$.
\end{theorem}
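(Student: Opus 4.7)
The plan is to reduce global unidentifiability to a purely linear-algebraic statement about cyclic vectors, then invoke the standard characterization of cyclicity in terms of Jordan blocks.

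First, I would fix $\x_0$ and rewrite the identifiability condition. Since the unique solution of the IVP is $\x(t)=e^{At}\x_0$, $A$ is identifiable from $\x$ in $\bR^{\dim\times\dim}$ iff there is no $B\neq A$ with $(B-A)e^{At}\x_0 = 0$ for all $t$. Equivalently, $A$ is identifiable from $\x$ iff the only matrix $M\in\bR^{\dim\times\dim}$ annihilating $e^{At}\x_0$ for every $t$ is $M=0$, which holds iff the set $\{e^{At}\x_0 : t\in\bR\}$ spans $\bR^{\dim}$. This is the crucial bridge from the ODE-level statement to a linear-algebraic one.

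Next, I would show that $\mathrm{span}\{e^{At}\x_0 : t\in \bR\}$ coincides with the Krylov subspace $\mathcal{K}(A,\x_0):=\mathrm{span}\{\x_0, A\x_0, A^2\x_0,\dots\}$. The inclusion $\subseteq$ follows from the power series $e^{At}\x_0 = \sum_{k\ge 0} (t^k/k!)\, A^k\x_0$ (plus Cayley--Hamilton to bound the index). For the reverse inclusion, linear independence of the monomials $\{1,t,t^2,\dots\}$ on any interval implies that any linear functional vanishing on all $e^{At}\x_0$ must vanish on each $A^k\x_0$. Hence, $A$ is identifiable from the trajectory starting at $\x_0$ iff $\x_0$ is a \emph{cyclic vector} for $A$, and thus $A$ is globally unidentifiable iff $A$ admits \emph{no} cyclic vector.

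Finally, I would invoke the standard fact from linear algebra: $A$ has a cyclic vector iff its minimal polynomial equals its characteristic polynomial iff each eigenvalue of $A$ corresponds to exactly one Jordan block in the Jordan normal form. Combining this with the previous step yields precisely the stated equivalence. The one-Jordan-block direction is easy (pick $\x_0$ as the sum of the top vectors of each Jordan chain and check that its Krylov iterates span $\bR^{\dim}$); the converse uses that if some eigenvalue $\lambda$ has two Jordan blocks, every vector lies in the invariant subspace on which $(A-\lambda I)$ has rank at most $\dim-2$, which forces the Krylov subspace of any vector to miss at least one direction.

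The main obstacle, and the only non-routine step, is the Krylov-subspace identification in the second paragraph: one must justify carefully why the span of $\{e^{At}\x_0\}_{t\in \bR}$ cannot strictly exceed the Krylov subspace, which requires either invoking Cayley--Hamilton to truncate the power series or a duality/annihilator argument using the linear independence of monomials. Once this equivalence is set up, the rest is bookkeeping on top of a classical cyclicity criterion.
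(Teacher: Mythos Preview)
The paper does not supply its own proof of this statement; it is quoted verbatim as a known result from Stanhope et al.\ (2014), so there is no in-paper argument to compare against. Your route via cyclic vectors is correct and is the standard one. Note that your step~2 (the span of $\{e^{At}\x_0:t\in\bR\}$ equals the Krylov subspace $\mathcal{K}(A,\x_0)$) is exactly the content of the paper's \cref{thm:prop_inv_subspace}, since $\mathcal{K}(A,\x_0)$ is the smallest $A$-invariant subspace containing $\x_0$; in effect your argument recovers both cited theorems in one pass.

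Two minor points on the sketch. First, your ``main obstacle'' paragraph slightly conflates the two inclusions: Cayley--Hamilton (or simply closedness of a finite-dimensional subspace) handles $\mathrm{span}\{e^{At}\x_0\}\subseteq\mathcal{K}(A,\x_0)$, while the annihilator argument via linear independence of $\{t^k\}$ handles $\supseteq$; both are routine, and neither is really an obstacle. Second, your converse sketch in the last step (``every vector lies in the invariant subspace on which $(A-\lambda I)$ has rank at most $\dim-2$'') does not parse as written and would not constitute a proof. The clean argument you want is: two Jordan blocks for the same eigenvalue force $\deg m_A<\dim$, and since $m_A(A)\x_0=0$ we get $\dim\mathcal{K}(A,\x_0)\le\deg m_A<\dim$ for every $\x_0$, so no cyclic vector exists. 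With that fix the proposal is complete.
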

Additionally, they also characterized when $A$ is identifiable from $\x$.
\begin{theorem}[{\citealp[Lem~3.4 + Thm~3.4]{stanhope2014identifiability}}]\label{thm:prop_inv_subspace}
  A system $A \in \bR^{\dim \times \dim }$ is identifiable from $\x$ if and only if $\x(0)$ is not contained in any $A$-invariant proper subspace of $\bR^{\dim}$.
\end{theorem}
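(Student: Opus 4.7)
The plan is to prove both directions via the subspace $V:=\mathrm{span}\{\x(t):t\in[0,T]\}\subseteq\bR^{\dim}$ swept out by the trajectory. The central observation I would establish first is that $V$ is itself $A$-invariant and contains $\x(0)$. Membership of $\x(0)$ is immediate. For $A$-invariance, I would note that $V$ is a finite-dimensional (hence closed) subspace, and $A\x(t)=\dot\x(t)$ can be written as a limit of difference quotients $(\x(t+h)-\x(t))/h$ whose terms lie in $V$; so $A\x(t)\in V$ for every $t$. (Alternatively: by Cayley--Hamilton $V$ coincides with the Krylov subspace $\mathrm{span}\{A^k\x(0):k\ge 0\}$, which is manifestly $A$-invariant.)

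For the \emph{sufficiency} direction, I would assume $\x(0)$ is not contained in any $A$-invariant proper subspace and show identifiability. Suppose for contradiction some $B\in\bR^{\dim\times\dim}$, $B\neq A$, satisfies $\dot\x(t)=B\x(t)$. Then $(A-B)\x(t)=0$ for all $t$, so $V\subseteq\ker(A-B)$. By the observation above, $V$ is $A$-invariant and contains $\x(0)$, hence by hypothesis $V=\bR^{\dim}$. This forces $\ker(A-B)=\bR^{\dim}$, i.e., $A=B$, contradicting our assumption.

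For the \emph{necessity} direction, I would assume some proper $A$-invariant subspace $W\subsetneq\bR^{\dim}$ contains $\x(0)$ and construct an alternative system. By $A$-invariance of $W$, the unique solution $\x(t)=e^{At}\x(0)$ stays in $W$ for all $t$ (e.g., the power series $\sum_k t^k A^k\x(0)/k!$ lies in $W$ termwise). Since $W$ is proper, I can pick any complement $U$ with $U\neq\{0\}$ and define a nonzero linear map $C:\bR^{\dim}\to\bR^{\dim}$ that vanishes on $W$ but is nonzero on $U$. Setting $B:=A+C$, we have $B\x(t)=A\x(t)+C\x(t)=A\x(t)=\dot\x(t)$ for all $t$, yet $B\neq A$, so $A$ is not identifiable from $\x$.

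I do not expect any serious obstacle in this proof; the only mildly delicate point is justifying that $V$ is $A$-invariant rigorously, which the finite-dimensional closedness argument (or the Krylov subspace identification) handles cleanly. The rest is linear algebra: kernel arguments in one direction and explicit construction of a vanishing perturbation on the invariant subspace in the other.
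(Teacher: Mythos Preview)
Your argument is correct in both directions. The paper itself does not prove \cref{thm:prop_inv_subspace}; it is imported as background from \citet{stanhope2014identifiability} and accompanied only by the one-sentence intuition that a trajectory confined to an $A$-invariant proper subspace fails to ``probe'' the complement, so one can build a $B$ agreeing with $A$ on the subspace but differing elsewhere---exactly your necessity construction. There is therefore nothing substantive to compare against beyond noting that your proof formalizes precisely the heuristic the paper sketches, and that your sufficiency direction (via $V=\mathrm{span}\{\x(t)\}\subseteq\ker(A-B)$ together with the $A$-invariance of $V$, equivalently the Krylov-subspace identification) is the standard route.
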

Whenever any point along the solution trajectory (e.g., the initial condition) lies in an $A$-invariant \emph{proper} subspace of $A$, the entire trajectory will be confined to this subspace.
The trajectory is thus not ``probing'' the complement of this subspace in $\bR^{\dim \times \dim }$.
We can then construct another system $B$ that agrees with $A$ (viewed as linear maps) on the $A$-invariant subspace, but differs on the complement. 

\section{System-level unidentifiability}
\label{sec:theory}

We will refer to $A$  being in $\unidble$ as ``system level unidentifiability,'' because when $A\in \unidble$ it is unidentifiable regardless of the observed trajectory.
For all $A \notin \unidble$, there exists at least one solution trajectory $\x$, such that $A$ is identifiable from $\x$ in $\bR^{\dim \times \dim }$.
However, there may still be ``unlucky'' $\x$, namely those confined to an $A$-invariant proper subspace, for which $A$ remains unidentifiable.
For any system $A$, we define $\unidbleA \subseteq \bR^{\dim}$ to be the set of initial conditions $\x_0\in \bR^{\dim}$ from which $A$ is not identifiable.
We call this ``trajectory level unidentifiability''.

Ultimately, our main goal aims at quantifying whether we should expect to be able to identify ``a random system'' from ``a random solution trajectory.''
To formalize this question, consider a joint probability distribution $P_{A, \x_0}$ over pairs of systems and initial conditions $(A, \x_0) \in \bR^{\dim \times \dim } \times \bR^{\dim}$ (equipped with the Borel-$\sigma$-algebra).
We are then interested in
\begin{align}\label{eq:prob-unident}
    P_{A, \x_0}(A \text{ not identifiable from } \x_0)
    &= \int P_{A, \x_0} (\unidbleA \given A)\, dP_A(A) \\
    &= P_A(\unidble) + \int \chi_{A \notin \unidble}(A) P_{A, \x_0}(\unidbleA \given A)\, dP_A(A)\:,\nonumber
\end{align}
where $\chi$ is the indicator function, i.e., $\chi_{A \notin \unidble}(A) = 1$ if $A \notin \unidble$ and $0$ otherwise.
The partitioning in the last equality corresponds to a two-step reasoning: What is the probability of hitting a globally unidentifiable system (on the system identification level) and, separately, for any non globally unidentifiable model, what is the probability of hitting an ``unlucky'' initial condition.\footnote{The negation of ``globally unidentifiable'' is not called ``globally identifiable'' as this would convey the wrong impression of it being ``always'' identifiable, i.e., from any initial condition.}

Before developing a lower bound for \cref{eq:prob-unident}, we introduce the assumed sparsity model.
\begin{definition}[Sparse-Continuous Ensemble]\label{def:sparse_cont}
We call a matrix $A := (B_{ij} X_{ij})_{i,j=1}^{\dim}$ with $B_{ij} \sim \Ber(1-p)$ independent Bernoulli random variables for some $p\in [0,1]$, and $X_{ij} \sim P_X$ independent continuous random variables (i.e., their distribution is absolutely continuous with respect to the Lebesgue measure on $\bR$), a \defined{sparse-continuous} random matrix (or system) with sparsity level $p$.
\end{definition}
Examples include any iid sub-Gaussian random matrix masked by iid Bernoulli variables.
\begin{restatable}{lemma}{lemma:zero-eig}\label{lemma:zero-eig}
    For a sparse-continuous random matrix $A$, the probability that $A$ has repeated nonzero eigenvalues is zero.
    In other words,
\begin{equation*}
    P_A( \{A \in \bR^{\dim \times \dim } \given \exists \lambda \in \bR \,:\, \rank(A - \lambda I) < \dim  -1\})
    =
    P_A(\{A \in \bR^{\dim \times \dim } \given \rank(A) < \dim  - 1\})\:.
\end{equation*}
\end{restatable}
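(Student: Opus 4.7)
The plan is to condition on the Bernoulli mask $B=(B_{ij})$ and show the conditional probability is zero for every realization of $B$. Given $B$, the law of $A$ is supported on the linear subspace $V_B := \{M \in \bR^{\dim\times\dim} : M_{ij}=0 \text{ whenever } B_{ij}=0\}$ and is absolutely continuous with respect to Lebesgue measure on $V_B$. The part of the left-hand set in the lemma that is not already in the right-hand set is $\{A : \exists\, \lambda \neq 0,\ \rank(A-\lambda I) \leq \dim - 2\}$, i.e., $A$ has a \emph{nonzero} eigenvalue of geometric multiplicity at least $2$. Since geometric multiplicity is bounded above by algebraic multiplicity, it suffices to bound the probability that the characteristic polynomial $p_A(\lambda)$ has a repeated nonzero root.

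Next I would strip off the structural zero root forced by the sparsity pattern. Writing $p_A(\lambda) = \lambda^\dim + c_{\dim-1}(A)\lambda^{\dim-1} + \cdots + c_0(A)$, let $m^\star = m^\star(B)$ be the largest $m$ for which the coefficients $c_0(A), \ldots, c_{m-1}(A)$ all vanish identically on $V_B$. If $m^\star=\dim$, then $p_A(\lambda)=\lambda^\dim$ on all of $V_B$, so $A$ is nilpotent and the bad event is empty. Otherwise $c_{m^\star}$ is not identically zero on $V_B$, and we can factor $p_A(\lambda) = \lambda^{m^\star}\, q_A(\lambda)$ with $q_A$ monic of degree $\dim-m^\star$ and $q_A(0)=c_{m^\star}(A) \not\equiv 0$; its coefficients are polynomials in the free entries of $A$. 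Any repeated nonzero root of $p_A$ is a repeated root of $q_A$, so the bad event is contained in $\{A\in V_B : \Delta(A)=0\}$ where $\Delta(A) := \mathrm{disc}_\lambda(q_A)$ is a single polynomial in the free entries. The zero set of a nonzero polynomial on $V_B$ has Lebesgue measure zero, so it only remains to show $\Delta\not\equiv 0$ on $V_B$.

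The main obstacle is this non-vanishing, for which I would exhibit an explicit witness $A^\star\in V_B$ whose $\dim-m^\star$ nonzero eigenvalues are all distinct. I use the Leibniz formula: $c_{\dim-k}(A)$ is a signed sum of $k\times k$ principal minors of $A$, and a monomial of such a minor survives on $V_B$ precisely when the corresponding permutation of the $k$-subset decomposes into cycles whose edges all lie in the sparsity digraph $G_B$. Minimality of $m^\star$ therefore means that $G_B$ admits a vertex-disjoint cycle cover of some subset $I^\star$ of size $k^\star := \dim - m^\star$. Fix such a cover $\mathcal{C}$ and let $A^\star$ carry weights only on the edges of $\mathcal{C}$ and be zero elsewhere; then $A^\star\in V_B$ and, by a direct Schur-complement computation on the block of $I^\star$, its characteristic polynomial factors as $\lambda^{m^\star}\prod_j (\lambda^{\ell_j} - w_j)$, one factor per cycle of length $\ell_j$ with cycle-weight product $w_j$. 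For generic (e.g., algebraically independent) $w_j$, the $k^\star$ roots collected from the factors $\lambda^{\ell_j}-w_j$ are all nonzero and mutually distinct across different cycles, so $\Delta(A^\star) \neq 0$ and hence $\Delta\not\equiv 0$.

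Combining the pieces, $P(\Delta(A)=0\mid B) = 0$ for every $B$, so integrating against $P_B$ gives $P(\exists\, \lambda\ne 0,\ \rank(A-\lambda I)<\dim-1) = 0$. This is exactly the symmetric difference between the two events in the statement, yielding the equality of their probabilities. The step I expect to be most delicate is the combinatorial identification of $m^\star$ with the maximum cycle-cover size in $G_B$ (a clean Leibniz computation once set up carefully) and verifying that generic cycle weights do prevent roots of different factors $\lambda^{\ell_j}-w_j$ from coinciding; the latter reduces to showing that the resultants between these factors do not vanish identically in the $w_j$, which is immediate from their explicit form.
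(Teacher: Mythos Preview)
Your argument is correct and shares the paper's skeleton: condition on the Bernoulli mask $B$, show that on each fiber $V_B$ the event ``$A$ has a repeated nonzero eigenvalue'' is contained in the zero set of a polynomial in the free entries, conclude it is Lebesgue-null, then integrate over $B$. Where you depart from the paper is in rigor at the crucial step. The paper simply asserts that ``the polynomials are not identically zero'' on $V_B$; you actually prove it by first factoring out the structural zero eigenvalues, $p_A(\lambda)=\lambda^{m^\star}q_A(\lambda)$, and then exhibiting a cycle-cover witness $A^\star\in V_B$ (built from a maximum vertex-disjoint cycle packing in the sparsity digraph) for which $q_{A^\star}$ has simple roots, hence $\mathrm{disc}_\lambda(q_{A^\star})\neq 0$. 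This addition is not cosmetic: whenever the pattern $B$ forces two or more zero rows (or columns), every $A\in V_B$ has $0$ as a repeated eigenvalue, so the ordinary discriminant $\mathrm{disc}_\lambda(p_A)$ vanishes identically on $V_B$ and the paper's one-line justification does not go through without your reduction to $q_A$. Your combinatorial identification of $n-m^\star$ with the maximum cycle-cover size is correct (the Leibniz monomials in $E_k$ are distinct across different $(I,\sigma)$, so $E_k\equiv 0$ on $V_B$ iff no $k$-subset admits a cycle cover in $G_B$), and choosing the cycle weights $w_j$ with pairwise distinct moduli already forces the roots of the factors $\lambda^{\ell_j}-w_j$ to be disjoint, so the ``generic $w_j$'' step needs no resultant machinery.
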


\begin{proof}
Let the random matrices $B,X$ that make up the sparse-continuous random matrix $A$ be defined on a probability space $(\Omega, A, P)$.
For a fixed zero-pattern $S \subseteq [n]^2$ (specifying which entries are nonzero) we define
\begin{equation*}
  E_S := \{\omega \in \Omega \given B_{ij}(\omega) = 1 \text{ iff } (i,j)\in S\}
\cap \{\omega \in \Omega \given A(\omega)\text{ has a repeated nonzero eigenvalue}\}\:.
\end{equation*}
We have $P(\{S(\omega) = S\}) = p^{|S|}(1-p)^{\dim^2 - |S|}$ and conditionally on $S(\omega)=S$, $A(\omega)$ is a vector of $|S|$ continuous random variables $(X_{ij})_{(i,j)\in S}$ that is absolutely continuous with respect to the Lebesgue measure on $\bR^{|S|}$.
The matrix $A$ has a repeated nonzero eigenvalue if there exists an eigenvalue $\lambda \neq 0$ of algebraic multiplicity at least 2, i.e., $\det(A-\lambda I)=0$ and $\frac{d}{d\lambda} \det(A-\lambda I)=0$. These conditions define a nontrivial algebraic subset in $\bR^{\dim \times \dim}$, imposing polynomial equations on these $|S|$ nonzero entries.
These polynomials vanish only on a measure-zero subset of $\bR^{|S|}$ (since the polynomials are not identically zero). Thus, $P(E_S) = 0$.
And by the union bound,
\begin{equation*}
P(\{A \text{ has a repeated nonzero eigenvalue}\})
\le \sum_{S \subseteq [\dim ]^2} P(E_S)
= 0\:.
\end{equation*}
The second statement follows since the two events characterize having \emph{some} repeated eigenvalue and \emph{zero} being a repeated eigenvalue, respectively.
\end{proof}
With these preliminaries, we find the following lower bound.
\begin{restatable}{lemma}{lemmabound}\label{lemmabound}
    A sparse-continuous random matrix with sparsity $p$ is globally not identifiable with probability at least $1 - (1-p^{\dim})^{\dim}-\dim p^{\dim}(1-p^{\dim})^{\dim -1}$ for $\dim \ge 2$ (and $p$ for $\dim = 1$).
\end{restatable}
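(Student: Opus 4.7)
The plan is to translate global unidentifiability into a rank condition amenable to simple combinatorial analysis. By \cref{thm:unident}, $A$ is globally unidentifiable if and only if some eigenvalue $\lambda$ has more than one Jordan block, i.e., its geometric multiplicity is at least $2$; equivalently, $\rank(A - \lambda I) \le \dim - 2$. Combined with \cref{lemma:zero-eig}, which rules out repeated nonzero eigenvalues almost surely, the only eigenvalue that can witness this rank drop in the sparse-continuous ensemble is $\lambda = 0$ (a.s.), collapsing the characterization to $\rank(A) < \dim - 1$.

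Having reduced the task to lower bounding $P(\rank(A) < \dim - 1)$, I would pick a combinatorially tractable sufficient event: $A$ has at least two identically zero rows (which forces $\rank(A) \le \dim - 2$). Almost surely, row $i$ is all zero if and only if all Bernoulli masks $B_{i1}, \ldots, B_{i\dim}$ vanish, since each continuous $X_{ij}$ is nonzero with probability $1$. Hence each row is all zero with probability $p^{\dim}$, independently across rows by independence of the $B_{ij}$, so the number of all-zero rows is $\Bin(\dim, p^{\dim})$-distributed.

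The claimed bound then follows from the binomial formula: for $\dim \ge 2$, the probability of at least two zero rows is $1 - (1-p^{\dim})^{\dim} - \dim\, p^{\dim}(1-p^{\dim})^{\dim - 1}$. The $\dim = 1$ case does not follow from the same counting argument (the formula degenerates to $0$) and must be handled directly from the definitions.

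The principal obstacle is the first step: correctly combining \cref{thm:unident} and \cref{lemma:zero-eig} to turn the spectral Jordan-block condition into the clean rank condition $\rank(A) < \dim - 1$. The remainder is elementary binomial counting. Note that having two zero rows is only sufficient, not necessary, for the required rank deficiency, which is why the result is stated as a lower bound; tighter estimates could in principle be obtained by also accounting for linear dependencies among nonzero rows or columns, but computing those probabilities directly is substantially more delicate.
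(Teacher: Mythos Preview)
Your proposal is correct and follows essentially the same route as the paper: reduce global unidentifiability to $\rank(A)<\dim-1$ via \cref{thm:unident} and \cref{lemma:zero-eig}, then lower bound by the probability of at least two all-zero rows, which is $\Bin(\dim,p^{\dim})$-distributed. The paper uses zero \emph{columns} rather than zero \emph{rows}, but this is immaterial since both events have the same probability and both force the required rank drop; neither the paper nor you give an argument for the $\dim=1$ case.
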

The proof we provide in \cref{sec:app:theory} is driven by the presence of zero rows/columns.
These correspond to sink or source nodes in the corresponding adjacency graph, which are typically frequent in sparse graphs and dominate the probability of rank deficiency.
However, neither zero rows/columns nor an iid Bernoulli sparsity structure are required. In \cref{lem:general-pattern-lower-bound} we prove a non-zero lower bound for more general sparsity patterns that also allow for different degree distributions (e.g., accommodating hubs)
 and also empirically study a sparsity model that excludes zero rows/columns in \cref{app:extension-to-further-matrix-models}.

Next, we prove a sharp threshold on the dimension $n$ and sparsity level $p$ for global unidentifiability.
\begin{restatable}[sharp threshold for global unidentifiability]{lemma}{lemmasharpbound}\label{lemmasharpbound}
Let $A$ be a sparse-continuous matrix with $n$-dependent sparsity level $p(n)$.
Then, for $p_c(n)=1 - \tfrac{\ln(n)}{n}$ and any function $\omega(n)\to\infty$ we have that if $p(n)=p_c(n)+\tfrac{\omega(n)}{n}$, then $P(\rank(A) \le n-2) \to 1$ for $n\to \infty$ and if $p(n)=p_c(n)-\tfrac{\omega(n)}{n}$, then $P(\rank(A) \le n-2) \to 0$ for $n\to \infty$.
That is, there is a threshold at $p=\ln(n)/n$ decisive for whether $A$ is asymptotically globally unidentifiable with high probability or not.
\end{restatable}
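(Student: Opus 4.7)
The plan is to reduce the event $\{\rank(A)\le n-2\}$ to a combinatorial property of the random sparsity pattern, and then apply a classical threshold for bipartite random graphs. Write $A = B \odot X$, where $B\in \{0,1\}^{n\times n}$ has i.i.d.\ $\Ber(1-p)$ entries and $X$ has i.i.d.\ continuous entries, and let $G(B)$ be the bipartite graph on $[n]\sqcup[n]$ with an edge $(i,j)$ iff $B_{ij}=1$. Conditional on $B$, $\rank(A)$ drops below its generic value only on a set defined by polynomial equations in the $X_{ij}$'s, which has Lebesgue measure zero. By K\"onig's theorem this generic rank equals the maximum matching size $\nu(G(B))$: a matching of size $\nu$ picks out a $\nu\times\nu$ submatrix with a generically nonzero (permutation) minor, yielding $\rank\ge \nu$, while a minimum vertex cover of size $\nu$ (guaranteed by K\"onig) confines all nonzero entries of $A$ to the union of $\nu$ rows and columns, capping the rank at $\nu$. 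Hence almost surely $\rank(A)\le n-2$ iff $\nu(G(B))\le n-2$, so it suffices to analyze $G(B)=G(n,n,q)$ with $q=1-p$.

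For the supercritical direction ($p=p_c+\omega/n$, i.e.\ $q=(\ln n - \omega(n))/n$), the expected number of all-zero rows of $B$ is $\mu_r := n p^n$. Using $p^n = \exp(n\ln(1-q)) = \exp(-nq - O(nq^2))$ one obtains $\mu_r = \exp(\omega(n)(1+o(1)))\to\infty$. Letting $Z_r$ denote the number of all-zero rows, $Z_r \sim \Bin(n,p^n)$ has variance at most $\mu_r$, so Chebyshev yields $P(Z_r\ge 2)\to 1$. Two zero rows force $\nu(G(B))\le n-2$, giving $P(\rank(A)\le n-2)\to 1$.

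For the subcritical direction ($p=p_c-\omega/n$, i.e.\ $q=(\ln n + \omega(n))/n$), I would invoke the Erd\H{o}s--R\'enyi-type threshold for perfect matchings in bipartite random graphs (e.g.\ Walkup, 1980): whenever $nq - \ln n \to \infty$, $G(n,n,q)$ has a perfect matching with probability $1-o(1)$, so $\nu(G(B))=n$ and hence $\rank(A)=n$ with high probability, proving $P(\rank(A)\le n-2)\to 0$.

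If a self-contained proof is preferred, the main technical step is a Hall-type union bound, and this is where I expect the calculations to be most delicate. The expected number of isolated vertices (zero rows plus zero columns) is $2n(1-q)^n = 2\exp(-\omega(n)(1+o(1)))\to 0$, excluding them by Markov. For $2\le k \le \lfloor n/2\rfloor$, the probability that there exists a row-set $S$ with $|S|=k$ and $|N(S)|\le k-1$ is bounded by $\binom{n}{k}\binom{n}{k-1}(1-q)^{k(n-k+1)}$; standard estimates show this is summable in $k$ once $nq\gg \ln n$. The range $k>n/2$ is handled symmetrically by swapping the roles of rows and columns. The main obstacle is verifying that this union bound remains $o(1)$ uniformly under only the hypothesis $\omega(n)\to\infty$ (covering, in particular, the edge case where $\omega(n)$ grows at rate $\Theta(\ln n)$, where $q$ itself is no longer tiny), without imposing extra growth conditions on $\omega$.
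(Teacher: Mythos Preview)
Your proposal is correct and follows essentially the same route as the paper: reduce $\rank(A)$ to the maximum matching size of the bipartite graph on the Bernoulli mask (the paper phrases this via the sandwich $m(G)\le \rank A \le n-d(G)$ rather than your a.s.\ equality, but either suffices), then handle the supercritical side via isolated vertices plus Chebyshev and the subcritical side by citing the classical perfect-matching threshold for $G(n,n,q)$ (the paper cites Frieze--Karo\'nski where you cite Walkup). Your optional self-contained Hall union bound is extra relative to the paper, and your worry about large $\omega$ is moot once you invoke the cited threshold directly.
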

The proof can be found in \cref{sec:app:theory}. This is consistent with known results on Bernoulli matrices (e.g., \citealp{frieze2015introduction,basak2021sharp}) where there exists a similar sharp transition at $p=1-\ln(n)/ n$ for the probability of singularity. Our proof extends the same threshold to two-fold rank deficiency in sparse-continuous matrices and thus to global unidentifiability.
In practice, numerically evaluating matrix ranks requires thresholding. By the Eckart–Young–Mirsky theorem the second smallest singular value $\sigma_2$ of $A$ corresponds to the distance of $A$ the closest matrix $A'$ with $\rank(A')< n-1$. As such it provides a robust continuous numerical measure of unidentifiability. 

\xhdr{Remark on sparsity, rank-deficiency, and non-identifiability.}
\citet{stanhope2014identifiability,qiu2022identifiability} establish a tight connection between global non-identifiability and rank deficiency to then argue that rank deficiency (and thus unidentifiability) is rare.
One of our key contributions is to demonstrate and quantify how sparsity, commonly assumed in dynamical models, leads to rank deficiency that ultimately results in unidentifiability.\footnote{Generally sparse matrices can have full rank (e.g., identity) and full rank matrices can be dense (all ones).}
Crucially, this link does not rely on sparsity inducing entire zero rows/columns (obviously giving rank deficiency), but also holds for sparse systems without zero rows/columns, see also \cref{app:extension-to-further-matrix-models}.
Finally, we link to systems studied in the literature to corroborate that the found phenomena are relevant for real-world modeling \cref{app:real-world}.


\section{Trajectory unidentifiability}\label{sec:trajectory_unidentifiability}

We now turn to ``trajectory unidentifiability'', i.e., the situation where $A$ is not globally unidentifiable, yet it may be unidentifiable from specific observed trajectories $\x$.
Let $(A, \x_0) \sim P_A \otimes P_{\x_0}$ with an absolutely continuous $P_{\x_0}$ (w.r.t. the Lebesgue measure on $\bR^{\dim}$) and $P_A$ the distribution of a sparse-continuous matrix $A$.
Since, according to \cref{thm:prop_inv_subspace}, $A$ is unidentifiable from $\x$ if and only if $\x_0$ is in a proper invariant subspace of $A$, and the fact that
any \emph{proper} subspace of $\bR^{\dim}$ has zero probability under any absolutely continuous probability measure (with respect to the Lebesgue measure), we conclude that $P_{A,\x_0}(\unidbleA \given A = A') = 0$ for all $A' \notin \unidble$.
Therefore, the second term in \cref{eq:prob-unident} is zero.
The probability of a ``random'' $A$ being identifiable from a ``random'' solution trajectory is thus given entirely by the probability of $A$ being globally unidentifiable---the probability of ``unlucky'' initial conditions is zero.

However, $\x_0$ being close to a proper $A$-invariant subspace can still be problematic in practice, which we later demonstrate empirically.
To measure this ``closeness to unidentifiability'', we define
\begin{equation}
    d_A: \bR^{\dim} \to [0,1]\:, \quad \x_0 \mapsto \frac{1}{\norm{\x_0}_2}\, \min \{\dist(\x_0, V) \given V \in \mathcal{I}(A)\}\:,
\end{equation}
where $\mathcal{I}(A)$ is the set of proper $A$-invariant linear subspaces of $\bR^{\dim}$ and $\dist(\x_0, V):= \min_{\y \in V} \| \x_0 - \y \|_2$ is the Euclidean distance of $\x_0$ from the subspace $V$.
We have that $d_A(\x_0)=0$ if and only if $\x_0$ lies in a proper $A$-invariant subspace, and $d_A$ provides a continuous measure for the ``level of unidentifiability from the trajectory,'' formalized as follows.
\begin{restatable}{lemma}{lemmadA}\label{lemmadA}
    Let $A, A' \in \bR^{\dim \times \dim }$ and assume there is an $A$-invariant subspace $V^*$ such that $(A-A')V^*=\{0\}$. Then, for every $t \geq 0$ and $\x_0 \in \bR^{\dim}$, we have that
\begin{equation*}
    \norm{e^{At}\x_0 - e^{A't}\x_0}_2 \leq C(t,A,A')\norm{A-A'}_2\, d_A(\x_0)
\end{equation*}
with $C(t,A,A') := \int_0^t \norm{e^{A(t-s)}}_2 \norm{e^{A's}}_2 ds$.
Further, for any $\varepsilon > 0$, the condition 
\begin{equation*}
\norm{(e^{At} - e^{A't})\,\x_0} \le \varepsilon 
\quad \text{for all } 0 \le t \le T
\end{equation*}
holds whenever 
\begin{equation*}
T \le \frac{1}{\alpha}\, 
W\Bigl(\alpha\, \frac{\varepsilon}{\norm{A - A'}\,M^2\,d_A(\x_0)}\Bigr),
\end{equation*}
where $W(\cdot)$ denotes the Lambert function and constants $\alpha \in \bR, M \ge 1$ such that $\norm{e^{At}}, \norm{e^{A't}} \le M e^{\alpha t}$ for all $t \ge 0$.
\end{restatable}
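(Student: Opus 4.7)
The plan is to combine the variation-of-constants (Duhamel) identity with the annihilation hypothesis $(A-A')V^*=\{0\}$, and then reduce the second claim to inverting a scalar inequality of the form $t e^{\alpha t}\le c$ via the Lambert $W$ function.

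First, I would differentiate $s\mapsto e^{A(t-s)}e^{A's}$ in $s$ and integrate from $0$ to $t$ to obtain
\begin{equation*}
e^{At}\x_0 - e^{A't}\x_0 = \int_0^t e^{A(t-s)}(A-A')\,e^{A's}\x_0\,ds.
\end{equation*}
Pulling the operator norm $\|e^{A(t-s)}\|_2$ out of the integral reduces the first statement to bounding $\|(A-A')e^{A's}\x_0\|_2$ in terms of $d_A(\x_0)$.

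Second, I would exploit the hypothesis on $V^*$ in two ways. Because $V^*$ is $A$-invariant and $(A-A')V^*=\{0\}$, for any $v\in V^*$ we have $A'v=Av\in V^*$, so $V^*$ is also $A'$-invariant and hence $e^{A's}V^*\subseteq V^*$ for all $s\ge 0$. Moreover, $(A-A')y = (A-A')(y-v)$ for every $v\in V^*$, which yields the key pointwise estimate
\begin{equation*}
\|(A-A')y\|_2 \le \|A-A'\|_2\,\dist(y,V^*).
\end{equation*}
Applying this at $y=e^{A's}\x_0$ and picking a minimizer $v_0\in V^*$ of $\dist(\x_0,V^*)$ (so that $e^{A's}v_0\in V^*$) gives $\dist(e^{A's}\x_0,V^*)\le \|e^{A's}\|_2\,\dist(\x_0,V^*)$. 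Choosing $V^*$ to also realize the minimum in the definition of $d_A$ (which, by assumption, is a $V^*$ satisfying $(A-A')V^*=\{0\}$) lets me replace $\dist(\x_0,V^*)$ by $d_A(\x_0)$ modulo the $\|\x_0\|_2$ normalization absorbed in $d_A$. Substituting back into the Duhamel identity gives the first inequality.

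For the second statement, I would plug in the growth bounds $\|e^{At}\|_2,\|e^{A't}\|_2\le Me^{\alpha t}$ to obtain
\begin{equation*}
C(t,A,A')\le \int_0^t Me^{\alpha(t-s)}\,Me^{\alpha s}\,ds = M^2\,t\,e^{\alpha t}.
\end{equation*}
Combining with the first statement, the inequality $\|(e^{At}-e^{A't})\x_0\|_2\le\varepsilon$ is guaranteed as soon as $M^2\,t\,e^{\alpha t}\,\|A-A'\|_2\,d_A(\x_0)\le \varepsilon$. Substituting $u=\alpha t$ rewrites this as $u e^u\le \alpha\varepsilon/(M^2\|A-A'\|_2 d_A(\x_0))$, whose maximal solution, by the defining property of the Lambert $W$ function, is $u = W(\alpha\varepsilon/(M^2\|A-A'\|_2 d_A(\x_0)))$; dividing by $\alpha$ recovers the claimed upper bound on $T$.

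The main obstacle is the distance-contraction step: establishing that $e^{A's}$ does not blow up the distance from $V^*$ beyond $\|e^{A's}\|_2$. This is exactly where the hypothesis is used twice—$V^*$ being $A$-invariant is needed to define $d_A$ meaningfully, but the algebraic consequence $A'V^*\subseteq V^*$ (which needs the extra $(A-A')V^*=\{0\}$) is what guarantees that the flow of $A'$ maps $V^*$ into itself. A secondary subtlety is justifying that a single $V^*$ can be chosen to simultaneously satisfy $(A-A')V^*=\{0\}$ and realize the minimum in $d_A(\x_0)$; absent that, the bound naturally reads with $\dist(\x_0,V^*)/\|\x_0\|_2$ in place of $d_A(\x_0)$, which only strengthens the statement.
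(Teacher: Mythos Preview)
Your proposal is correct and follows essentially the same route as the paper: the Duhamel identity combined with the observation that $V^*$ is also $A'$-invariant (so the $V^*$-component can be annihilated), followed by the bound $C(t,A,A')\le M^2 t e^{\alpha t}$ inverted via the Lambert $W$ function. The only cosmetic difference is that the paper splits $\x_0=\Pi_{V^*}\x_0+\w$ upfront and uses $(e^{At}-e^{A't})|_{V^*}=0$ before bounding the operator norm $\|e^{At}-e^{A't}\|$ via Duhamel, whereas you apply Duhamel to $\x_0$ first and kill the $V^*$-component inside the integral via $(A-A')|_{V^*}=0$; the resulting estimate is identical, and you correctly flag the same subtlety about $V^*$ realizing the minimum in $d_A$ that the paper handles with a ``without loss of generality'' remark.
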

The proof can be found in \cref{sec:app:theory}.
Assume $A$ is identifiable from $\x_0$, but $\x_0$ is close to a proper $A$-invariant subspace $V^*$ with distance $d_A(\x_0)$.
Then there exists another $A' \ne A$ that agrees with $A$ on $V^*$, i.e., $A'$ could practically be mistaken for the underlying true dynamic. \cref{lemmadA} guarantees $\epsilon$-closeness of $A$ and $A'$ up to time $T$, which scales inversely with both the distance  $d_A(\x_0)$ and the norm $\norm{A-A'}_2$, so trajectories remain indistinguishable for longer horizons when the initial condition lies closer to an invariant subspace or when competing matrices $A$ and $A'$ differ only slightly.
While lemma 4 holds for general matrices, it is particularly relevant for sparse matrices as these are more likely to have large invariant subspaces and are consequently more likely unidentifiable in practice.


\vspace{-0.15cm}
\section{Empirical results}\label{sec:results}

Our empirical validation experiments include three distinct viewpoints: firstly, we assume the true system matrix to be available in order to confirm our theoretical results on unidentifiability in sparse matrices in principle. Secondly, we assume that we only have observed trajectories at hand and evaluate trajectory-level identifiability criteria. While these results assess whether a system is (or is not) identifiable, we finally also empirically assess the performance of two widely used estimators that attempt to learn the underlying system from data directly, hence zooming in on identifiability challenges in practice. 

\xhdr{Data generation.}
We focus on systems of dimensions $\dim \in \mathcal{D} := \{3,5,10,20,30,40,50\}$ and sparsity levels $p \in \mathcal{P}=\{0.1,0.2,0.3,0.4,0.5, 0.6,0.7,0.8,0.9,0.95,1.0\}$.\footnote{Recall that higher $p$ for us corresponds to ``more zeros.''}
For each $(\dim, p) \in \mathcal{D} \times \mathcal{P}$, we generate 100 matrices according to the sparse-continuous model in \cref{def:sparse_cont} and take a standard normal for the continuous distribution $P_X = \mathcal{N}(0,1)$.
For each matrix, we sample 100 initial conditions uniformly from the unit circle in $\bR^{\dim}$ and solve the corresponding trajectories numerically over the interval $t\in [0,1]$ using a standard explicit RK45 solver with $512$ homogeneous time steps. Results for different random matrix models, for example disallowing zero rows/columns, can be found in \cref{app:sec:experiments}.

\subsection{System-level unidentifiability}
\label{sec:sec:System-level_unidentifiability}
In this section, we perform an exploratory analysis of global unidentifiability conditions by examining properties at the system level. The contour map in \cref{fig:contour} illustrates, for varying values of sparsity $p$ and dimension $\dim$, the empirical frequency with which the generated system matrices $A$ violate at least one of three structural criteria: (i) $\rank(A)<n-1$; (ii) there exists an eigenvalue $\lambda$ for which $\rank(A - \lambda I)<n-1$; or (iii) the spectrum contains the zero eigenvalue. As matrices become sparser (i.e., as $p$ increases), the proportion of globally unidentifiable matrices sharply increases. Notably, the nearly identical contour plots for conditions (i) and (ii) strongly support our theoretical result stated in \cref{lemma:zero-eig}. Moreover, the condition regarding the existence of zero eigenvalues aligns closely with the theoretical threshold $p=1-\tfrac{\ln(\dim)}{\dim}$ (depicted by the red curve). 
\begin{figure}[t!]
  \vspace{-0.5cm}
  \centering
    \begin{subfigure}{1.03\textwidth}
    \includegraphics[width=\linewidth]{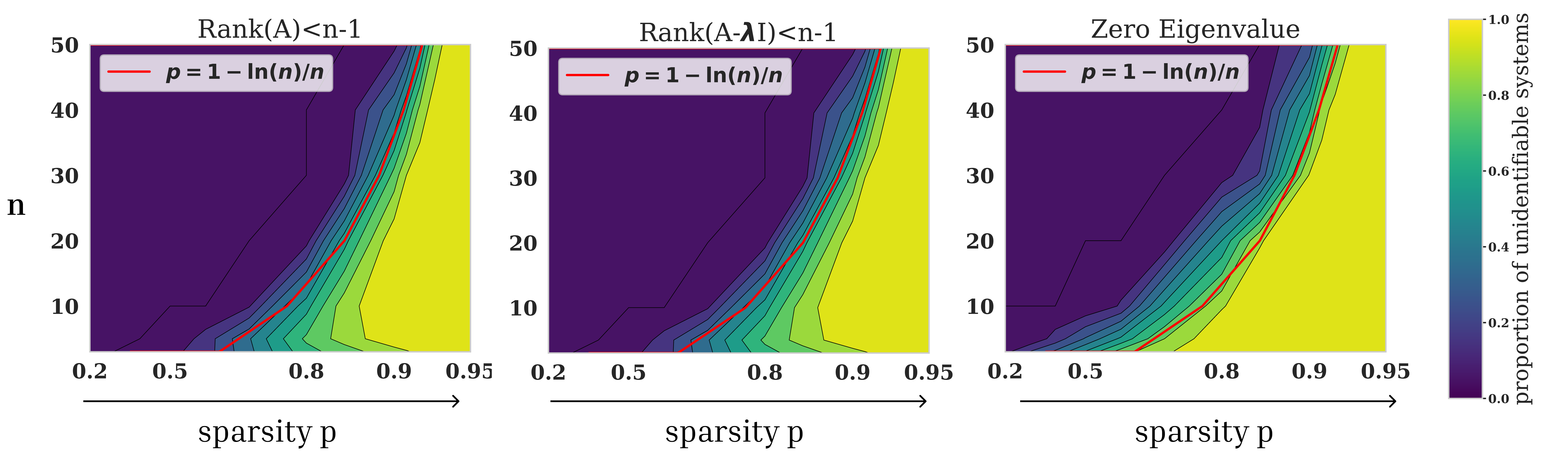}
  \end{subfigure}
  \caption{Proportion of matrices satisfying the conditions i), ii) and iii) at different system dimensions $n$ and sparsity levels $p$. All three conditions exhibit a sharp increase in frequency with higher sparsity, consistent with the threshold $p=1-\nicefrac{\ln(\dim)}{\dim}$.
  }
  \vspace{-0.4cm}
  \label{fig:contour}
\end{figure}

\xhdr{Discussion on realistic sparsity levels.}
Although global (system-level) unidentifiability becomes less critical as the dimension increases (consistent with the threshold $p=1-\tfrac{\ln(\dim)}{\dim}$), seemingly only leaving systems of negligibly high sparsity-levels unidentifiable, we emphasize that such high sparsity regimes are not merely theoretical but are indeed realistic and frequently encountered in real-world scenarios. For instance, the \emph{P. trichocarpa} PEN gold standard network \citep{walker2022evaluating}, introduced earlier, has a sparsity level of approximately $0.997$ for a graph comprising $1690$ nodes (genes), exceeding the theoretical threshold $p^*=1-\tfrac{\ln(\dim)}{\dim}\approx0.996$. Similarly, the \emph{E. coli} gene regulatory network \citep{walker2022evaluating}, consisting of $1222$ genes, exhibits a sparsity of about $0.999$, surpassing its corresponding threshold of $p^*=0.995$. 

\begin{figure}[t!]
\vspace{-0.5cm}
\hspace{-10mm}
  \begin{subfigure}{1.1\textwidth}
    \includegraphics[width=\linewidth]{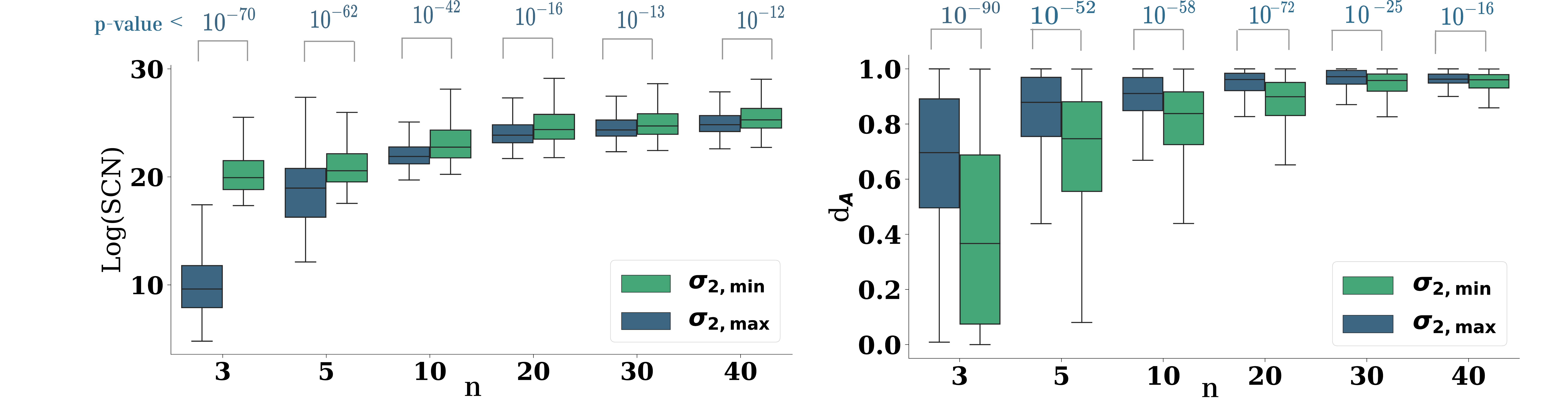}
  \end{subfigure}
  \caption{Box-plots of smoothed condition numbers (SCN) and distance-to-unidentifiability $d_A$ for the least and most identifiable groups of systems at different dimensions $\dim$. 
  }
  \label{fig:p-values}
  \hspace{-10mm}
    \vspace{-5mm}
\end{figure}

\subsection{Trajectory-level unidentifiability}
Following the argument in \cref{sec:theory}, the second-smallest singular value $\sigma_2$ of a system relates to its global unidentifiability. Here, we ask whether systems with worse global unidentifiability metrics also give rise to trajectories that are statistically less identifiable.
To address this question, we rank the generated system matrices by their second-smallest singular value $\sigma_2$ for each combination of dimension $\dim$ and sparsity $p$. 
Matrices with $\sigma_2$ falling into the top-10\% of ranked singular values are subsequently referred to as $A_{\sigma_2,max}$; similarly, matrices at the bottom-10\% are referred to as $A_{\sigma_2,min}$.

To determine whether the trajectories produced by these two matrix groups differ in identifiability, we compared the sample means of two complementary trajectory-level identifiability metrics: our closeness-to-unidentifiability distance $d_{A,0}(x_0)$ from \cref{sec:trajectory_unidentifiability} (larger implies more likely identifiable) as well as the smoothed condition number (SCN) as established by \citet{qiu2022identifiability} (smaller implies more likely identifiable).
Specifically, we approximate $d_A(\x_0)$ by the normalized distance of the initial state to $\ker(A)$, using the metric
\begin{equation}
    d_A(\x_0) \approx d_{A,0}(\x_0):= \frac{1}{\norm{x_0}} \dist(\x_0, \ker(A)) \in [0,1].
\end{equation}
Compared to $d_{A}(x_0)$, $d_{A,0}(x_0)$ allows us to estimate “distance‑to‑unidentifiability’’ without analyzing all proper $A$-invariant subspaces.
As a second criterion, we evaluate the invertibility of the pairwise inner product matrix, as established by \citet{qiu2022identifiability}, who showed that
a trajectory $\x(t\mid A,\x_0)$ is identifiable if and only if the pairwise inner product matrix $\Sigma_{xx} = \langle \x(t),\x(t) \rangle_{i,j} = \int_{0}^T x_i(t) x_j(t) dt$ is invertible. \citet{qiu2022identifiability} offer a straightforward method for approximating $\Sigma_{xx}$, and introduce the Smoothed Condition Number (SCN) $k(\Sigma_{xx})$ as the condition number of $\Sigma_{xx}$. 

\xhdr{Results.}
Boxplots of SCN and $d_A$ values are depicted in \cref{fig:p-values} at different dimensions $\dim$. For each $\dim$ we focus on the sparsity parameter $p^*$ closest to the threshold $1-\ln(\dim)/\dim$, which corresponds to the critical sparsity value at which a system becomes unidentifiable. We use Welch’s t-test with one-sided alternatives that reflect the expected direction of the difference:
\begin{align*}
    &\text{SCN:} \quad \quad & H_0: \mu^{\text{SCN}}_{A_{\sigma_2,max}} \geq \mu^{\text{SCN}}_{A_{\sigma_2,min}} & \quad H_1: \mu^{\text{SCN}}_{A_{\sigma_2,max}} < \mu^{\text{SCN}}_{A_{\sigma_2,min}} \\
    &d_{A,0}(x_0):   & H_0: \mu^{d_A}_{A_{\sigma_2,min}} \geq \mu^{d_A}_{A_{\sigma_2,max}}
    & \quad H_1: \mu^{d_A}_{A_{\sigma_2,min}} < \mu^{d_A}_{A_{\sigma_2,max}}
\end{align*}
Here $\mu_{A_{\sigma_2,max}}$ and $\mu_{A_{\sigma_2,min}}$ denote the population means in the two subgroups. The goal of the statistical test is to establish whether the two subgroups truly differ in trajectory-identifiability performance.
We find that systems in subgroup $A_{\sigma_2,min}$ are significantly less probable to identify ($\mathrm{p} \ll 0.01$) for all system dimensions and both criteria. These results are consistent with our theoretical derivations and also in line with the previously analyzed systems-level criteria

\subsection{Empirical Unidentifiability}
\label{sec:empirical_unidentifiability}
We evaluate the performance of two state-of-the-art models for identifying dynamical systems:
Neural ODEs (\textbf{NODE}s) \citep{chen2018neural}, which approximate dynamics through a neural-network-based function $f_\theta(\x(t))$, as well as the Sparse Identification of Nonlinear Dynamics algorithm (\textbf{SINDy}) \citep{sindy}, which employs $L_2$-regularized linear regression on a predefined set of basis functions. To effectively model sparse linear systems, we utilize an $L_1$--regularized neural network without activation functions for NODEs, and linear basis functions for SINDy. Hyper-parameters for both models are tuned per system. More details on the methods can be found in \cref{app:sec:methods}. 

Since we are less interested in the question of whether the model can fit the observed data overall, but rather in the question of whether the models correctly identify the underlying system matrices $A$ in cases where they do fit the data well, we filter out estimates for which the corresponding reconstruction of the observed trajectory does not satisfy prescribed $R^2$ and MSE thresholds,
where these metrics compare observed and reconstructed trajectories. 
This best-case filtering ensures that subsequent empirical results can be tied to identifiability properties rather than potential issues with model optimization or architecture.

\xhdr{Results.} As perfect point estimates of all coefficients of system matrix $A$ are unreasonable to expect (e.g. due to numerical issues), we instead use the Hamming distance to compare binarized ground truth $A$ and estimate $\hat{A}$, which we additionally normalize by $n^2$ for comparability across dimensions. The Hamming distance corresponds to the number of matching symbols (here: zero or one) and thus captures the overlap of the predicted and true sparsity patterns. The heatmaps in \cref{fig:heatmap-hm} show for both SINDy and NODE a clear left-right gradient, indicating that sparse systems lead to a larger Hamming distance - consistent with our theoretical finding that higher sparsity increases the probability of unidentifiability. 
In addition, results for SINDy also show an unexpected up-down gradient and high sparsity levels, which were not predicted by our theory. This behavior can be tied to the (default) sequentially thresholded least-squared optimizer that is used to fit SINDy to the data: Following least-squared optimization, this optimizer sets any coefficient that falls below a user-defined magnitude threshold to zero - hence promoting sparsity aggressively in the fitted model. In cases of very high sparsity and low system dimensionality, most coefficients of the true system matrix will be zero so that thresholding coefficients towards zero biases the model towards a smaller Hamming distances. As the dimensionality increases or the sparsity reduces towards moderate sparsity levels, this effect vanishes as there are multiple non-zero coefficients.

Exemplary model estimates and corresponding trajectory fits are displayed for a selected dense and sparse system in \cref{fig:concept_figure}, which further illustrates the phenomenon.

\begin{figure}[t!]
  \centering
    \vspace{-0.4cm}
    \begin{subfigure}{0.45\textwidth}
    \includegraphics[width=\linewidth]{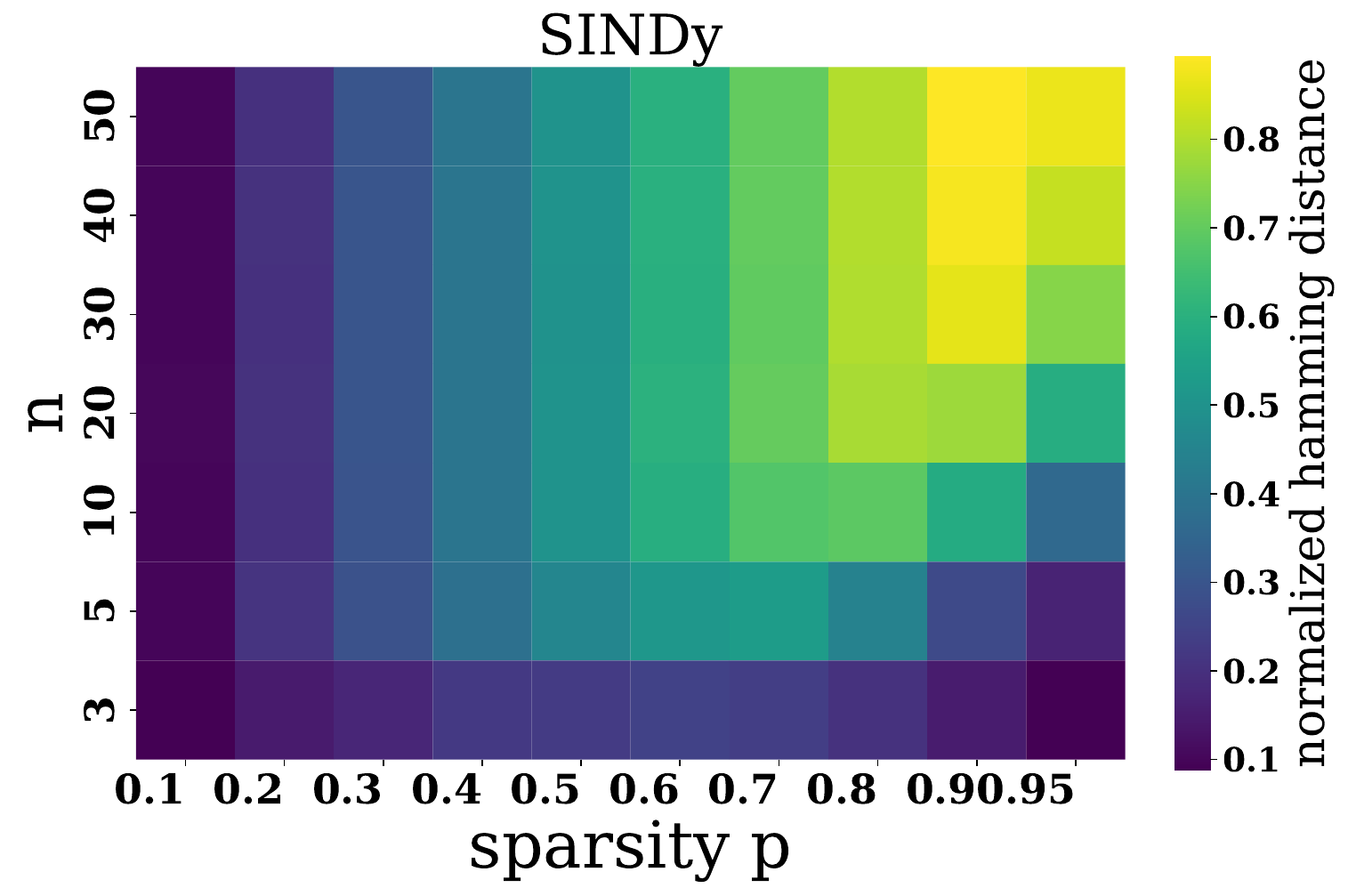}
  \end{subfigure}
  \hspace{7mm}
  \begin{subfigure}{0.45\textwidth}
    \includegraphics[width=\linewidth]{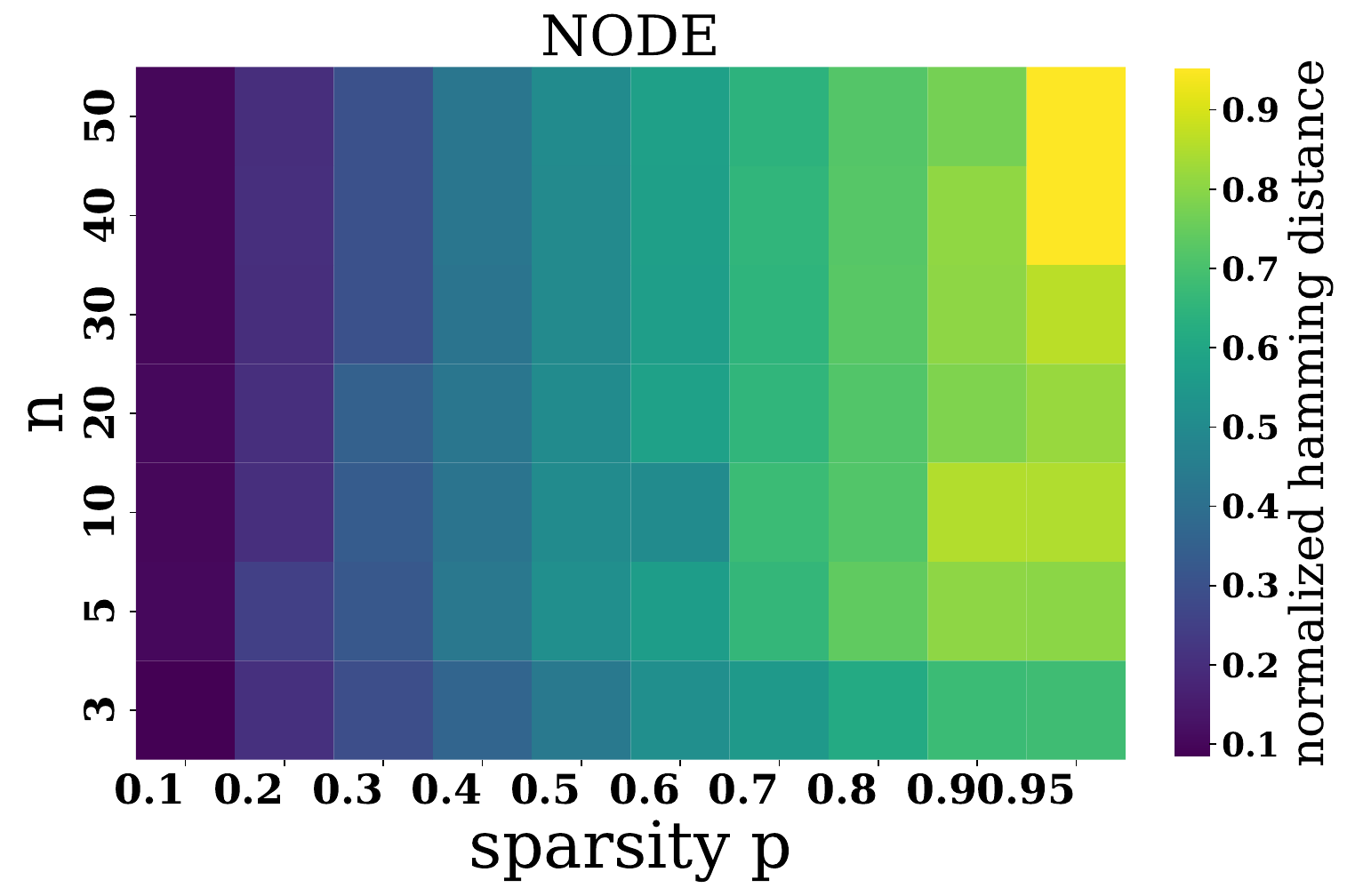}
  \end{subfigure}
  \caption{Normalized Hamming distance of reconstructed systems using SINDy (left) and NODE (right) across varying dimensions and sparsity levels. Reconstruction accuracy declines noticeably with increasing dimensionality and sparsity. }
  \label{fig:heatmap-hm}
  \vspace{-0.3cm}
\end{figure}

\vspace{-0.15cm}
\section{Conclusion}

In this work, we focus on identifiability of sparse linear ODEs $A$ from a single observed trajectory $\x$.
We first partition the probability of unidentifiability of $A$ from $\x$ into ``system level unidentifiability'' ($A \in \unidble$) and ``trajectory level unidentifiability'', i.e., hitting an ``unlucky'' initial condition.
After showing that the latter scenario almost surely does not happen, we show that in sparse systems the probability of unidentifiability can be lower bounded by a positive quantity and exhibits a sharp asymptotic threshold at $p=1-\ln(n)/n$ for global unidentifiability.
This puts an important asterisk on the celebrated result that ``almost all linear ODEs are identifiable,'' which ceases to hold under sparsity, which is often a core underlying assumption in dynamical system modeling.
We go on to empirically verify that theoretical unidentifiability is a serious practical challenge and extend our analysis to near-unidentifiability on the ``trajectory level,'' demonstrating that the theoretically negligible case of unlucky initial conditions further exacerbates the problem.
Important directions for future work include extending our results to affine and post-nonlinear models, discrete, noisy or partially observed systems, multiple observed trajectories, and deriving computable metrics for practical unidentifiability from data alone.

\subsubsection*{Reproducibility statement}
Significant effort was made to ensure reproducibility, both for the theoretical and experimental results. The complete proofs of \cref{lemmabound}, \cref{lemmasharpbound}, \cref{lemmadA} can be found in \cref{app:proof:lemmabound}, \cref{app:proof:lemmasharpbound}, \cref{app:proof:lemmadA} respectively. Additionally, we add a discussion on the assumptions and limitations in \cref{sec:background}. Details regarding implementation and metrics can be found in \cref{app:sec:experiments}, specifically software resources in \cref{app:sec:implementation}, metrics details in \cref{app:sec:metrics} and methods implementation details in \cref{app:sec:methods}. The data generating mechanism is described in \cref{sec:results}. Code is publicly available at \href{https://github.com/ceciliacasolo/ode-identifiability}{\texttt{https://github.com/ceciliacasolo/ode-identifiability}}.

\subsubsection*{The Use of Large Language Models}
In this work, Large Language Models (LLMs) were used as a writing assistant to polish the text and improve clarity of exposition.


\subsubsection*{Acknowledgments}
This work is supported by the DAAD programme Konrad Zuse Schools of Excellence in Artificial Intelligence, sponsored by the Federal Ministry of Education and Research.
This work has been supported by the German Federal Ministry of
Education and Research (Grant: 01IS24082).
The authors gratefully acknowledge the Gauss Centre for Supercomputing e.V. (www.gauss-centre.eu) for funding this project by providing computing time through the John von Neumann Institute for Computing (NIC) on the GCS Supercomputer JUWELS at Jülich Supercomputing Centre (JSC).

\bibliography{iclr2026_conference}
\bibliographystyle{iclr2026_conference}

\newpage
\appendix

\section{Discussion of our assumptions and their implications}
\label{app:assumptions}
For our theoretical work we assume continuous, noise-free, fully-observed trajectories, which may at first glance appear limiting. Indeed, those assumptions would be limiting if this was a method proposal, where the method only works on such observations. However, for analyzing unidentifiability, they are the right starting point: Without these assumptions, no system would ever be identifiable, i.e., unidentifiability would be trivially given. Only once we have eliminated unidentifiability due to, e.g., undersampling, finite samples, and partial observations, can we make non-trivial claims about the inherent unidentifiability of the underlying systems. We split the discussion of our assumptions into two parts.
\begin{enumerate}[leftmargin=*,itemsep=0pt]
    \item The assumptions of \emph{continuous} trajectories, \emph{noise-free} observations and \emph{full observability} are assumptions about our ability to observe/measure the system. For these, our setting corresponds to the ``best case'' idealized setting where (in principle) infinite data (both in terms of sampling frequency as well as in terms of noise instantiations) are available about all relevant parts of the system. As we will discuss one-by-one below, violating any of those assumptions will add additional identifiability issues, which are due to limitations of our observation process (not inherent to the system) that one could in principle overcome. Hence, instead of ``our results only apply under those assumptions,'' our results show unidentifiability ``despite these assumptions'', even when all solvable sources of unidentifiability due to limited measurement processes have been overcome. In short, these form the appropriate starting point for an identifiability analysis.
\item The assumption of linear systems and sparsity are indeed assumptions about the underlying dynamics. We elaborate on the relevance of these assumptions below.
\end{enumerate}

\xhdr{1a.\ Continuous trajectories.} Any real-world measurement device will have a maximum sampling frequency, so all real-world data is necessarily discrete in time. Discrete observations add trivial unidentifiability issues around aliasing (cf. aliasing in the Nyquist sampling theorem). Hence, the continuous observation case is the best-case scenario - the limit of infinite sampling frequency - where we expect any remaining unidentifiability to be due to the fundamental problem setting, not our insufficient sampling rate.

\xhdr{1b.\ Noise-free observations.} Again, real-world data are almost always noisy. And again, in the presence of noise, system identification becomes inherently more difficult, namely probabilistic: multiple different systems may be compatible with the observed data, each with a different likelihood under the assumed noise model (for example, the standard assumption of i.i.d. additive Gaussian noise in classical machine learning often used to model ``measurement errors''). Within this probabilistic framework, our noise-free setting can be interpreted as the best case scenario, where unidentifiability is not just due to suboptimal measurement devices. It can also be interpreted as the asymptotic limit of observing infinitely many trajectories for a single initial condition such that the ``noise can be averaged out'' to recover the noise-free setting. From this perspective, our results state under which conditions collecting more observations allows (in the asymptotic limit) the identification of the system. Or put differently, we show that there are cases in sparse systems where, even if more data was collected or where the noise level is reduced by other means, the true system remains unidentifiable.

\xhdr{1c.\ Full observations.} Akin to discrete or noisy observations, only having partial observations of the system state will naturally render identifiability more difficult. Unlike discrete observations and noise, partial observations add a fundamental degree of unidentifiability as there is no ``asymptotic limit'' of more frequent or precise observations to recover our setting. Intuitively, when we allow for the presence of (an arbitrary number of) unobserved state variables, both the full system as well as the ``observed part'' (viewed as a submatrix of the full system) are inherently unidentifiable. As an extreme (and somewhat pathological) counterexample, one could imagine an exact copy of each observed variable among the latents such that all observed variables may either depend on the unobserved copy or the observed variable, which is indistinguishable from data. Hence, to study the identifiability under partial observations, one would have to provide additional limiting assumptions on the allowed type and nature of confounding. Exploring the palette of possible assumptions is an interesting direction for future work.

\xhdr{2a.\ Linear systems.}
There are two main reasons for this assumption: (a) Linear differential equation models are a core pillar of scientific modeling and widely used in practice due to their analytical tractability, relative simplicity, and inherent interpretability. While the real underlying dynamics of complex systems are arguably rarely perfectly linear, linear models also turn out to be surprisingly useful and predictive (akin to linear regression still being a competitive and reliable tool in many settings).
(b) Non-linear ODEs are known to be unidentifiable from single trajectories if one does not impose heavy constraints on the allowed non-linear function family \citep{scholl2023uniqueness}. Hence, the previous proclaimed state in the literature was a clear separation: ``non-linear systems are unidentifiable while linear systems are almost surely identifiable.'' Our work adds important nuances to the part about linear systems.

\xhdr{2b.\ Sparsity.} If it were somehow known that real-world systems never evolve according to dynamics below the sparsity-identifiability-threshold, our findings would indeed be of purely theoretical interest. However, the true sparsity of real-world dynamics remains inherently unknown. While it is impossible to determine whether a given unknown system is sparse, the examples in our submission indicate that this may not be unlikely in practice. Our results are thus important in that modeling practitioners should consider them when interpreting their results, e.g., despite a perfectly-fit model, the inferred weights might not reflect the true causal relations between variables due to unidentifiability issues.
As a pragmatic step to highlight that high degrees of sparsity may occur in practically relevant settings, we examined published sparse systems, which we discuss in \ref{sec:sec:System-level_unidentifiability}. In both reported cases, the published systems are on the sparser side of the threshold required for identifiability, which further supports the practical relevance of our findings. Nonetheless, we emphasize that the relevance of our results in terms of how frequently they show up cannot be empirically proven, since the notion of sparsity in our analysis refers to that of the true but unknown dynamics, rather than the model dynamics (models cannot be proven ``true''). However, a practical implication for future method development efforts aimed at inferring the ``ground truth'' networks is for instance that such methods may produce estimates that fit the observed data equally well as the ground truth, yet are different. Importantly, this discrepancy might not stem from the limitations of the developed method itself, but rather from the nature of the underlying dynamics being estimated.

\section{Theoretical results}\label{sec:app:theory}

\subsection{Proof of \texorpdfstring{\cref{lemmabound}}{Lemma~\ref{lemmabound}}} \label{app:proof:lemmabound}
\lemmabound*
\begin{proof}
    Following \cref{thm:unident}, the model \cref{eq:ivp} is globally unidentifiable if and only if $A$ has more than one Jordan normal block for one of its eigenvalues, say $\lambda_i$, which then has geometric multiplicity $g(\lambda_i)>1$, i.e., $\rank(A - \lambda_i I) < \dim -1$.
    According to \cref{lemma:zero-eig}, it follows that $P_A(\unidble) = P_A(\{A \in \bR^{\dim \times \dim } \given \rank(A) < \dim - 1\})$.
    Hence, we are looking for the probability of dependencies among columns in $A$ that drop the rank.
    For a fixed zero structure $B := (b_{i,j})_{i,j \in [\dim]} \in \{0,1\}^{\dim \times \dim}$ the set on which $X := (x_{i,j})_{i,j \in [\dim]} \in \bR^{\dim \times \dim}$ introduces additional linear dependencies among the rows/columns of $A$ has Lebesgue measure zero and thus zero probability under $A$. Therefore, $P(\rank(A) < \dim - 1) = P(\rank(B) < \dim - 1)$. We focus on the sufficient event that $B$ has multiple zero columns.
    Since the entries of $B$ are independent, $P(B_{:,i} = 0) = p^{\dim}$ for all $i \in [\dim]$ where $B_{:, i}$ represents the $i$-th column, it follows that the random variable $Z$ representing the count of zero columns follows a Binomial distribution $Z\sim \Bin(\dim , q)$ with $q:=p^{\dim}$.
    With the probability mass function $P(Z = k) = \binom{\dim}{k} q^k (1-q)^{\dim - k}$ we find
    \begin{equation*}
        P(\rank(A) < \dim -1) \geq P(Z \geq 2) = 1 - P(Z \in \{0,1\})
        = 1 - (1-p^{\dim} )^{\dim} -\dim p^{\dim} (1-p^{\dim})^{\dim - 1}\:.
    \end{equation*}
\end{proof}

\Cref{lemmabound} provides a non-zero lower bound on the probability of system level unidentifiability for the simple iid Bernoulli sparsity pattern. However, by analogous reasoning the proof extends to a much broader class of structured sparsity patterns that also allow for the existence of hubs, i.e., for heavy tailed degree distributions.

\begin{lemma}[General sparsity-pattern lower bound]\label{lem:general-pattern-lower-bound}
Let $P_B$ be a probability measure on $\{0,1\}^{\dim \times \dim}$ and let $B\sim P_B$.
Consider a sparse-continuous random matrix $A := (B_{ij} X_{ij})_{i,j=1}^{\dim}$, where $(X_{ij})_{i,j=1}^{\dim}$ are independent continuous random variables (i.e., their joint distribution is absolutely continuous with respect to the Lebesgue measure on $\bR^{\dim \times \dim}$) and independent of $B$.
Assume that for some $k\in\{0,1,\ldots,\dim-2\}$ there exists a set of columns $J\subseteq [\dim]$ with $|J| \ge k+2$ and a set of rows $R \subseteq [\dim]$ with $|R|\le k$ such that the event
\begin{equation}\label{eq:Ek-def}
    E_k := \Bigl\{B \in \{0,1\}^{\dim \times \dim} \,\Bigm|\, B_{ij}=0\ \text{ for all } j\in J,\  i\notin R \Bigr\}
\end{equation}
has positive probability $q := P_B(E_k) > 0$.
Then $A$ is globally unidentifiable with probability at least $q$, i.e.,
$P(A \in \unidble) \;\ge\; q \;>\; 0$.
\end{lemma}
\begin{proof}
The key idea is that every deterministic matrix $M\in\bR^{\dim\times\dim}$ whose zero pattern belongs to $E_k$ has $\rank(M)\le \dim-2$ and is therefore system level unidentifiable.
For such matrices, for every $j\in J$ the $j$-th column $M_{:,j}$ is supported only on rows in $R$.
Hence each $M_{:,j}$ with $j\in J$ lies in an $|R|$-dimensional subspace.
We thus found at least $k+2$ columns of $M$ that lie in the same $|R| \le k$ dimensional subspace, which decreases the rank of $M$ by at least 2 and by \cref{thm:unident}, $M$ is globally unidentifiable.
Now consider the random matrix $A = (B_{ij}X_{ij})$.
Since the above argument only depends on $B$, we conclude that $\rank(A) \le \dim - 2$ whenever $B\in E_k$.
Hence $P(A\in\unidble \mid B) = 1$ for all $B\in E_k$ and by total probability
\begin{equation*}
    P(A\in\unidble)
    = \bE_B\bigl[P(A\in\unidble \mid B)\bigr]
    \ge \bE_B\bigl[\chi_{E_k}(B)\,P(A\in\unidble \mid B)\bigr]
      = \bP_B(E_k)
      = q > 0\:.
\end{equation*}
\end{proof}
The main takeaway is that as long as there may be some variables that are sparsely connected among each other with a sufficient degree of sparsity there will be a non-zero probability of system level identifiability---in contrast to the dense setting.

\subsection{Proof of \texorpdfstring{\cref{lemmasharpbound}}{Lemma~\ref{lemmasharpbound}}} \label{app:proof:lemmasharpbound}
\lemmasharpbound*
\begin{proof} From $A$ form the random bipartite graph $G_{n,n,s}$ with $A$ as the corresponding adjacency matrix. Edges are present independently with probability $s=s(n)=1-p(n)$.

For every bipartite graph $G_{n,n,s}$ let  
\begin{equation}
        m(G)=\max\{\lvert M\rvert:M\text{ is a matching}\},
        \qquad
        d(G)=\max_{S\subseteq [n]}\bigl(\lvert S\rvert-\lvert N(S)\rvert\bigr)
\end{equation}
(Hall deficiency).
From Hall-König’s theory for graph matching, the rank of the adjacency matrix $A$
\begin{equation}\label{eq:rank-bounds}
        m(G)\;\le\;\rank A\;\le\;n-d(G).
\end{equation}

Given 
\begin{equation*}
        s(n)=\frac{\ln n+\alpha(n)}{n},
\end{equation*}
we have that \citep{frieze2015introduction} $P(m(G)=n) \to 1$ if $\alpha(n)\to \infty$ and $P(m(G)=n)\to 0$ if $\alpha(n) \to -\infty$.\\

\emph{Case $\alpha(n)\to-\infty$.} 
Let $Z$ be the number of isolated vertices in $G_{n,n,s}$. Without loss of generalization, we will consider isolated vertices from rows.
Any fixed vertex is isolated with probability $(1-s)^n$
so
\begin{equation*}
        Z\sim\text{Bin}\bigl(n,(1-s)^n\bigr),\qquad
        \E[Z]=n(1-s)^n =
        e^{-\alpha(n)}(1+o(1)),
        \qquad
        \Var[Z]=O(\E[Z]).
\end{equation*}
Chebyshev’s inequality therefore yields $P(Z\ge2)\to1$ for $n \to \infty$. Two isolated vertices form a set $S$ with $|N(S)|\le|S|-2$, so $d(G)\ge2$ and consequently $\Pr[\rank A\le n-2]\to1$.

\emph{Case $\alpha(n)\to+\infty$.}  Then $P(m(G)=n)\to1$, 
hence $P(\rank(A)\le n-2)\to0$.
Substituting $s = 1-p$ concludes the proof.
\end{proof}%

\subsection{Proof of \texorpdfstring{\cref{lemmadA}}{Lemma~\ref{lemmadA}}}\label{app:proof:lemmadA}
\lemmadA*
\begin{proof}
    Let us assume without loss of generality (otherwise we simply get a loser bound) that $V^*$ is the closest proper $A$-invariant subspace to $\x_0$. We can then decompose $\x_0$ as $\x_0 = \Pi_{V^*}\x_0+\w$ with $\norm{\w} =  d_A(\x_0)$ (by definition, all norms are $2$-norms). Given the assumption on $A$ and $A'$, we have
\begin{equation*}
    (e^{At} - e^{A't})\x_0 = (e^{At} - e^{A't})(\Pi_{V^*}\x_0 + \w) = (e^{At} - e^{A't})\w\:.
\end{equation*}
From a variation of constants approach and the triangle inequality, we have
\begin{equation*}
    \label{eq:norm-ineq}
    \norm{e^{At} - e^{A't}} = \Big\| \int_0^t e^{A(t-s)}(A-A')e^{A's}ds \Big\| \leq \int_0^t \norm{e^{A(t-s)}}\norm{e^{A's}}\norm{A-A'}ds\:,
\end{equation*}
which ultimately gives the result
\begin{equation*}
     \|e^{At}\x_0 - e^{A't}\x_0\| \leq \| e^{At} - e^{A't}\| \, \|\w\| \leq \| A-A'\| \, d_A(\x_0)\, \int_0^t \| e^{A(t-s)}\| \| e^{A's}\|\, ds\:.
\end{equation*}
For the second statement, we note that
\begin{equation*}
\norm{(e^{At} - e^{A't})\,\w} 
\le 
\norm{A - A'} \, M^2 \,\norm{\w}\, \biggl(\int_0^t e^{\alpha t}\,\mathrm{d}s\biggr)
=
\norm{A - A'} \, M^2 \,d_A(\x_0)\, t\, e^{\alpha t}\:,
\end{equation*}
which is bounded by $\varepsilon$ for $t \in [0,T]$ with
\begin{equation*}
T e^{\alpha T}
=
\frac{\varepsilon}{\norm{A - A'}\, M^2 \,d_A(\x_0)}\:.
\end{equation*}
Using the definition of the Lambert W function $W(\cdot)$ we obtain
\begin{equation*}
T = \frac{1}{\alpha}\,
W\!\Bigl(\alpha\, \frac{\varepsilon}{\norm{A - A'}\,M^2\,d_A(\x_0)}\Bigr)\:.
\end{equation*}
\end{proof}

\section{Empirical Evidence of Non-Identifiability in GRNs}\label{app:real-world}
We elaborate on the practical relevance of non-identifiability by discussing real-world gene regulatory networks (GRNs) that may exhibit structural features leading to non-identifiable dynamics. 
To illustrate this, we examined the gold-standard \emph{E. coli} network from \citet{marbach2012wisdom} (Supplementary Data 1). The adjacency matrix derived from this network contains several zero columns, indicating genes with no outgoing regulatory edges. This can be checked by analysing the \texttt{DREAM5\_NetworkInference\_GoldStandard\_Network3.tsv} data. As discussed in our main text, zero columns in the adjacency matrix are a key source of non-identifiability. A second example provided by \citet{marbach2012wisdom}, the \emph{S. cerevisiae} network (Network 4), also contains zero columns and thus similar issues.

For some other networks, such as \emph{P. trichocarpa} and \emph{A. thaliana}, we were not able to obtain the full gold-standard network, and so a definitive identifiability analysis is not possible. However, we summarize in \cref{tab:real-world} the available statistics from \citet{walker2022evaluating}, including the number of nodes, number of edges, sparsity level, and the theoretical sparsity threshold from \cref{lemmabound} of our work.

\begin{table}[ht]
\centering
\caption{Summary of real-world gene regulatory networks and their sparsity levels. The sparsity threshold corresponds to $1-\frac{\log n}{n}$ from Lemma 3.}
\label{tab:real-world}
\begin{tabular}{@{}lrrrr@{}}
\toprule
\textbf{Name} & \textbf{\# Nodes ($n$)} & \textbf{\# Edges} & \textbf{Sparsity ($p$)} & \textbf{Threshold ($1-\frac{\log n}{n}$)} \\
\midrule
\emph{A. thaliana}     & 2,864 & 18,663 & 0.9977 & 0.9972 \\
\emph{P. trichocarpa}  & 1,690 & 9,268  & 0.9968 & 0.9957 \\
\bottomrule
\end{tabular}
\end{table}

In both cases, the sparsity level lies below the identifiability threshold, suggesting a high probability of non-identifiability.
More broadly, the key point is that identifiability cannot be assessed purely from data. For any given system, even if it is theoretically identifiable, one cannot determine from observed data alone whether the inferred model corresponds to the ground truth. When fitting a linear ODE model to data from a sparse system, we may recover a model that exactly reproduces observations, yet it may be structurally incorrect. Thus, regardless of whether the true system is globally identifiable, it remains unidentifiable from data alone.

Moreover, the datasets discussed above should not be viewed as perfect ground truths. As noted by \citet[see §2.5]{walker2022evaluating}, these “gold standard” networks are incomplete and potentially inaccurate. As such, both the exact sparsity patterns (\emph{E. coli}) and sparsity levels (\emph{P. trichocarpa}) may be unreliable, and conclusions about identifiability must be treated cautiously.

\section{Experimental details}\label{app:sec:experiments}

\subsection{Software}\label{app:sec:implementation}
We provide the resources with the corresponding licenses used in this work in \cref{tab:software}.

\begin{table}[ht]
\centering
\caption{Overview of resources used in our work.}\label{tab:software}
\smallskip
\begin{tabularx}{\linewidth}{lll}
  \toprule
    \textbf{Name} & \textbf{Reference} & \textbf{License}  \\ 
    \midrule
    Python          &  \citep{vanrossum2009python}
                    & \href{https://docs.python.org/3/license.html\#psf-license}{PSF License} \\
    PyTorch         &  \citep{paszke2019pytorch}
                    & \href{https://github.com/pytorch/pytorch/blob/main/LICENSE}{BSD-style license} \\
    Numpy           &  \citep{harris2020numpy}
                    & \href{https://github.com/numpy/numpy/blob/main/LICENSE.txt}{BSD-style license} \\
    Pandas          &  \citep{reback2020pandas,mckinney-proc-scipy-2010}
                    & \href{https://github.com/pandas-dev/pandas/blob/main/LICENSE}{BSD-style license} \\
    Matplotlib      &  \citep{hunter2007matplotlib}
                    & \href{https://matplotlib.org/stable/users/project/license.html}{modified PSF (BSD compatible)} \\
    Scikit-learn    &  \citep{pedregosa2011scikit}
                    & \href{https://github.com/scikit-learn/scikit-learn/blob/main/COPYING}{BSD 3-Clause} \\
    SciPy           &  \citep{virtanen2020scipy}
                    & \href{https://github.com/scipy/scipy/blob/main/LICENSE.txt}{BSD 3-Clause} \\
    SLURM           & \citep{yoo2003slurm}
                    & \href{https://github.com/SchedMD/slurm/tree/master?tab=License-1-ov-file}{modified GNU GPL v2} \\
    networkx        & \citep{networkx}
                    & \href{https://raw.githubusercontent.com/networkx/networkx/master/LICENSE.txt}{BSD 3-Clause} \\
    JAX & \citep{jax2018github} & \href{https://github.com/jax-ml/jax/blob/main/LICENSE}{Apache-2.0} \\
    
    \bottomrule
\end{tabularx}
\end{table}


\subsection{Metrics}\label{app:sec:metrics}

\xhdr{System-level identifiability metrics.}
To compute system level identifiability metrics, we perform a batched singular-value decomposition on every system matrix $A$ using \texttt{jax.numpy.linalg.svd}. Subsequently, any singular value $\sigma$ with $|\sigma| < 10^{-6}$ is treated as numerically zero. 
Eigenvalues are computed with \texttt{jax.numpy.linalg.eigvals} and the matrix rank is computed via \texttt{jax.numpy.linalg.matrix\_rank} with tolerance level set to $10^{-6}$.

\xhdr{Trajectory-level identifiability metrics.}
Smoothed condition number (SCN) analysis begins by constructing the empirical Gram matrix $\hat{\Sigma}_{xx}=YSY^T \in \bR^{d\times d}$, where $Y =[x(t_1)\ldots x(t_n)]$ collects one simulated trajectory and where the diagonal ``smoothing'' matrix $S$ contains the numerical quadrature weights (trapezoidal rule by default \texttt{jax.numpy.trapz}). 
The condition number is estimated with \texttt{jax.numpy.linalg.cond}.

\xhdr{Normalized Hamming distance.}
We use the normalized Hamming distance computed on binary input matrices to compare the true system matrix $A$ with the empirically estimated matrix $\hat{A}$. To this end we first binarize $A$ and $\hat{A}$ via $B=\mathbb{I}_{\tau}(A)$ and $\hat{B}=\mathbb{I}_{\tau}(\hat{A})$ with threshold $\tau = 10^{-5}$ and where $\mathbb{I}_{\tau}$ is the indicator function that acts elementwise on matrix entries as
\begin{align*}
    \mathbb{I}_{\tau}(a_{ij}) &= \begin{cases}
		 1     &\text{if } a_{ij} > \tau; \\
	   0	 &\text{else}
	\end{cases}
\end{align*}
The normalized Hamming distance between two matrices $B, \hat{B} \in \mathbb{R}^{\dim \times \dim}$ is then defined as $d_{\text{HMD}}(B, \hat{B})=\frac{1}{\dim^2}\sum_{i,j}\mathbb{I}_{0.5}(B_{ij} \neq \hat{B}_{ij})$ where $B_{ij} \neq \hat{B}_{ij}$ has to be understood as a boolean comparison which evaluates to one under inequality and zero otherwise.

\subsection{Empirical estimators}\label{app:sec:methods}

\xhdr{SINDy.}  \textbf{SINDy}, short for Sparse Identification of Nonlinear Dynamics \citep{sindy}, is a widely adopted algorithm for system identification. It leverages a user-defined set of basis functions to execute $L_2$-regularized linear regression, mapping observations of the solution trajectory $\mathbf{x}(t_i)$ to their corresponding temporal derivatives $\mathbf{\dot{x}}(t_i)$. In practical applications, temporal derivatives are often unobservable, and SINDy estimates them through numerical finite difference approximations. We adopt the implementation available in \texttt{PySINDy} \citep{PySINDy}, restrict the basis set to linear functions, and use the default optimization algorithm (sequentially thresholded least squares) which sets any coefficient whose magnitude falls below the user-defined threshold $\lambda$ to zero. Model and optimizer come with several hyper-parameters out of which we tune the $L_2$-regularization strength ($\alpha$), coefficient threshold ($\lambda$), finite difference approximation order and maximum number of iterations separately for each sample.

\xhdr{Regularization of SINDy.}
For each trajectory we select the pruning threshold $\lambda \in \{10^{-6}, \dots, 10^{-1}\}$ that enforces the sparsity gate: after every ridge-regression step any coefficient whose magnitude falls below $\lambda$ is hard-set to zero, so increasing $\lambda$ enforces progressively sparser candidate systems. Complementing this, the ridge weight $\alpha\in \{0.001, 0.05, 0.1\}$ continuously shrinks the surviving coefficients toward the origin; larger values thus promote numerical stability without directly changing the zero pattern. For every trajectory, we select the optimal parameters $(\lambda, \alpha) = \text{argmax} R^2$ where the $R^2$ score is measured between the observed trajectory and the trajectory obtained by numerically solving the system estimate $\hat{A}$ for the observed initial value. Finally, the identifiability analysis is based on systems with well-fitted trajectories only, which we define as trajectories for which the estimate achieves $R^2>0.99$ and $\text{MSE}<10^{-4}$. This regularization sets any coefficient that falls below  $\lambda$ threshold to zero-hence aggressively promoting sparsity, which might lead to problems in low dimensional settings, see \cref{fig:heatmap-hm}. In cases of very high sparsity and low system dimensionality, most coefficients of the true system matrix will be zero. In this case thresholding coefficients to zero biases the model towards a smaller Hamming distance. As the dimensionality increases or the sparsity reduces, this effect vanishes as there are multiple non-zero coefficients. 

\xhdr{Neural ODEs.}
Neural ODEs (\textbf{NODE}s) \citep{chen2018neural} use a parameterized function $f_\mathbf{\theta}(x(t))$ to approximate the dynamics underlying the observed trajectories. Instead of using finite difference schemes to estimate temporal derivatives, NODEs numerically integrate $f_\theta$ to obtain a solution that can be directly compared to the observed trajectory. In practice $f_\mathbf{\theta}$ is implemented as a neural network; since we focus on linear systems, we use a model with multiple linear layers and no activation functions. To promote sparsity, we incorporate an L1 regularization term into the loss function. The total loss consists of the mean squared error (MSE) of the trajectories, augmented by the regularization parameter $\lambda$ multiplied by the L1 norm of the network's weights.
To optimize the model we use the ode solvers implemented in \texttt{torchode} \citep{lienen2022torchode}, specifically the \texttt{Dopri5} solver in combination with the \texttt{IntegralController} for adaptive step size selection, with relative and absolute tolerances set to 1e-3 and 1e-6, respectively. Neural network parameters $\theta$ are optimized with PyTorch's \texttt{RMSprop} optimizer with a learning rate of 1e-3 to minimize the mean absolute error between the observed and predicted solution trajectory as in \citet{chen2018neural}. Optimization proceeds for 10000 iterations or until the loss falls below $10^{-5}$.

\xhdr{Sparsity-regularization of Neural ODEs.}
For the LinearNODE experiments we first identify well-fitted trajectories which we define as trajectories for which the empirical estimate (after numerical integration from the ground truth initial value) achieves $R^2>0.99$ or $\text{MSE}<10^{-4}$. 
The proportion of well-fitted trajectories (among all trajectories) for different system dimensionalities $\dim$ and sparsity levels $p$ is displayed for different values of regularization weight $\lambda \in \{0, 10^{-1}, 10^{-2}\}$ in \cref{fig:prop_well-fit}.
Among the $\lambda$-values that yield well-fitted trajectories, we then select the one that most faithfully reproduces the sparsity pattern of the ground-truth system matrix $A$. Specifically, for every trajectory we count the zero entries in the matrix estimate $\hat{A}$ and in the true $A$, compute the absolute difference in these counts, and use this sparsity-mismatch score as our selection metric. 
The optimal $\lambda$ for a given $p$ is selected as the value that minimizes the average sparsity-mismatch across all well-fitted trajectories. The effect of regularization weight $\lambda$, system dimensionality $\dim$ and sparsity $p$ on the sparsity-mismatch between model estimate $\hat{A}$ and ground truth system matrix $A$ is illustrated in \cref{fig:sparsity_well-fit}. 
Our arguably very permissive model selection strategy reflects the idea that we are only interested in well-fitted models (as measured on the trajectory-level) in order to draw conclusions about (empirical) system identifiability rather than about optimization, model architecture or numerical issues.

\begin{figure}[h!]
  \centering
  \begin{subfigure}{\textwidth}
    \includegraphics[width=\linewidth]{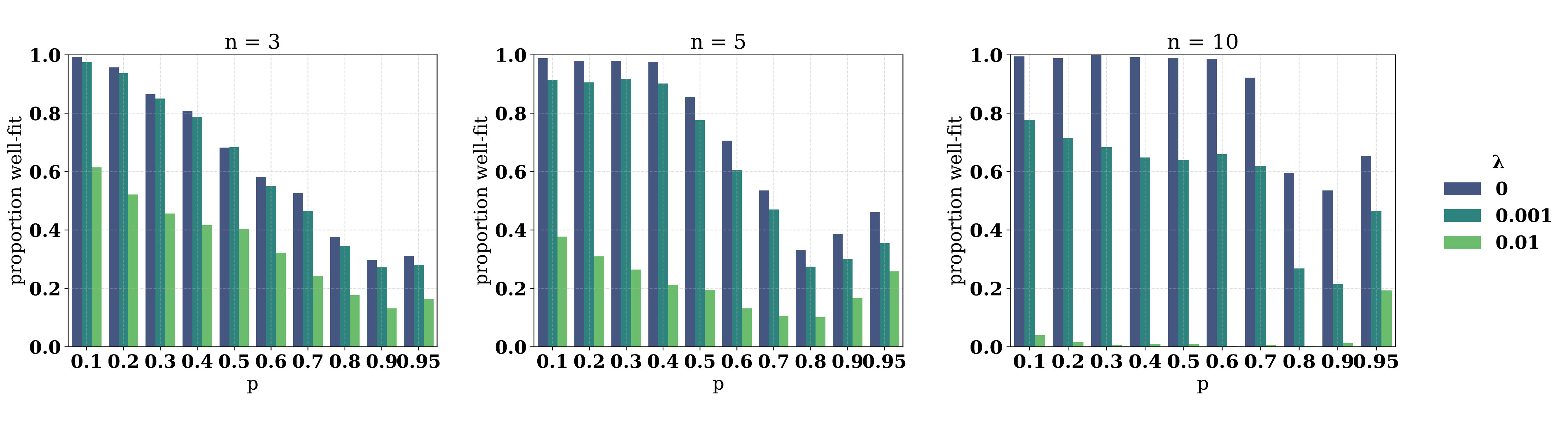}
  \end{subfigure}
  \begin{subfigure}{\textwidth}
    \includegraphics[width=\linewidth]{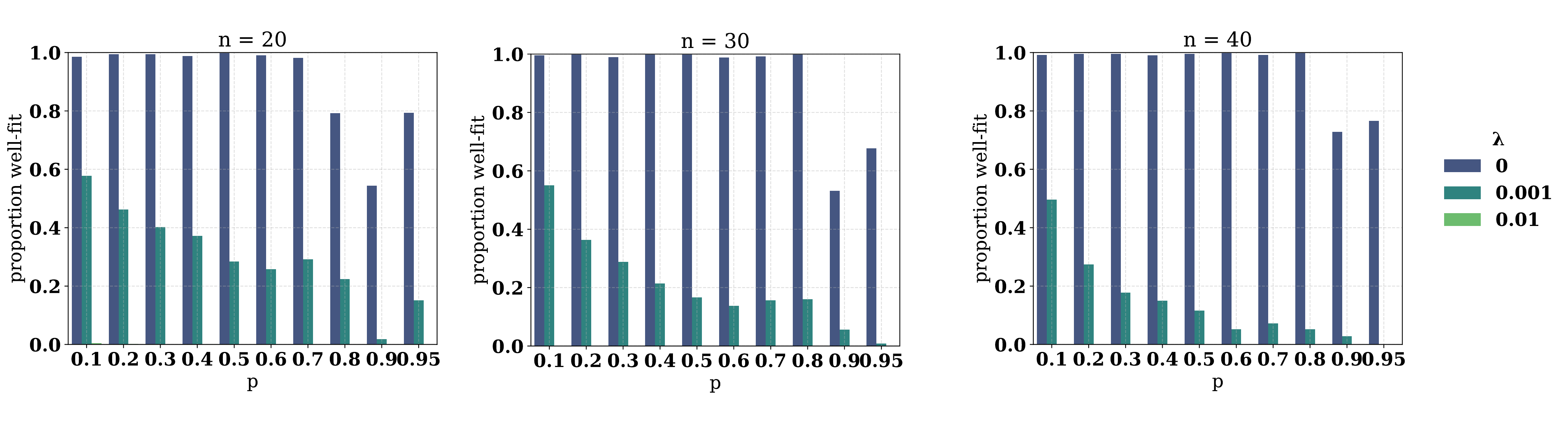}
  \end{subfigure}
  \caption{Proportion of trajectories that have been well-reconstructed by Sparse Neural ODEs for different regularization parameters $\lambda$ and different dimensions $n$ and sparsity levels $p$. For sparse systems, the model recovers only a smaller fraction of the trajectories.}
  \label{fig:prop_well-fit}
\end{figure}

\begin{figure}
  \centering
  \begin{subfigure}{\textwidth}
    \includegraphics[width=\linewidth]{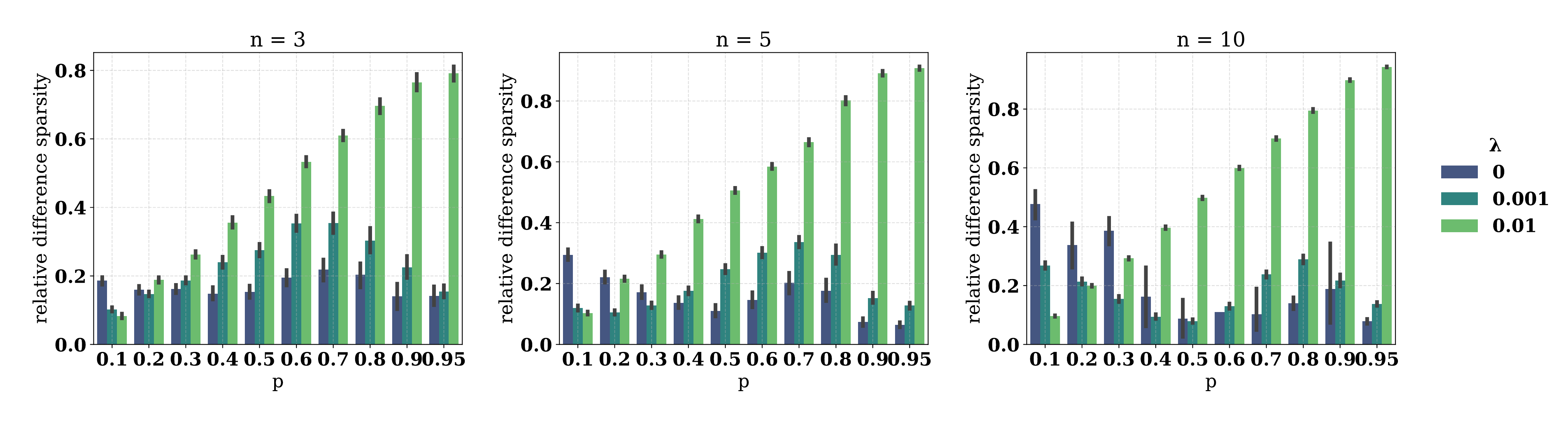}
  \end{subfigure}
  \begin{subfigure}{\textwidth}
    \includegraphics[width=\linewidth]{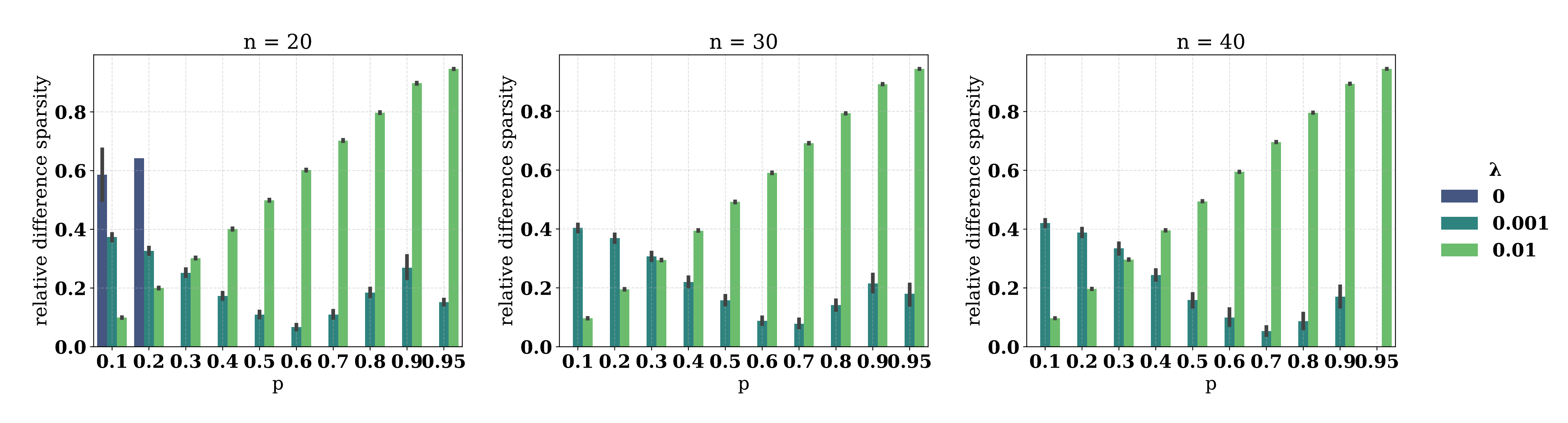}
  \end{subfigure}
  \caption{Relative difference in sparsity count (lower the better) between the true and reconstructed system matrices using Sparse Neural ODEs for different regularization parameters $\lambda$ and different dimensions $n$ and sparsity levels $p$. Lower regularization recovers dense matrices better, while higher values suit sparse ones.}
  \label{fig:sparsity_well-fit}
\end{figure}

\subsection{Additional results on trajectory-level identifiability metrics}

We extend the results on trajectory-level identifiability metrics $d_A$ and SCN to a broad range of system dimensions $\dim$ and sparsity levels $p$. Box-plots 
for the two subgroups
$A_{\sigma_{2,\text{min}}}$ and $A_{\sigma_{2,\text{max}}}$ introduced in \cref{sec:empirical_unidentifiability} are displayed in \cref{fig:da4} and \cref{fig:scn}.
We observe the consistent trend that subgroup $A_{\sigma_{2,\text{min}}}$, i.e., the subgroup with smaller second smallest singular value $\sigma_2$, leads to lower identifiability scores for both metrics (lower value for $d_A$ and higher value for SCN) in line with our theoretical results.

\begin{figure}[!ht]
  \centering
  \begin{subfigure}{0.32\textwidth}
    \includegraphics[width=\linewidth]{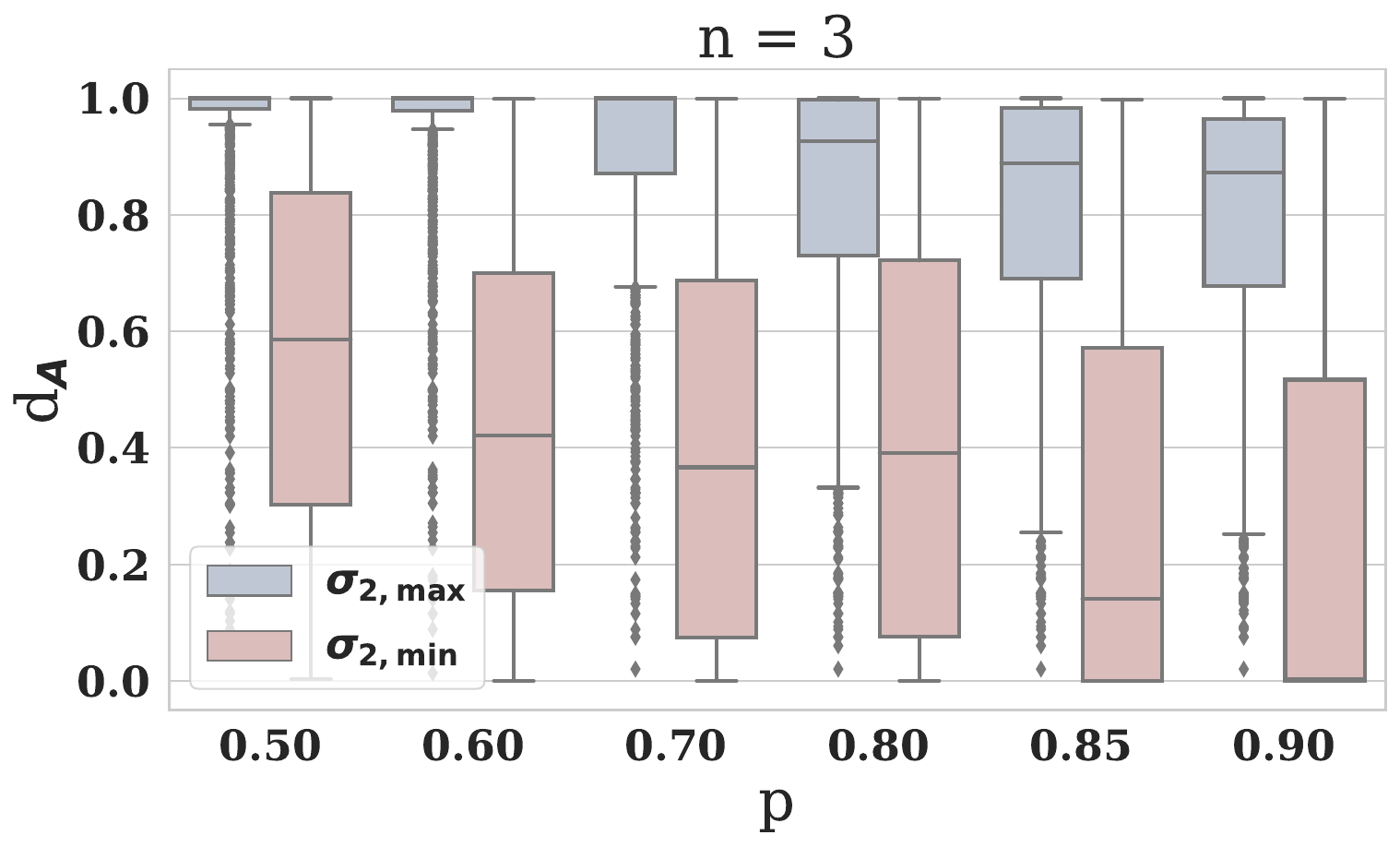}
  \end{subfigure}
  \begin{subfigure}{0.32\textwidth}
    \includegraphics[width=\linewidth]{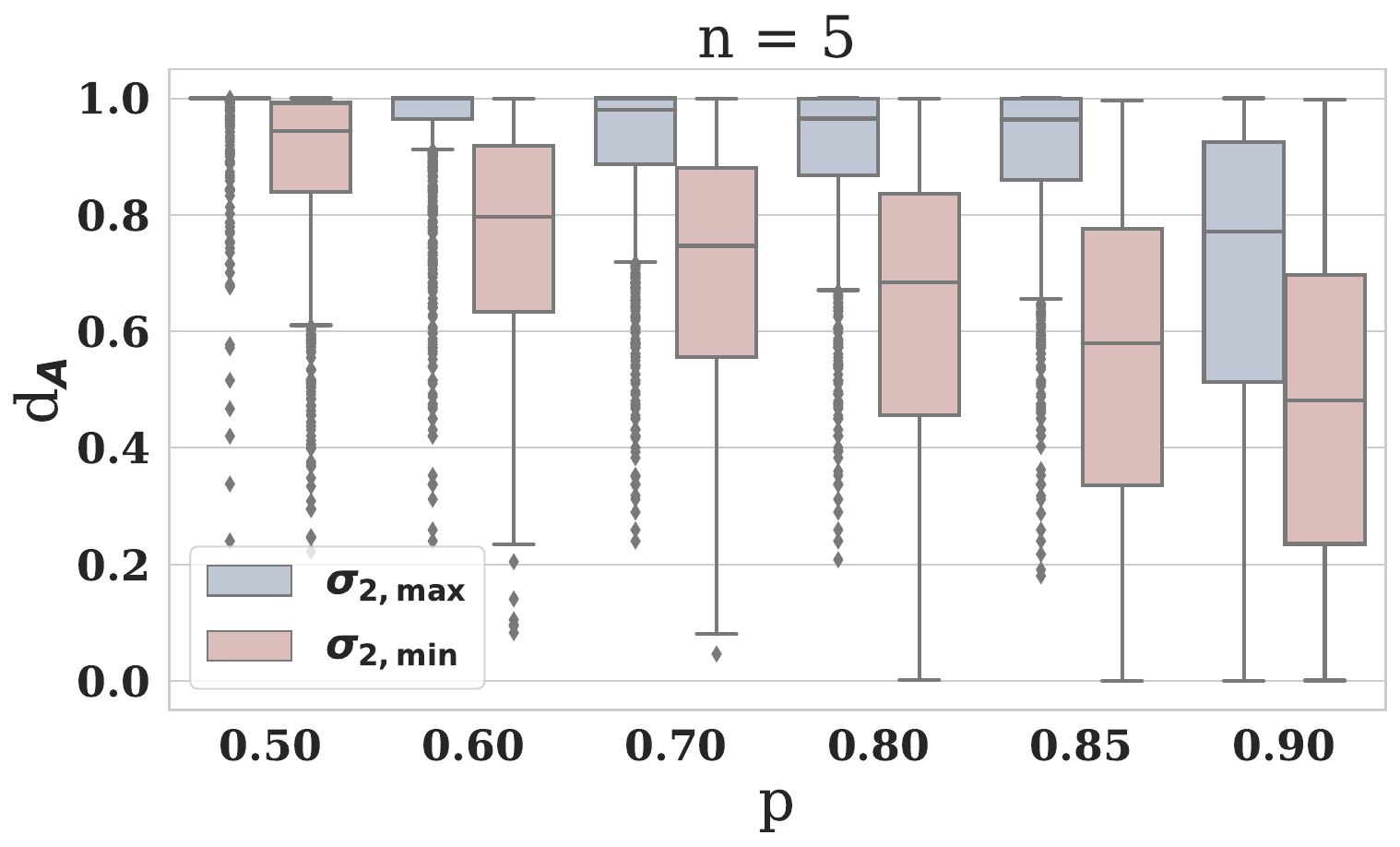}
  \end{subfigure}
  \begin{subfigure}{0.32\textwidth}
    \includegraphics[width=\linewidth]{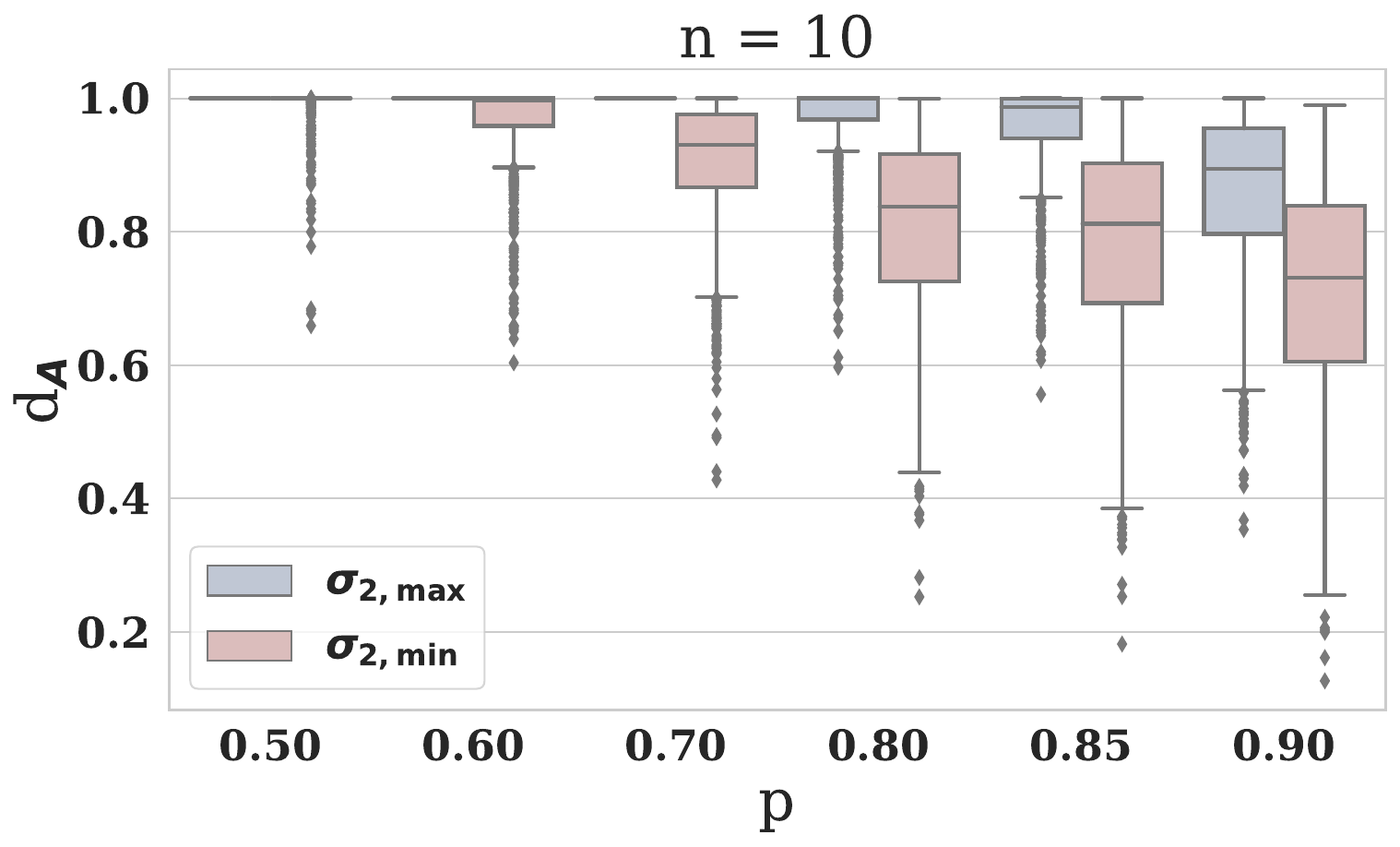}
  \end{subfigure}
  \begin{subfigure}{0.32\textwidth}
    \includegraphics[width=\linewidth]{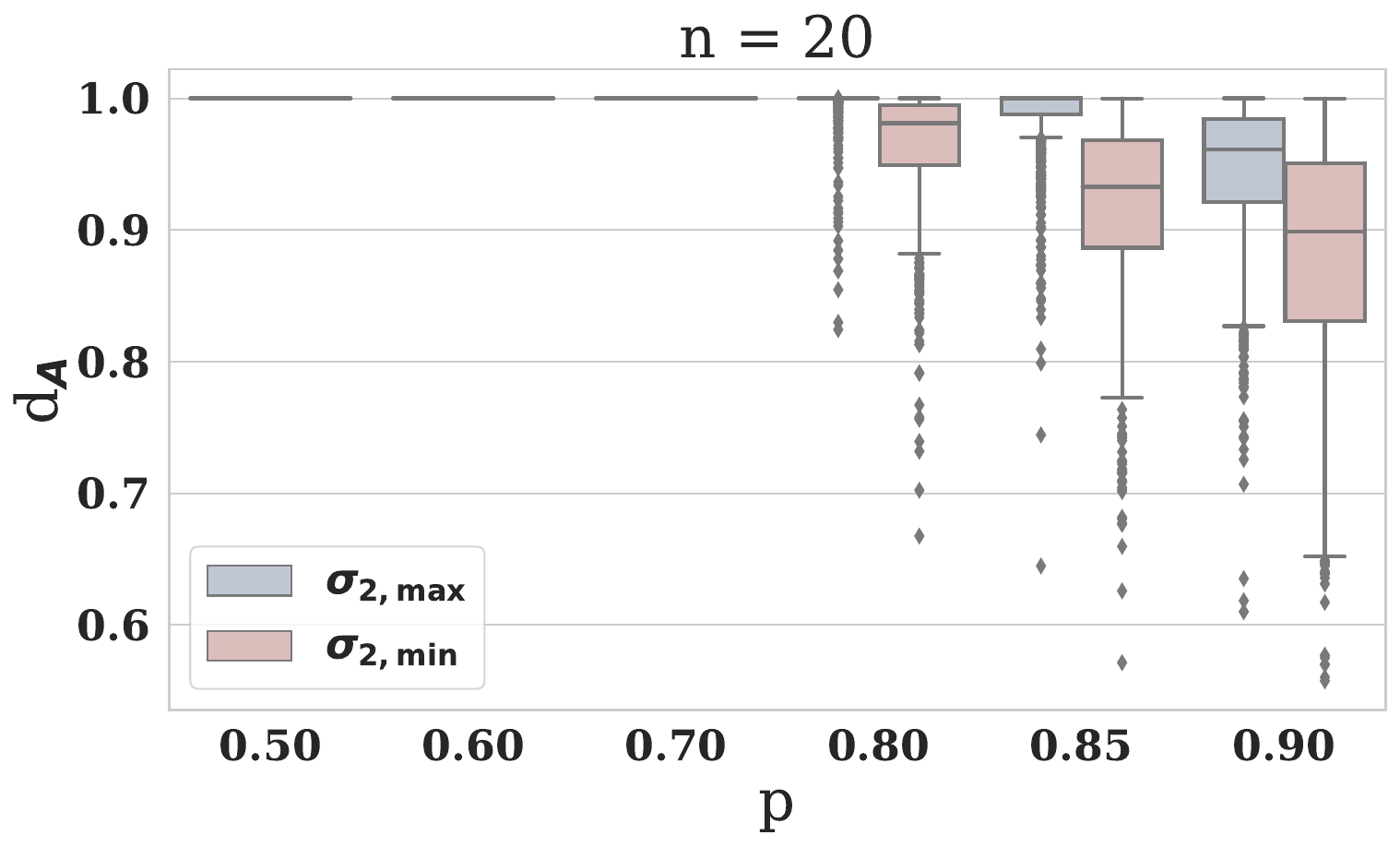}
  \end{subfigure}
  \begin{subfigure}{0.32\textwidth}
    \includegraphics[width=\linewidth]{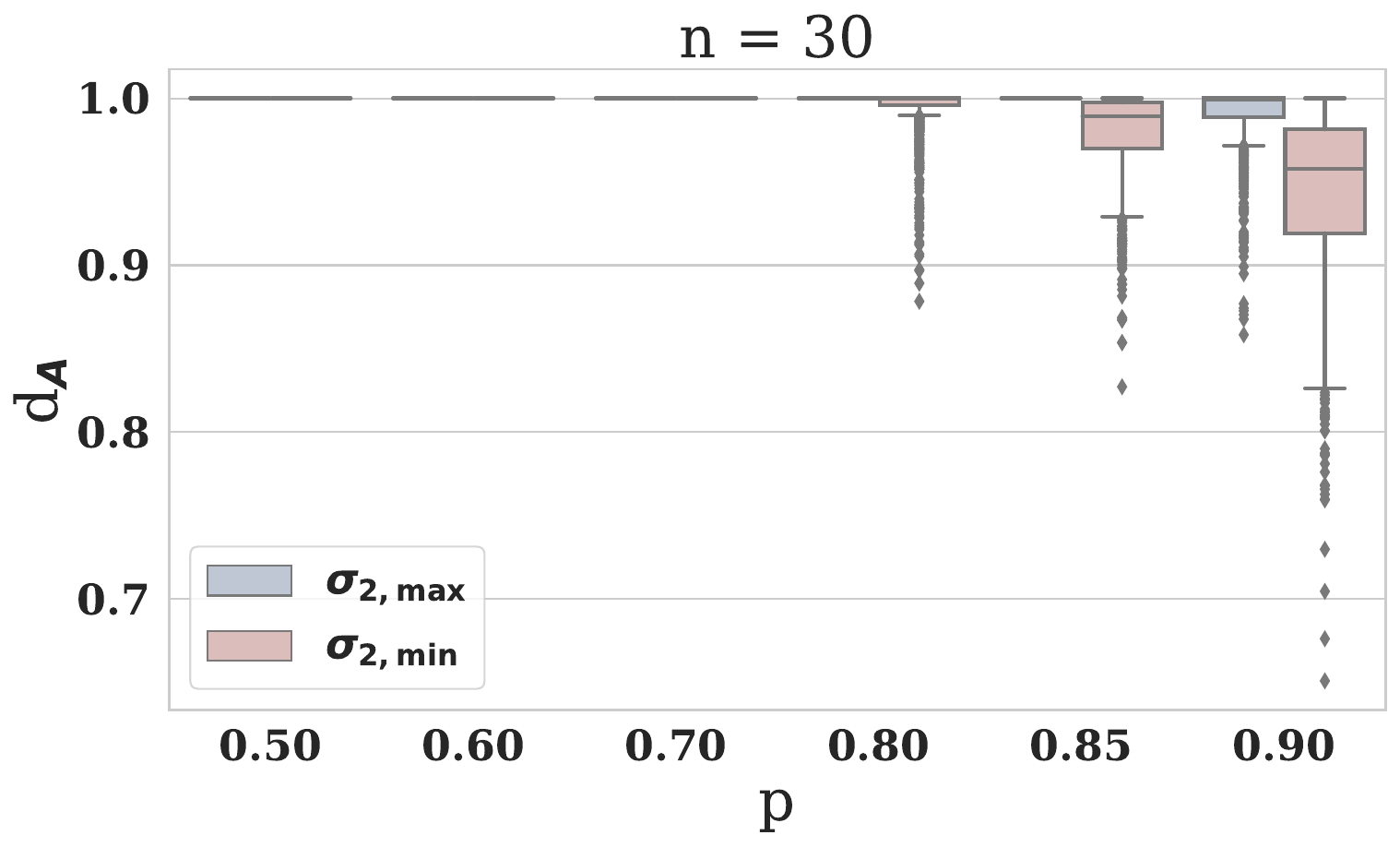}
  \end{subfigure}
    \begin{subfigure}{0.32\textwidth}
    \includegraphics[width=\linewidth]{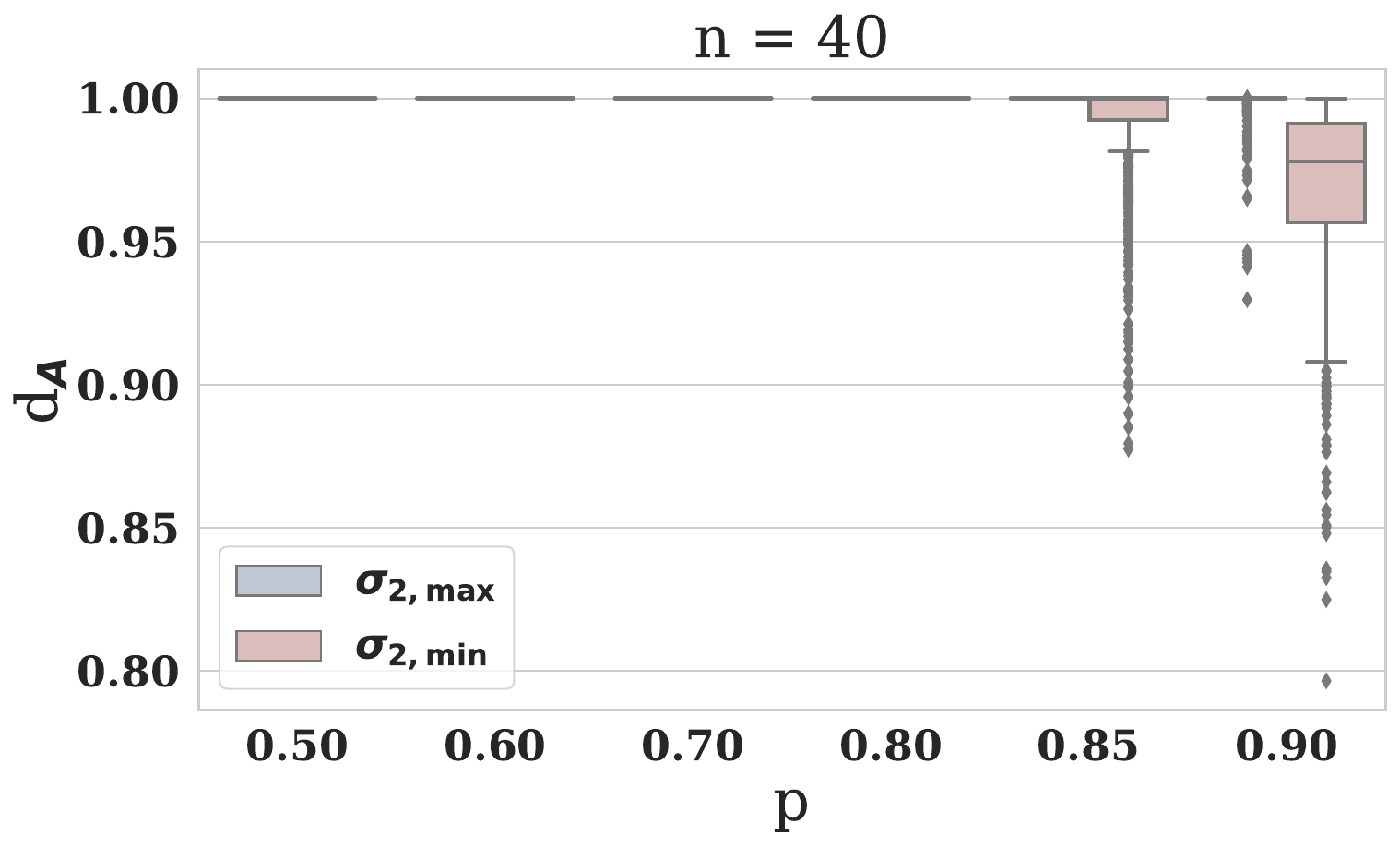}
  \end{subfigure}
  \caption{Box-plots of distance-to-unidentifiability $d_A$ for the least and most identifiable groups of systems for different p and $\dim$ values. Trajectories generated with $A_{\sigma_{2,\text{min}}}$ lead to smaller $d_A$ than those produced with $A_{\sigma_{2,\text{max}}}$.}
  \label{fig:da4}
\end{figure}

\begin{figure}[t!]
  \centering
  \begin{subfigure}{0.32\textwidth}
    \includegraphics[width=\linewidth]{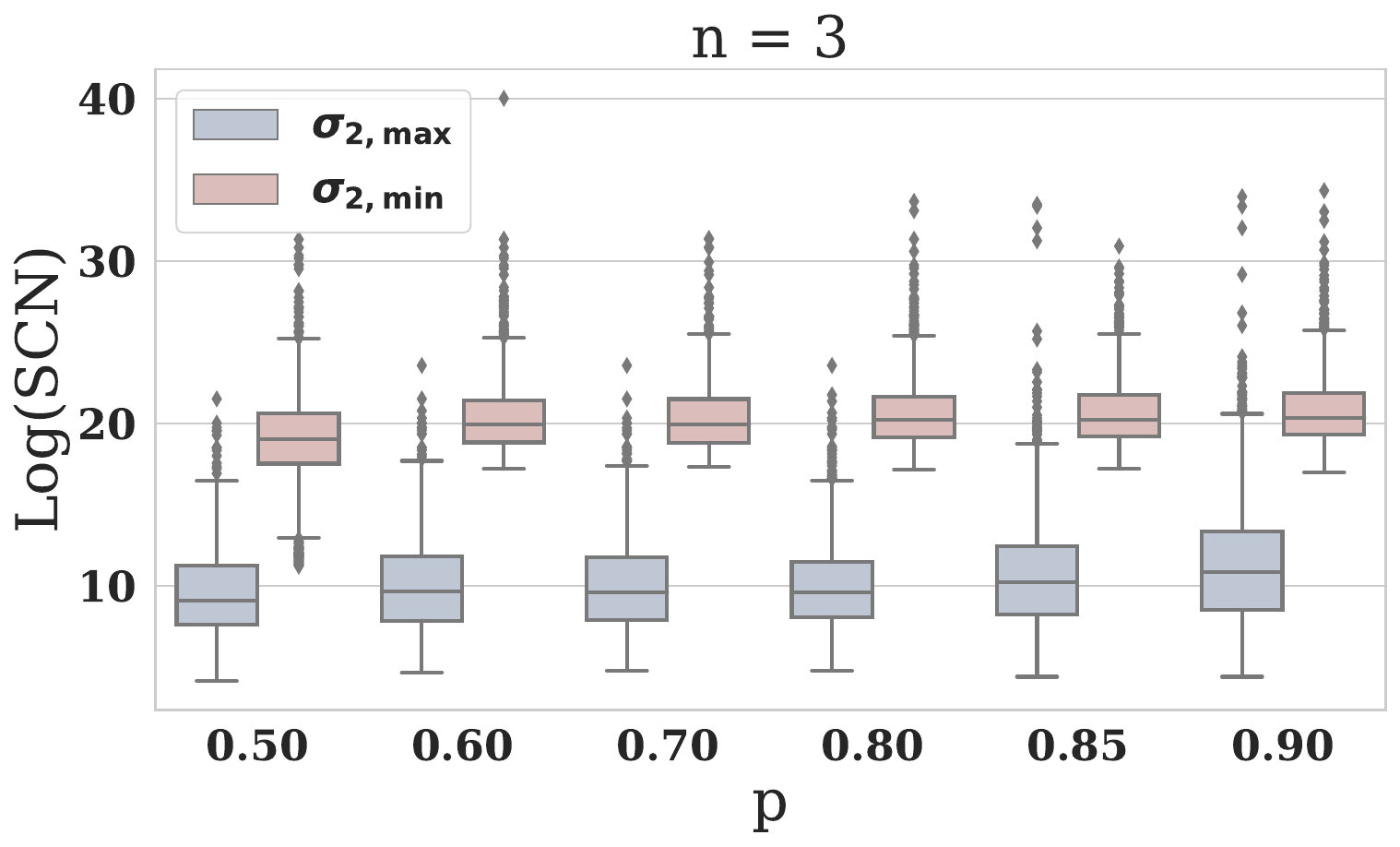}
  \end{subfigure}
  \begin{subfigure}{0.32\textwidth}
    \includegraphics[width=\linewidth]{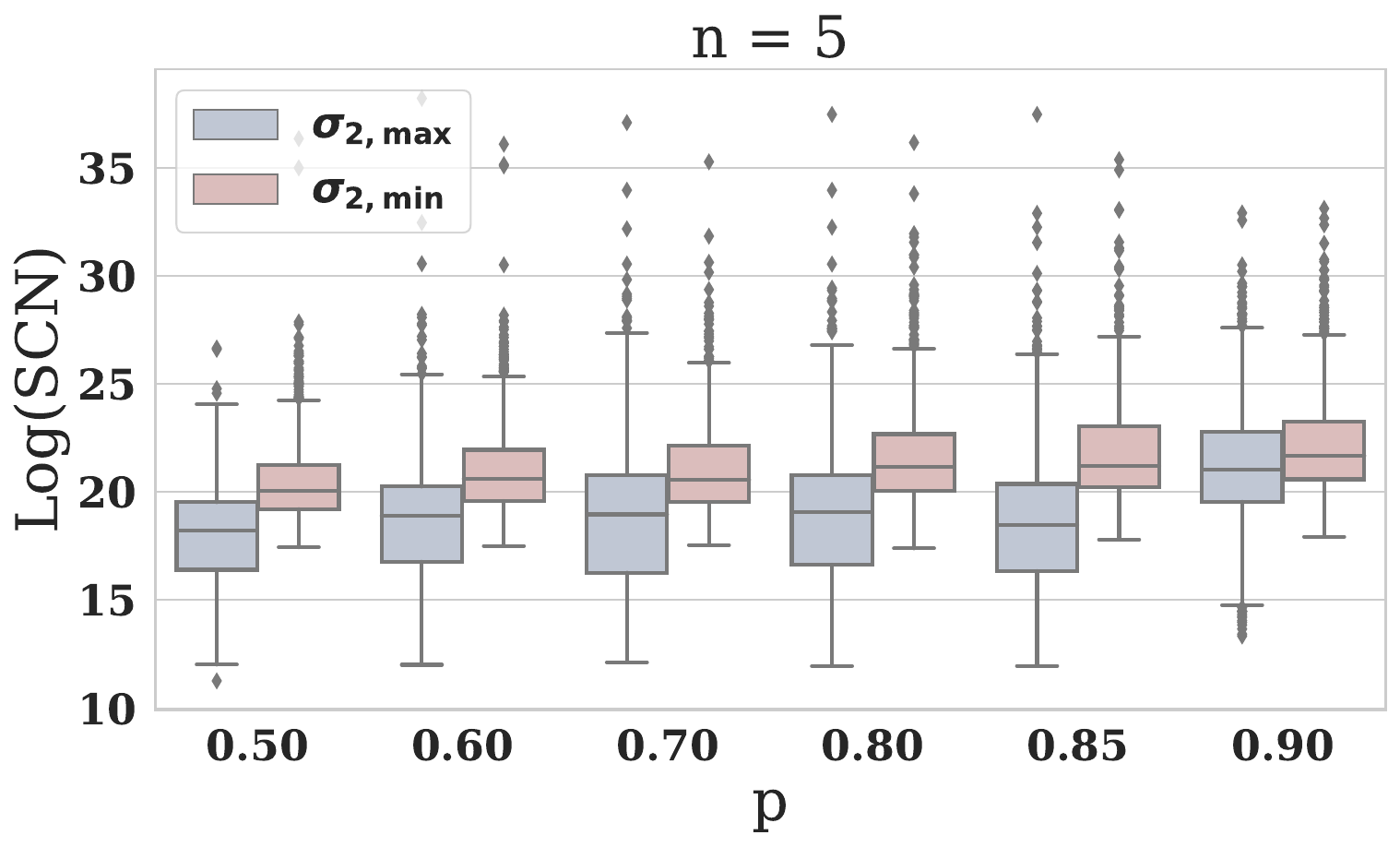}
  \end{subfigure}
  \begin{subfigure}{0.32\textwidth}
    \includegraphics[width=\linewidth]{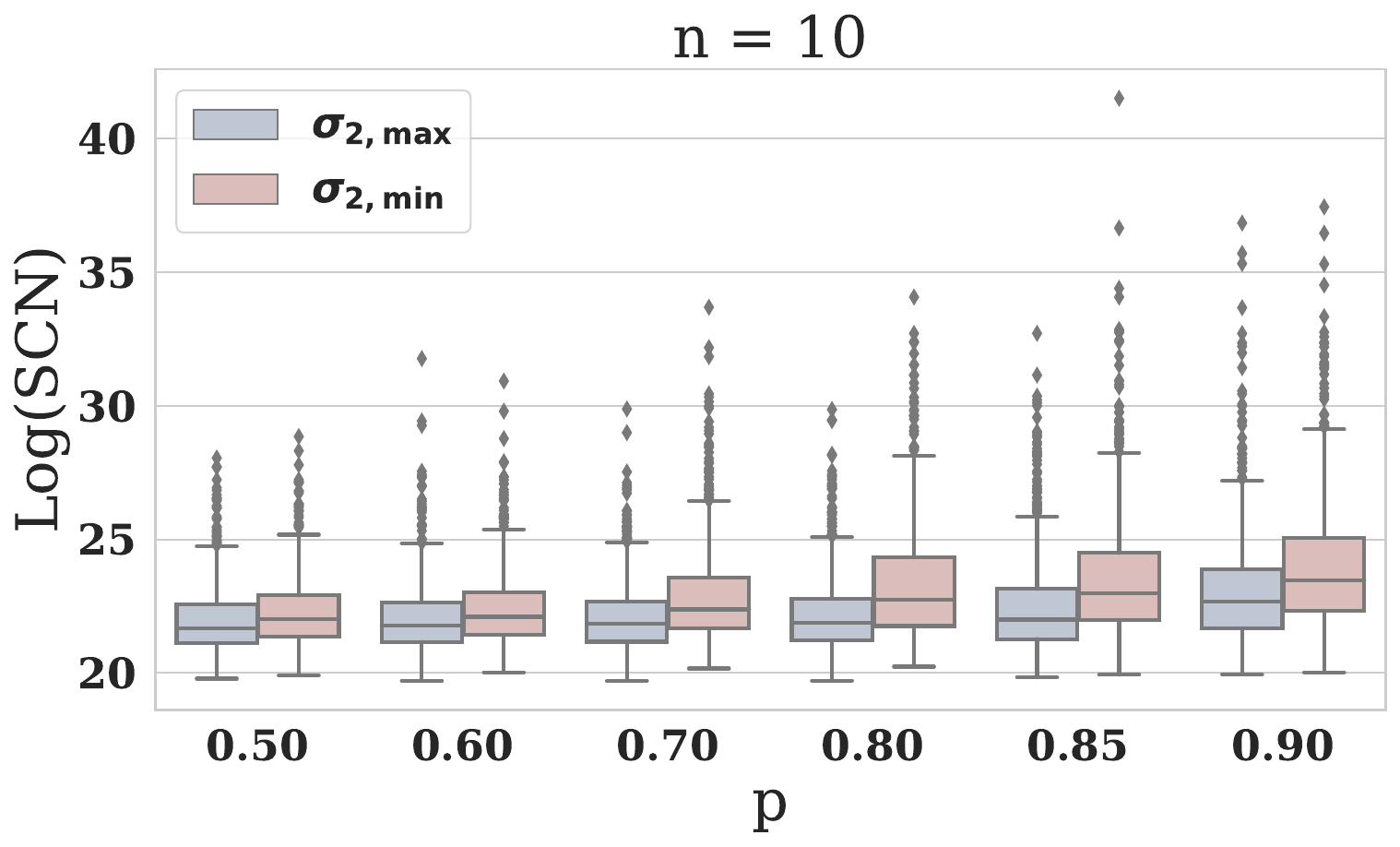}
  \end{subfigure}
  \begin{subfigure}{0.32\textwidth}
    \includegraphics[width=\linewidth]{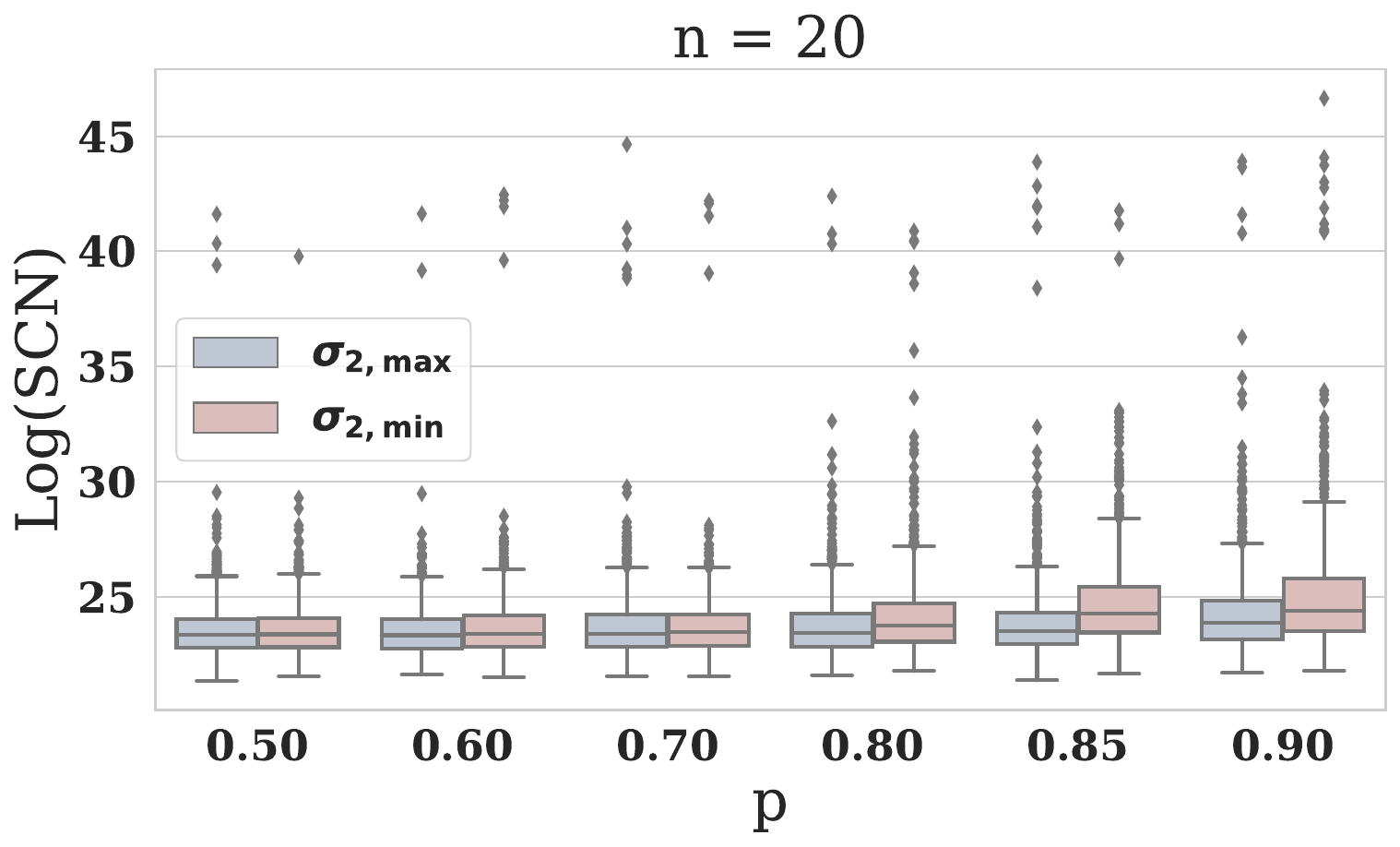}
  \end{subfigure}
  \begin{subfigure}{0.32\textwidth}
    \includegraphics[width=\linewidth]{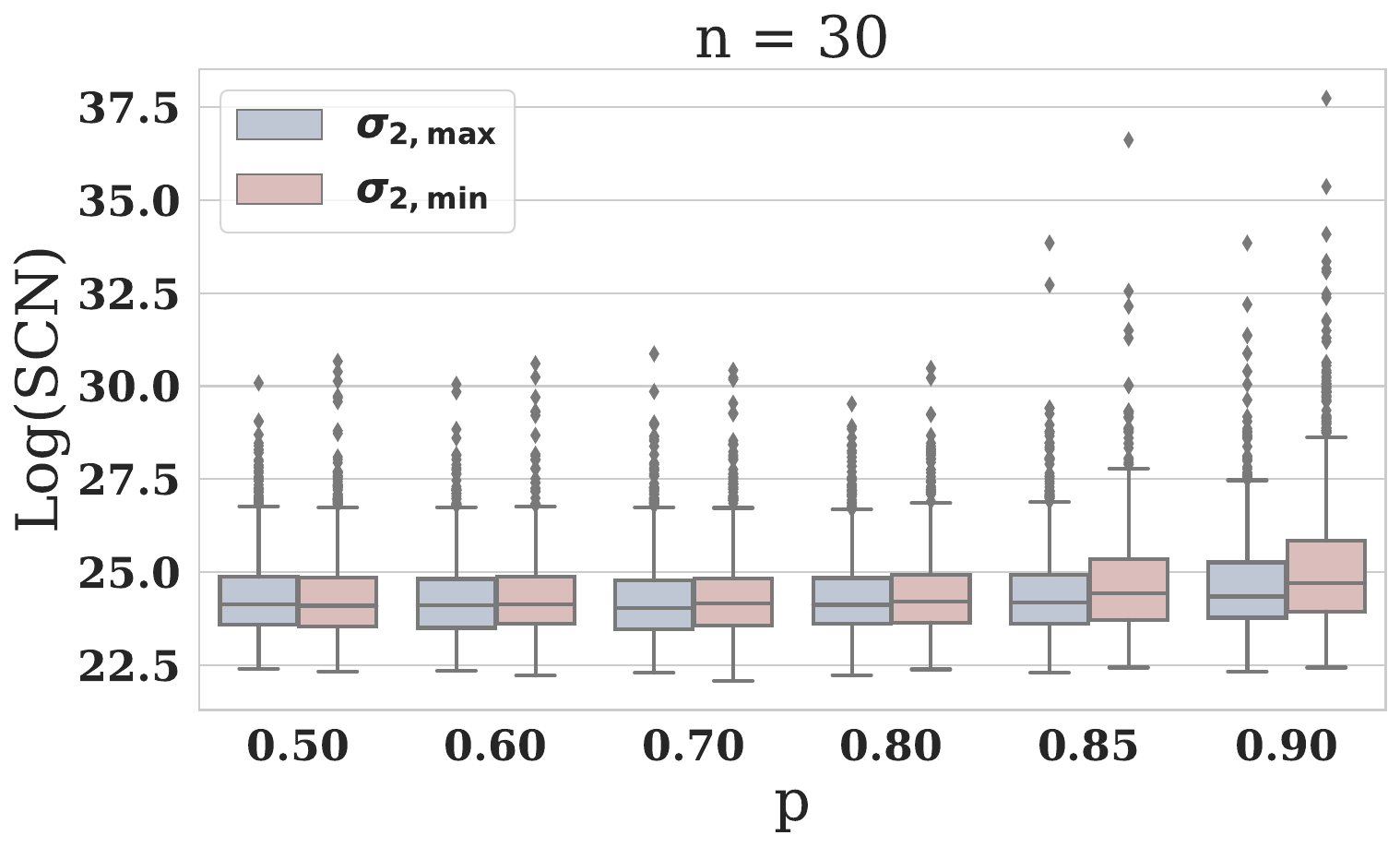}
  \end{subfigure}
    \begin{subfigure}{0.32\textwidth}
    \includegraphics[width=\linewidth]{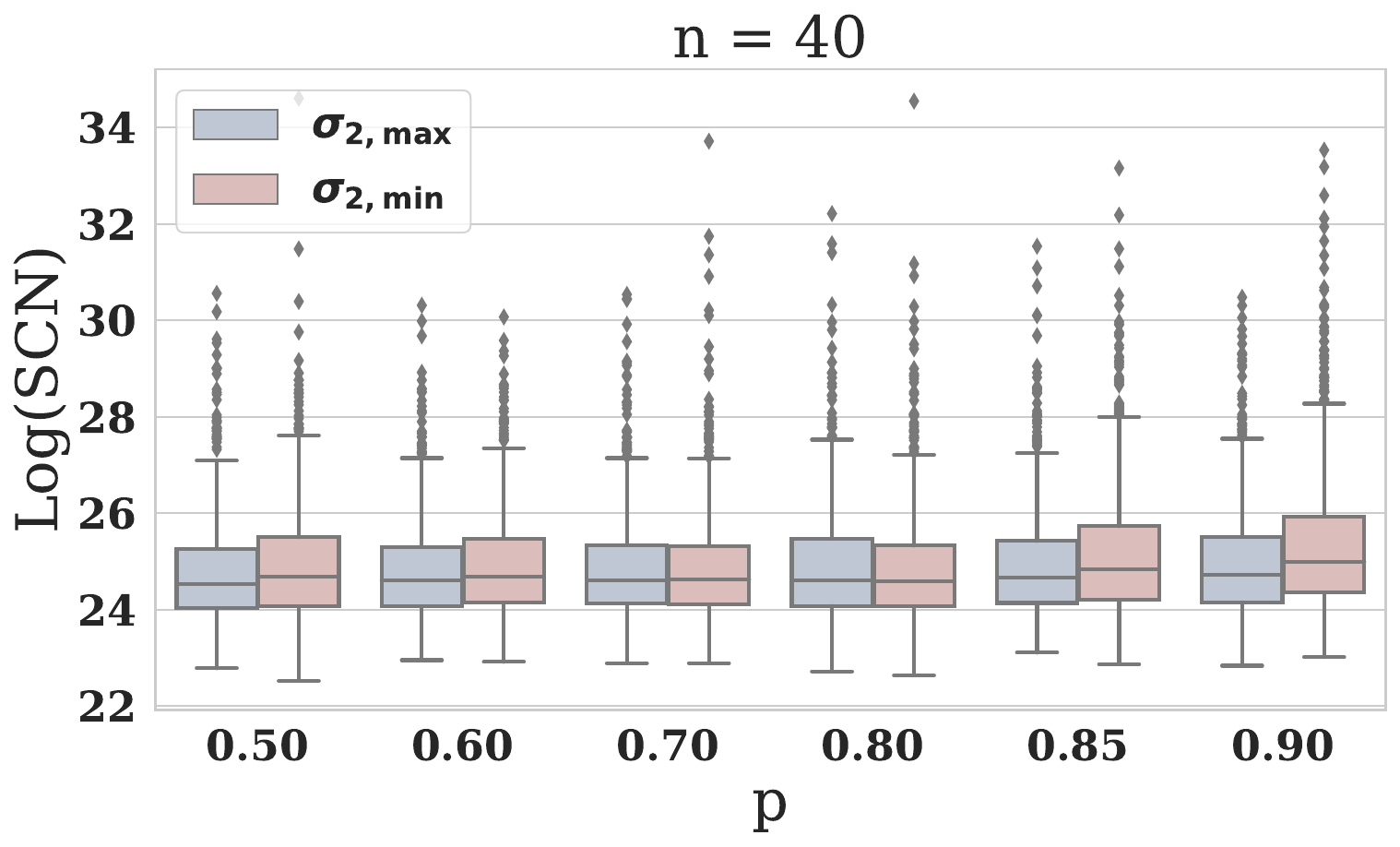}
  \end{subfigure}
  \caption{Box-plots of smoothed condition numbers (SCN) in log-scale for the least and most identifiable groups of systems for different p and $\dim$ values. Trajectories generated with $A_{\sigma_{2,\text{min}}}$ lead to higher SCN than those produced with $A_{\sigma_{2,\text{max}}}$.}
  \label{fig:scn}
\end{figure}

\pagebreak

\xhdr{On the distance of a subspace of dimension $\dim$ from a random vector on the unit sphere.} We now empirically validate  the close-to-unidentifiability metric $d_A$, which measures the distance between an initial condition $x_0$ and the kernel of the corresponding matrix $A$. Since we sample the initial conditions uniformly from the unit sphere, we can compare the empirical distribution of $d_A$ to the theoretical expected distance between a random unit vector and a $d_0$-dimensional subspace of $\bR^{\dim}$.
We partition the trajectories by the null-space dimension $d_0 = \text{dim}(\ker(A))$ of their corresponding generating matrix $A$ and  display the resulting distance-to-unidentifiability $d_A$ in \cref{fig:expected-dA}. As expected, the (mean) empirical measure closely matches the theoretical expected distance between a random unit vector $\x_0$ and a $d_0$-dimensional subspace of $\bR^{\dim}$ \citep{vershynin2018high}, given by
\begin{equation*}
    \mathbb{E}[d_A(x_0)\given \dim, d_0] = \frac{\Gamma(\dim/2)\Gamma((\dim-d_0+1)/2)}{\Gamma((\dim -d_0)/2)\Gamma((\dim+1)/2)},
\end{equation*}
hence (in expectation) validating the computation of our distance-to-unidentifiability $d_A$.

\begin{figure}[t!]
  \centering
  \begin{subfigure}{0.32\textwidth}
    \includegraphics[width=\linewidth]{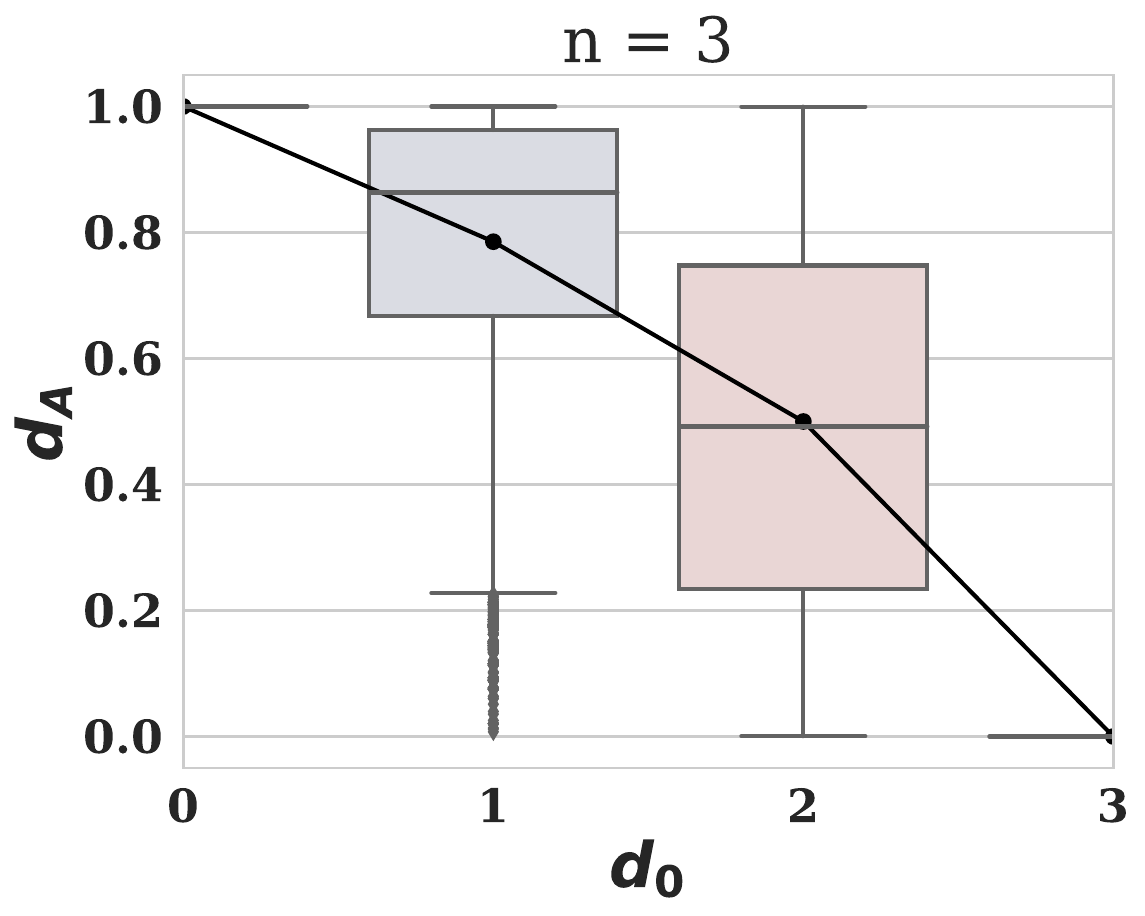}
  \end{subfigure}
  \begin{subfigure}{0.32\textwidth}
    \includegraphics[width=\linewidth]{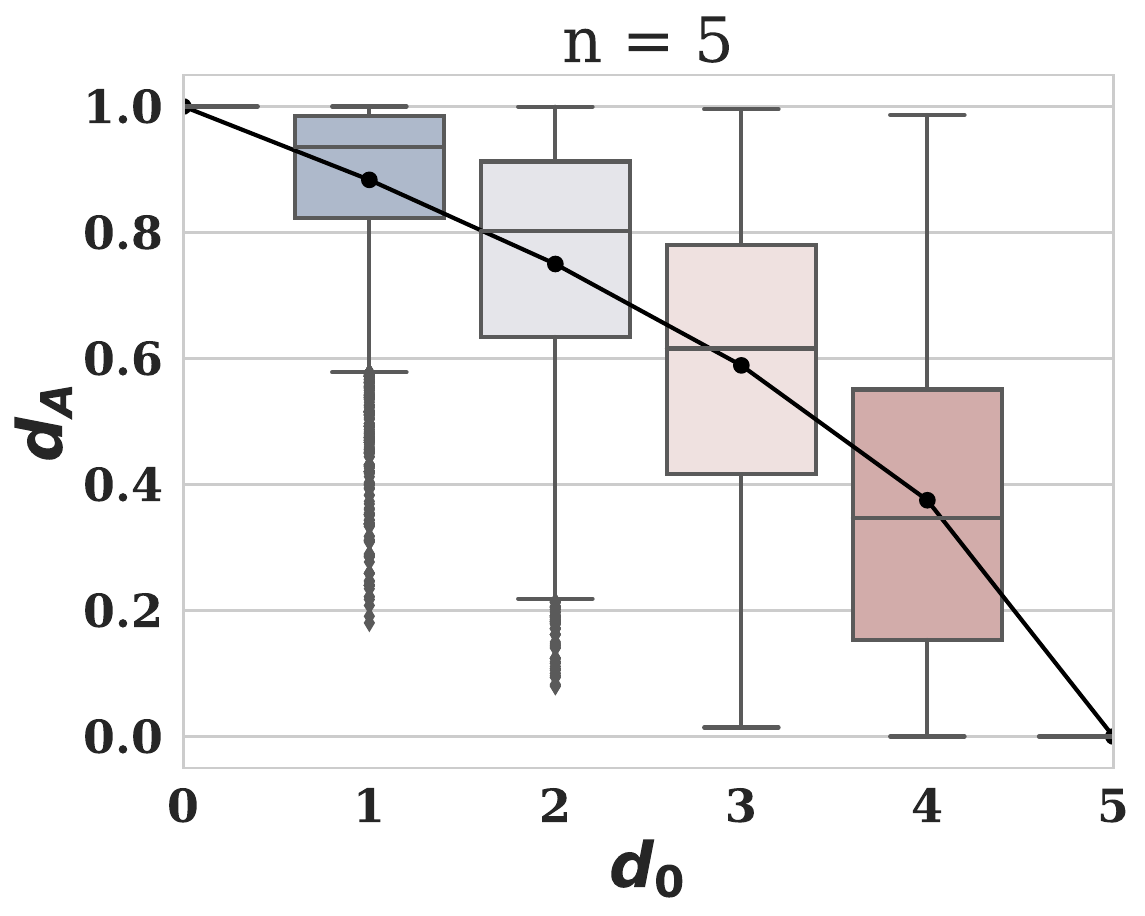}
  \end{subfigure}
  \begin{subfigure}{0.32\textwidth}
    \includegraphics[width=\linewidth]{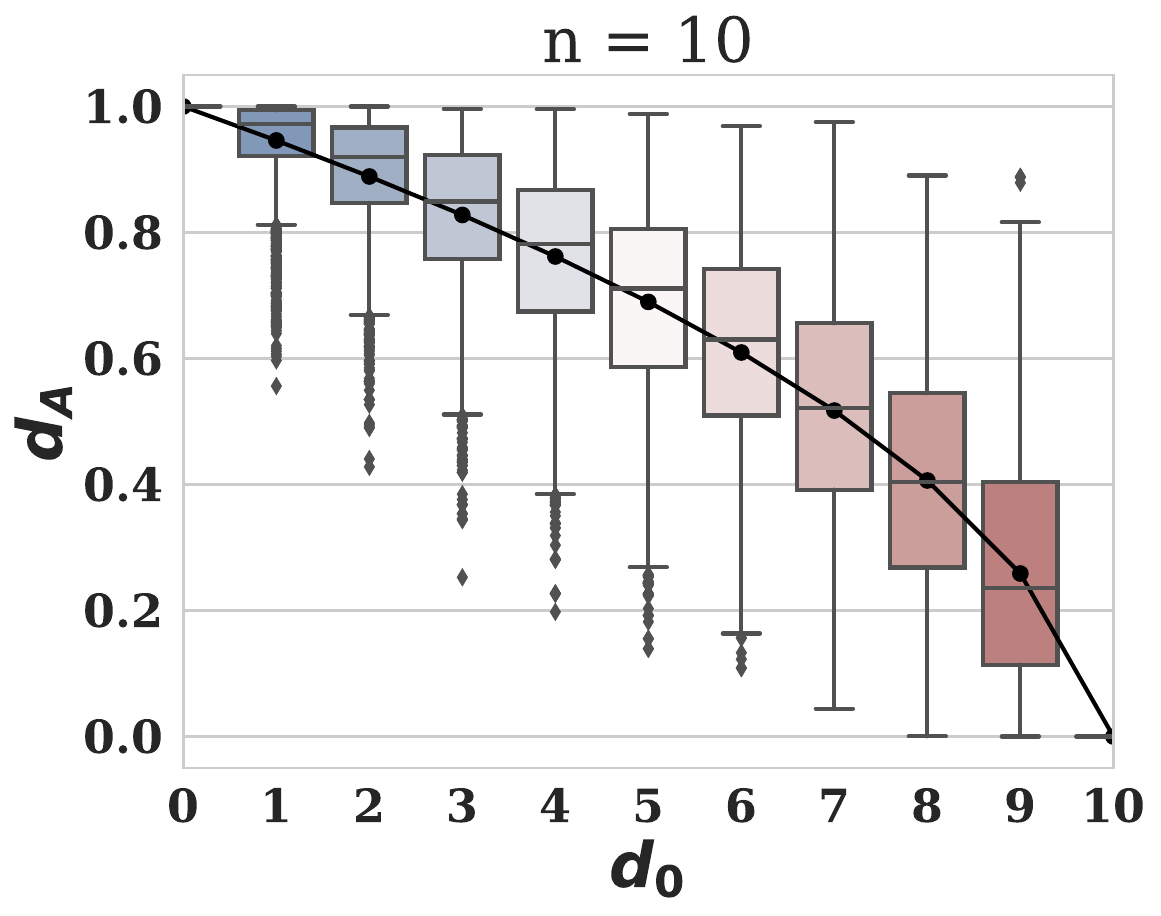}
  \end{subfigure}
  \begin{subfigure}{0.32\textwidth}
    \includegraphics[width=\linewidth]{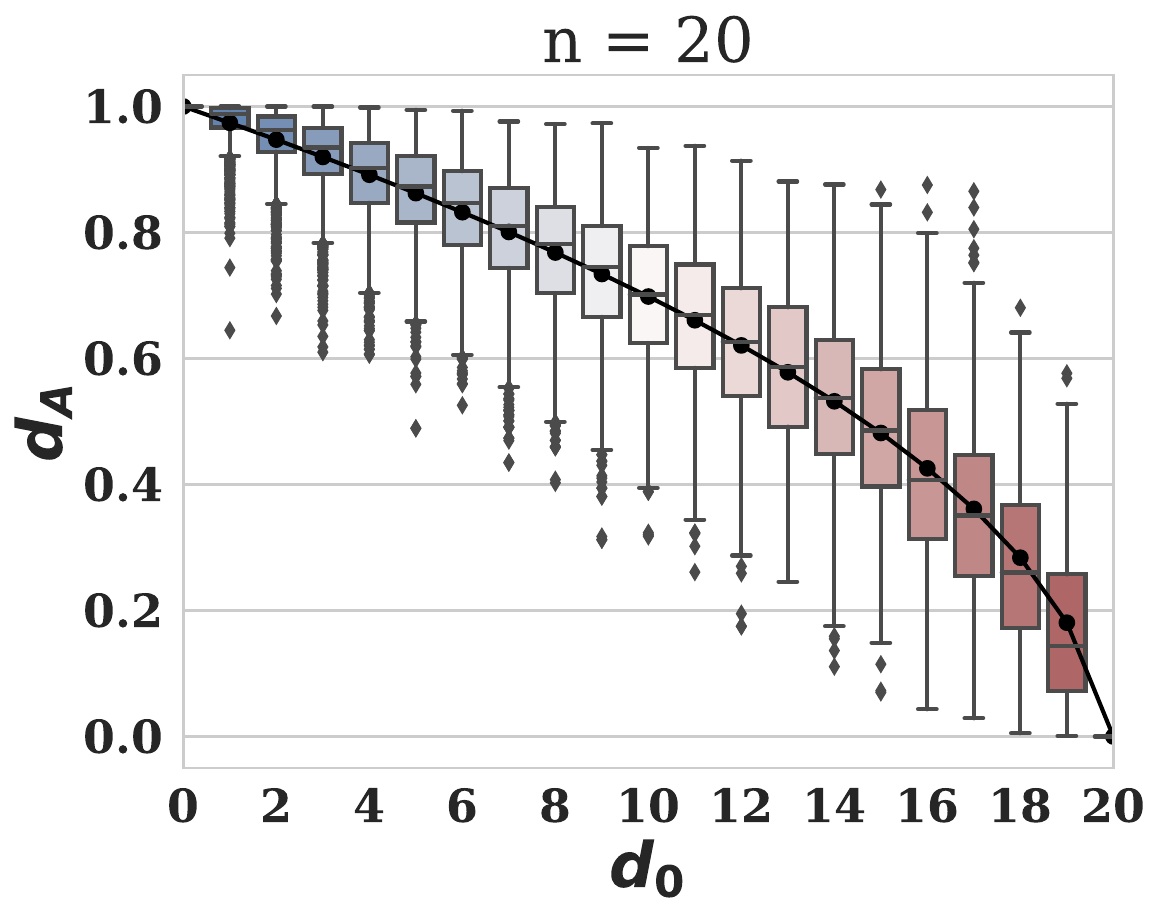}
  \end{subfigure}
  \begin{subfigure}{0.32\textwidth}
    \includegraphics[width=\linewidth]{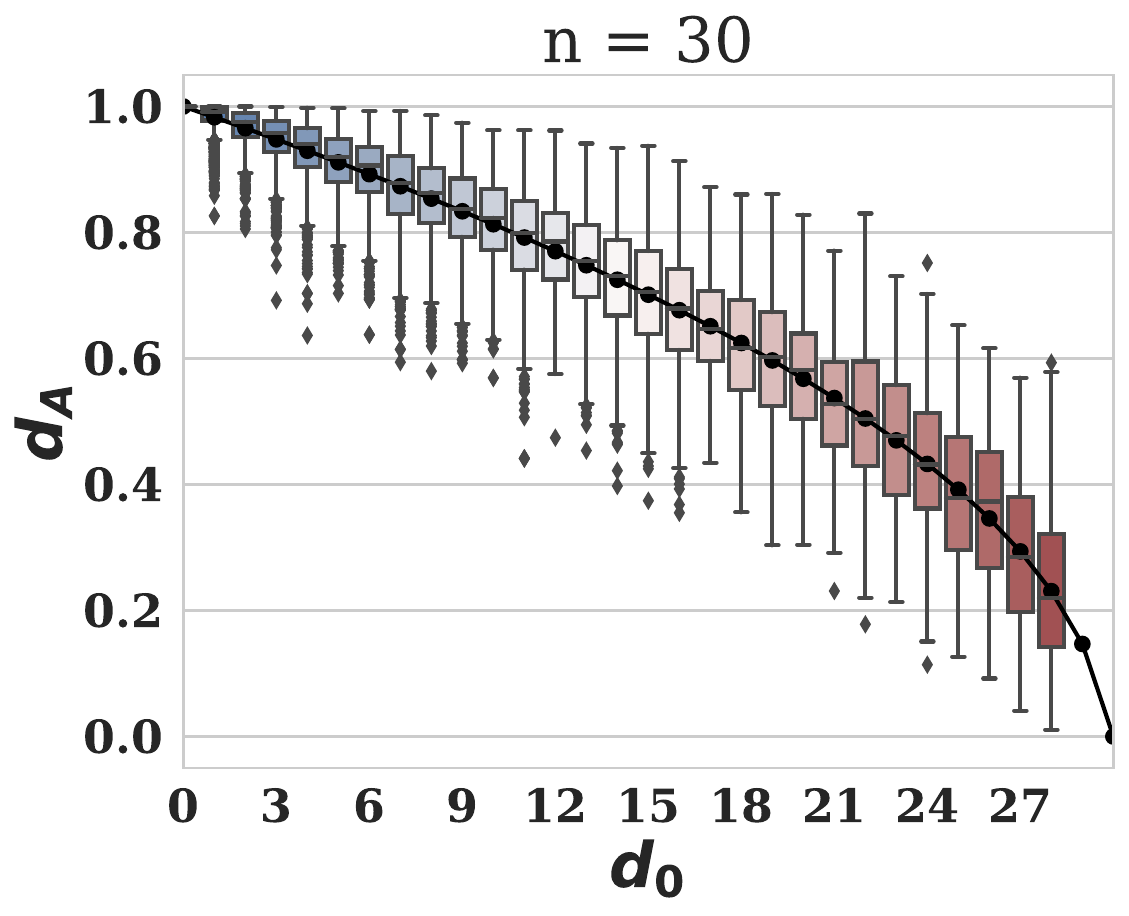}
  \end{subfigure}
    \begin{subfigure}{0.32\textwidth}
    \includegraphics[width=\linewidth]{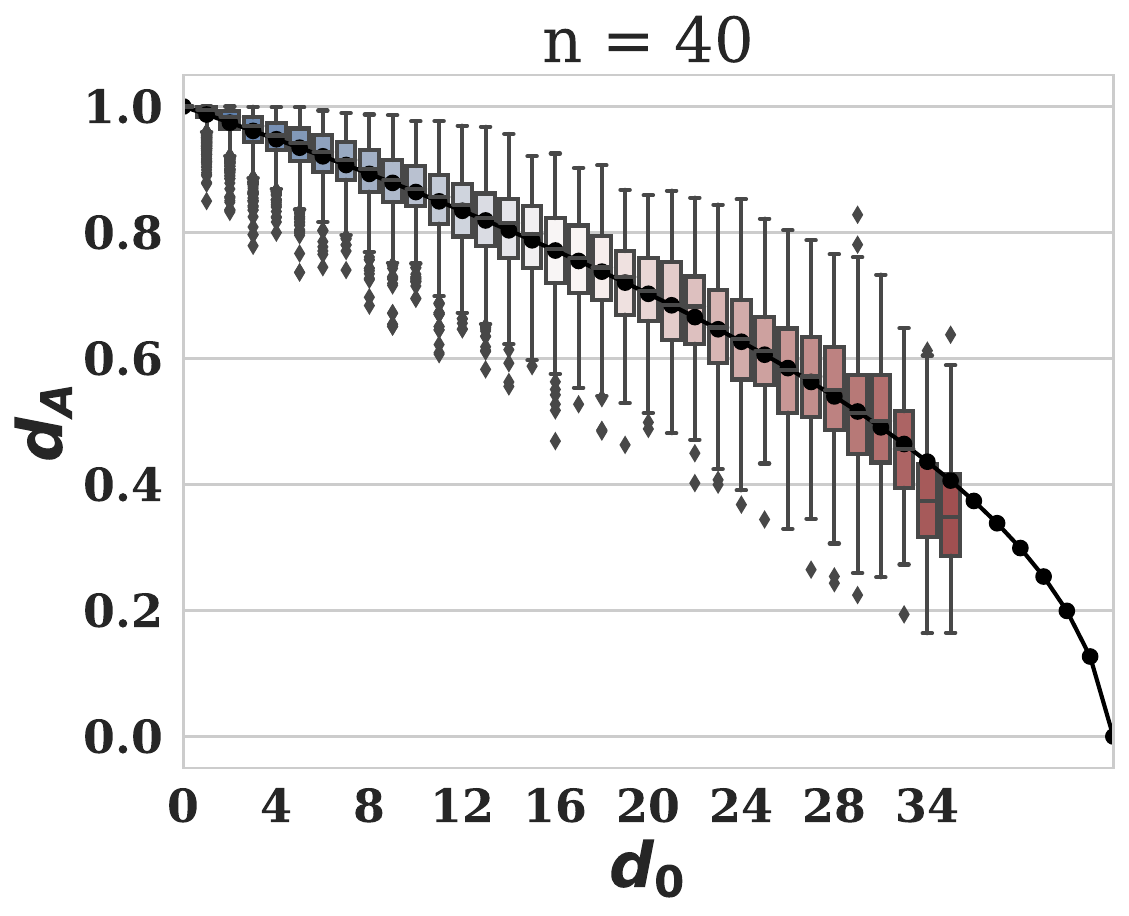}
  \end{subfigure}
  \caption{Box-plots of distance-to-unidentifiability $d_A$ different $\dim$ and different dimensions of $\ker(A)$ ($d_0 = \text{dim}(\ker(A))$) together with the expected distance $\mathbb{E}[d_A(x_0)\given \dim, d_0]$ (black line).}
  \label{fig:expected-dA}
\end{figure}

\pagebreak

\subsection{Additional Experiments}

A consequence of our findings in \cref{lemmadA} is that the trajectories generated by two candidate matrices $A$ and $A'$ remain indistinguishable over longer time horizons when the initial condition lies closer to an invariant subspace. Here we examine how this behavior manifests in the performance of empirical estimators. 
To this end, we consider multiple configurations $(p,dim)$, where $p=\{0.1,0.3,0.7,0.8,0.9,0.95\}$ and $dim=\{3,5,10,20\}$. For each configuration we sample 10 matrices following the same data generation procedure as \cref{sec:empirical_unidentifiability}. 
We then generate four initial conditions at varying distances $d_A=\{0.1,0.5,1\}$ from the corresponding invariant subspace to assess how the estimators’ performances change with increasing $d_A$. 
As described in \cref{app:sec:implementation} we restrict the performance evaluation to well-fitted trajectories in order to investigate the effect of the distance on identifiability rather than model optimization properties.
In \cref{fig:empirical_estimator_da}, we display the Hamming distance and mean squared error (MSE) of the estimated matrices compared to the ground truth one.
In line with our theory, the performance of both the Neural ODE as well as SINDy scales inversely with the distance to the invariant subspace, i.e., the smaller the distance $d_A$, the larger the estimation error.

A second factor that \cref{lemmadA} predicts to affect empirical performance is the length of the observation interval: since trajectories remain indistinguishable up to time horizon T, reducing the observation window should degrade estimator performance.
To investigate this aspect empirically, we use the same data from the previous experiment but restrict input to the estimator to only a fraction of the previous observation window.
Our results in \cref{fig:time_empirical_estimator_da} show that indeed, as predicted, performance both in terms of Hamming distance as well as MSE decreases in almost all cases as the observation interval reduces. The exception to this observation occurs at the smallest distance to the invariant subspace,  $d_A=0.1$, where the performance remains constant across different trajectory lengths. As described before, $d_A=0.1$ also leads to the largest errors overall, hence in this case the initial value is presumably already so close to the invariant subspace that unidentifiability even occurs at the longest observation window.


\begin{figure}[!ht]
  \centering
    \begin{subfigure}{0.45\textwidth}
    \includegraphics[width=\linewidth]{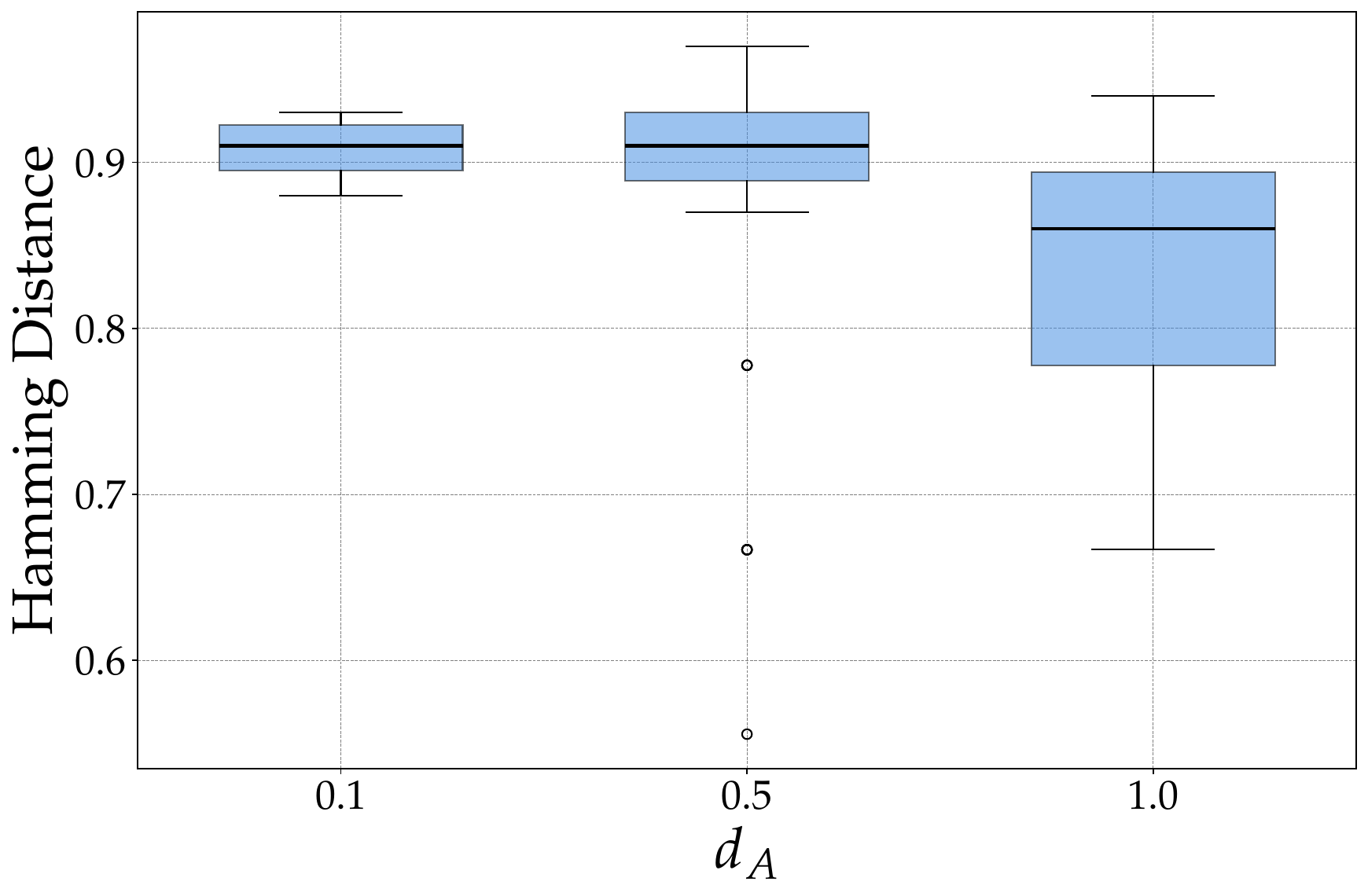}
  \end{subfigure}
      \begin{subfigure}{0.45\textwidth}
    \includegraphics[width=\linewidth]{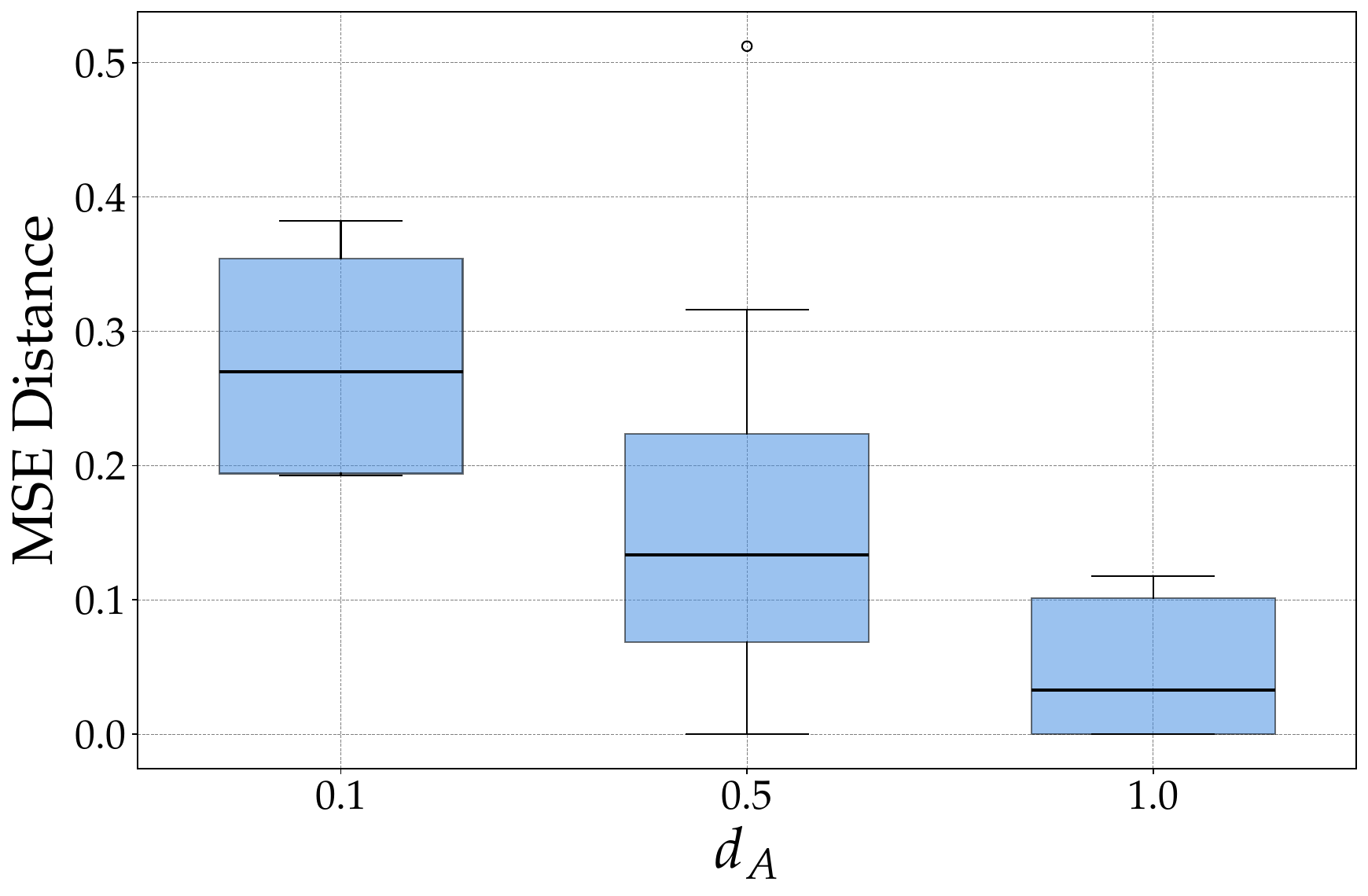}
  \end{subfigure}
      \begin{subfigure}{0.45\textwidth}
    \includegraphics[width=\linewidth]{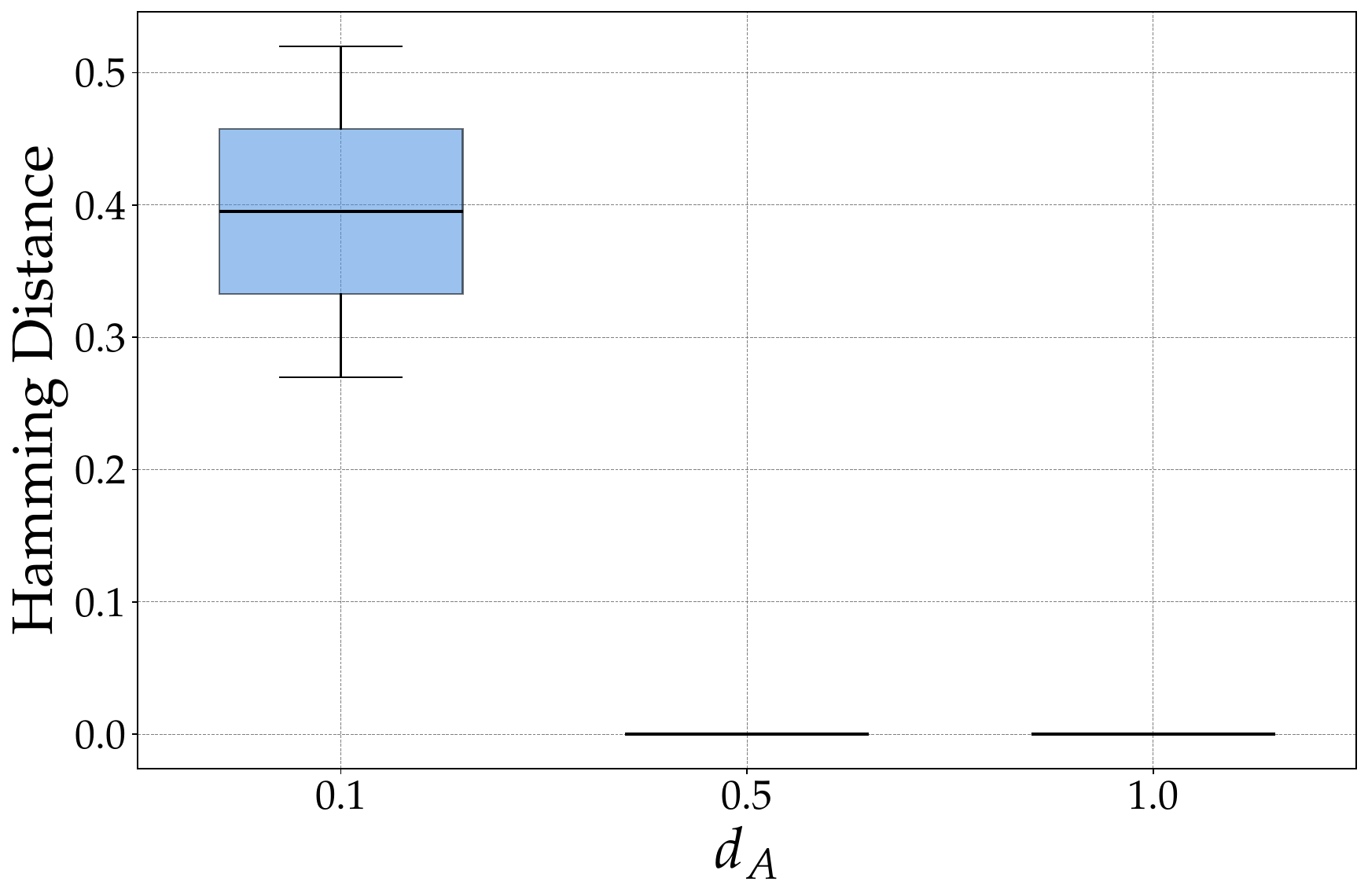}
  \end{subfigure}
      \begin{subfigure}{0.45\textwidth}
    \includegraphics[width=\linewidth]{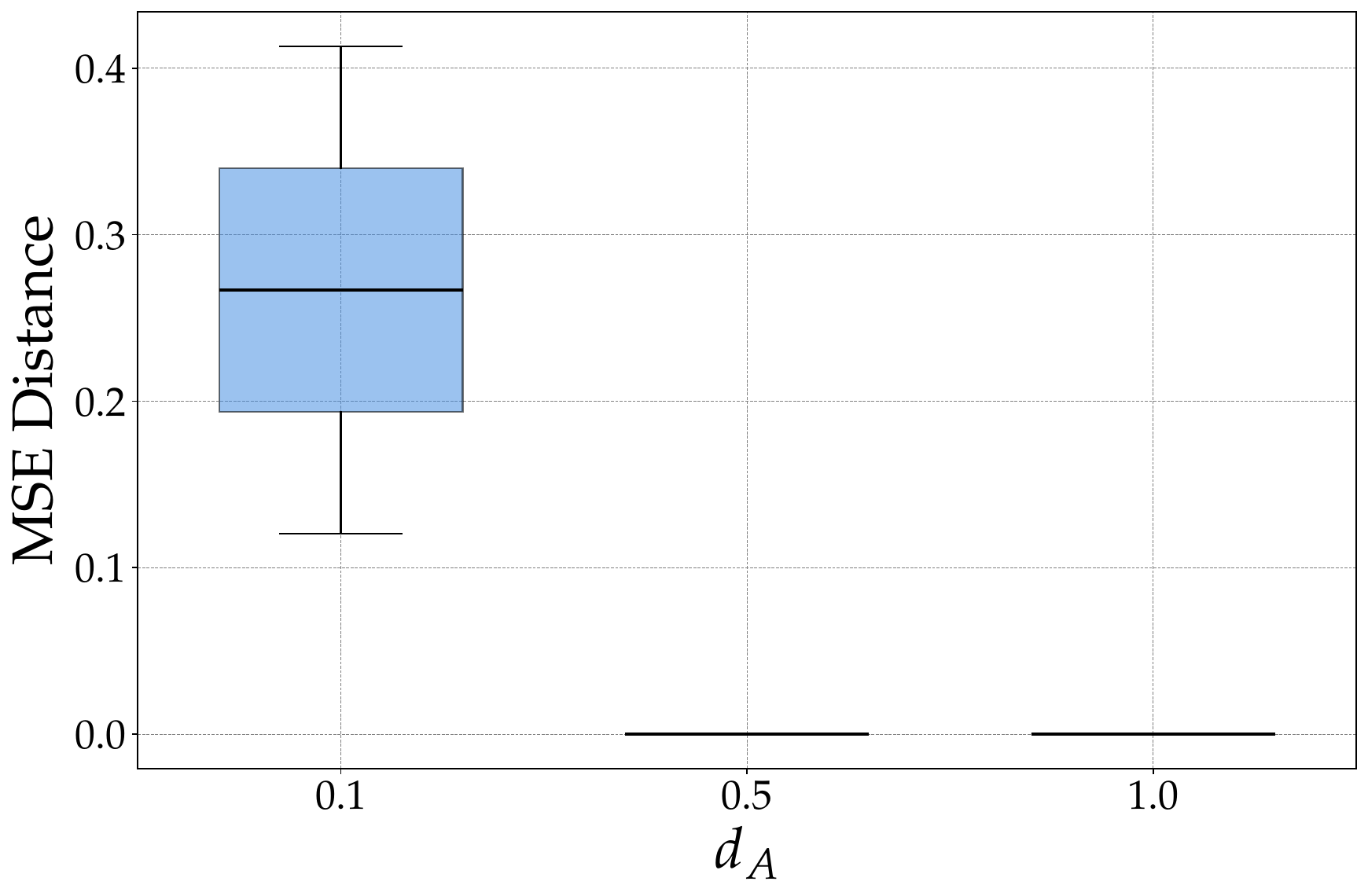}
  \end{subfigure}
  \caption{MSE and Hamming distance  between the ground truth matrix and the matrix estimated from Neural ODE (top) and SINDy (bottom). The estimators' performance degrades with lower values of $d_A$.
  }
  \label{fig:empirical_estimator_da}
\end{figure}

\begin{figure}[!ht]
    \centering
    \begin{subfigure}{0.45\textwidth}
        \includegraphics[width=\linewidth]{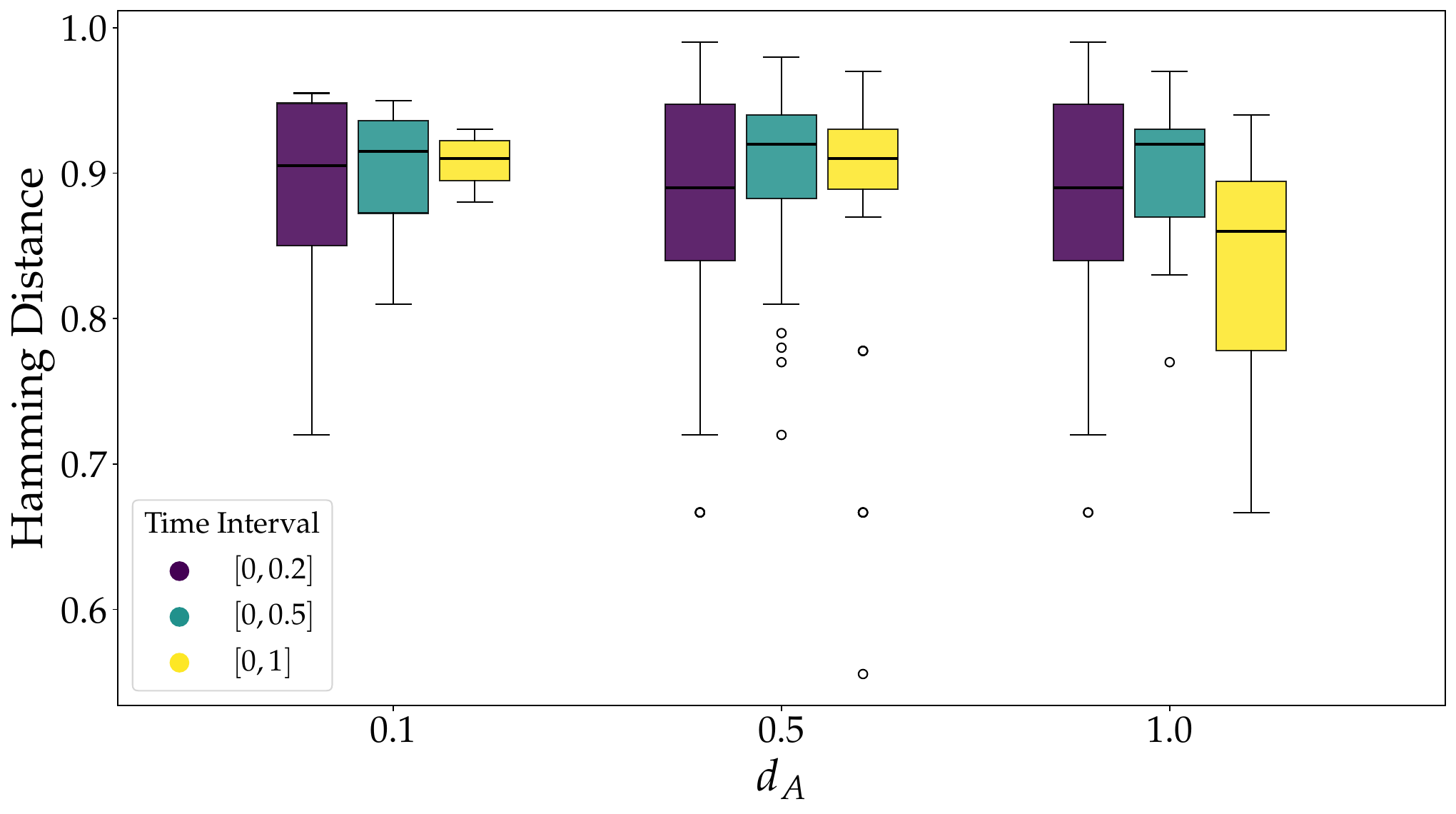}
    \end{subfigure}
    \hfill
    \begin{subfigure}{0.45\textwidth}
        \includegraphics[width=\linewidth]{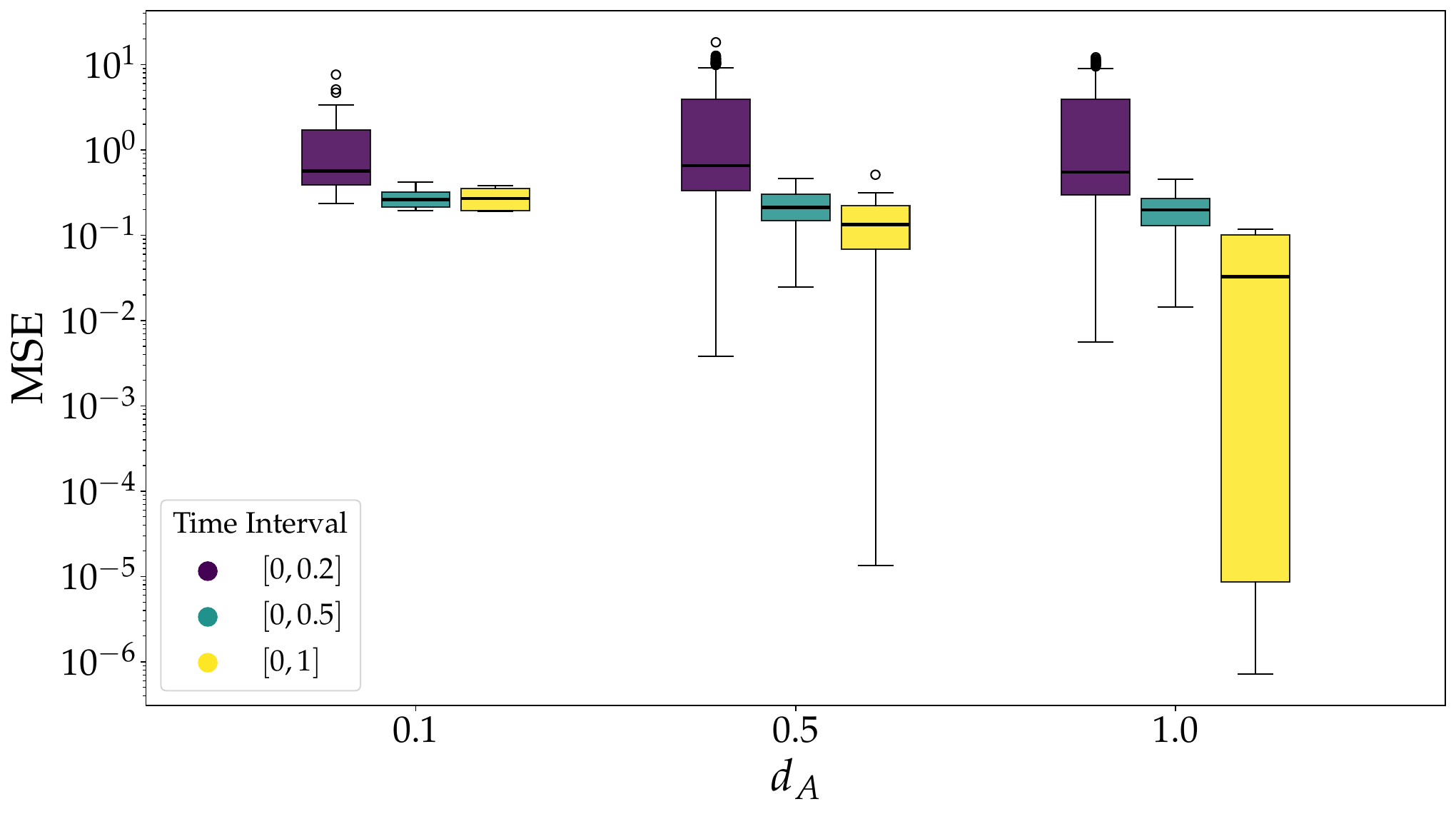}
    \end{subfigure}
    
    \caption{Hamming (left) and MSE (right) distance between the ground truth matrix and the matrix estimated from Neural ODE on trajectories spanning different time intervals. }
    \label{fig:time_empirical_estimator_da}
\end{figure}

\pagebreak
\subsection{Extension to further matrix models}\label{app:sec:extension_further_count_zero}
\label{app:extension-to-further-matrix-models}
We focus on further random matrix models and present here the results.

\xhdr{Fixed number of zeros per row ensemble.}
In this random matrix model we fix the number of zeros per row such that each row contains exactly $d(n)=\floor{np}$ zeros. The remaining non-zero coefficients are sampled i.i.d. as $a_{ij} \overset{iid}{\sim} N(0,1)$. We generate 100 system matrices and solve each of the for 100 initial values which are sampled uniformly at random from the unit circle in $\mathbb{R}^{\dim}$.
Subsequently, we carry out the system-level identifiability and empirical identifiability analyses, following the same procedures described in the main text.

Results for both analyses are in line with the results reported in the main paper: system-level unidentifiability shows are sharp increase as sparsity increases (\cref{fig:contour-fixed-number-zeros}) and empirical identifiability shows a clear left-right gradient in Hamming distance for both SINDy (\cref{fig:sindy_extra}) and NODE (\cref{fig:node_extra}), confirming the expected rise in unidentifiability as sparsity increases.

\xhdr{Sparse-Continuous ensemble with no zero rows or columns.}
In this random matrix model we explicitly exclude matrices with zero rows or zero columns. For this,
matrix entries are generated as $a_{ij} = g_{ij}b_{ij}$, where $b_{ij} \overset{iid}{\sim} Ber(p)$ and $g_{ij} \overset{iid}{\sim} N(0,1)$. 
We again attempt to generate 100 system matrices, however, not all dimensionality $\dim$ and sparsity $p$ combinations permit any system matrices with no zero rows or columns. (E.g. a $3 \times 3$ matrix with sparsity level $0.9$ will have $\approx 8$ zeros and hence always multiple zero rows and/or columns.) We hence cap the number of attempts to generate a single matrix that fulfills the zero-rows / zero-columns constraints at 100 attempts. For every valid generated system matrix, we sample 100 initial values randomly from the unit circle in $\mathbb{R^n}$ and numerically solve the initial value problem to obtain 100 trajectories per system.

We perform system-level identifiability analysis as well as an empirical identifiability analysis. System-level metrics are provided in \cref{fig:contour-no-zero-rows}.
Hamming-distances for different system dimensions $\dim$ and sparsity levels $p$ are reported in \cref{fig:sindy_extra} for SINDy and in \cref{fig:node_extra} for NODEs. For both estimators the average Hamming distance increases as sparsity rises, confirming the trends observed for other matrix models as well as the theoretical underpinnings.

\begin{figure}[!ht]

  \centering
    \begin{subfigure}{1.07\textwidth}
    \includegraphics[width=\linewidth]{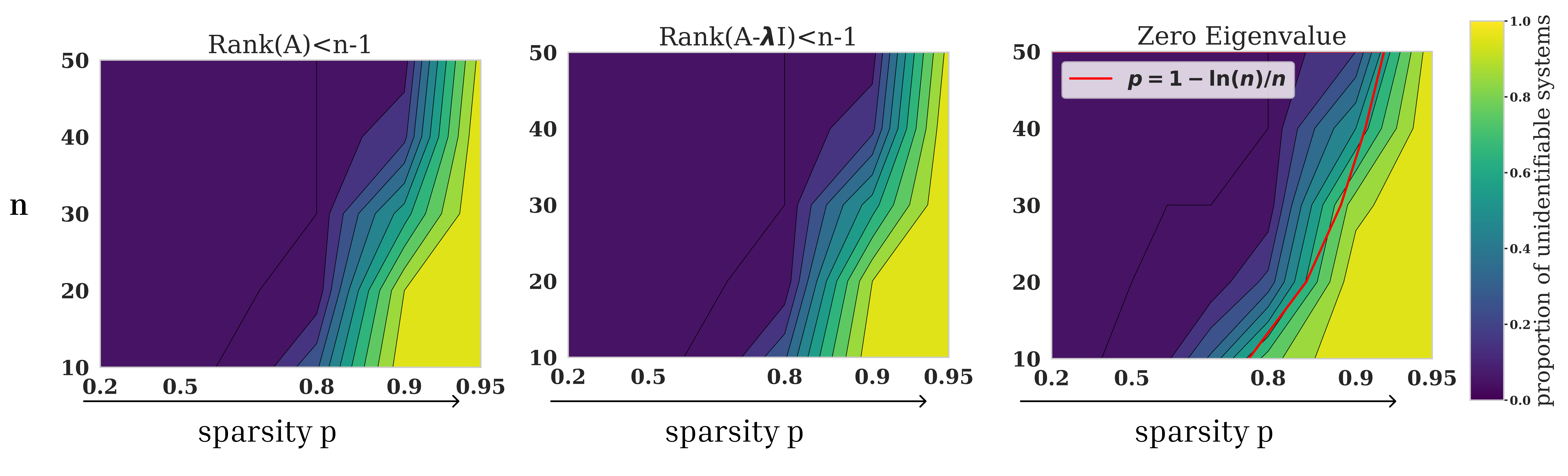}
  \end{subfigure}
  \caption{Proportion of matrices satisfying the conditions conditions $\rank(A - \lambda I)<n-1$ (left), $\exists \lambda \in \bR : \rank(A-\lambda I )<n-1$ (center), and presence of zero eigenvalues (right) at different system dimensions $n$ and sparsity levels $p$ for \textbf{fixed number of zeros per row ensemble}.
  }
  \label{fig:contour-fixed-number-zeros}
\end{figure}
\begin{figure}[!ht]
  \centering
    \begin{subfigure}{1.03\textwidth}
    \includegraphics[width=\linewidth]{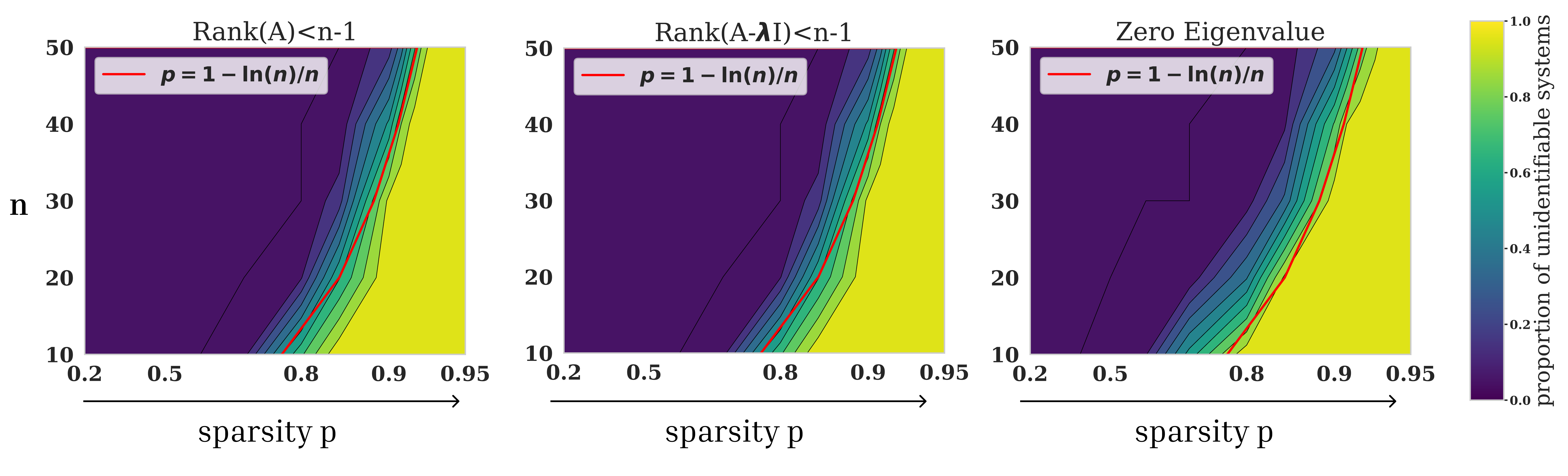}
  \end{subfigure}
  \caption{Proportion of matrices satisfying the conditions conditions $\rank(A - \lambda I)<n-1$ (left), $\exists \lambda \in \bR : \rank(A-\lambda I )<n-1$ (center), and presence of zero eigenvalues (right) at different system dimensions $n$ and sparsity levels $p$ for \textbf{sparse-continuous ensemble with no zero rows}.
  }
  \label{fig:contour-no-zero-rows}
\end{figure}
\begin{figure}[!ht]
  \centering
    \begin{subfigure}{1.03\textwidth}
    \includegraphics[width=\linewidth]{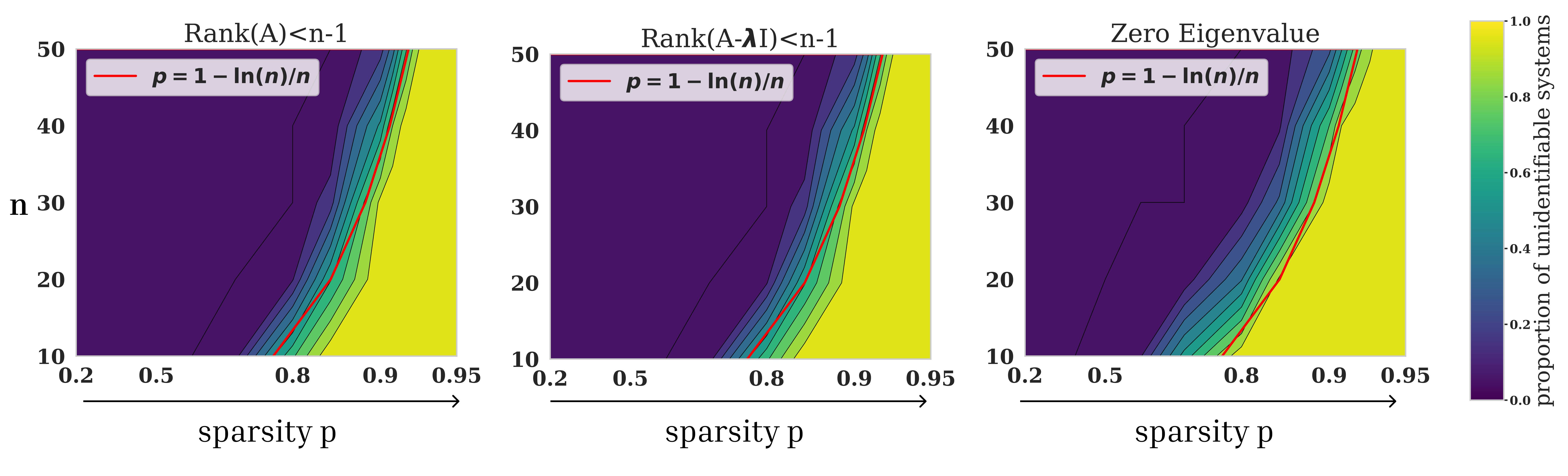}
  \end{subfigure}
  \caption{Proportion of matrices satisfying the conditions conditions $\rank(A - \lambda I)<n-1$ (left), $\exists \lambda \in \bR : \rank(A-\lambda I )<n-1$ (center), and presence of zero eigenvalues (right) at different system dimensions $n$ and sparsity levels $p$ for \textbf{sparse-continuous ensemble with no zero columns}.
  }
  \label{fig:contour-no-zero-cols}
\end{figure}

\FloatBarrier
\begin{figure}[t!]
  \centering
    \begin{subfigure}{1.03\textwidth}
    \includegraphics[width=\linewidth]{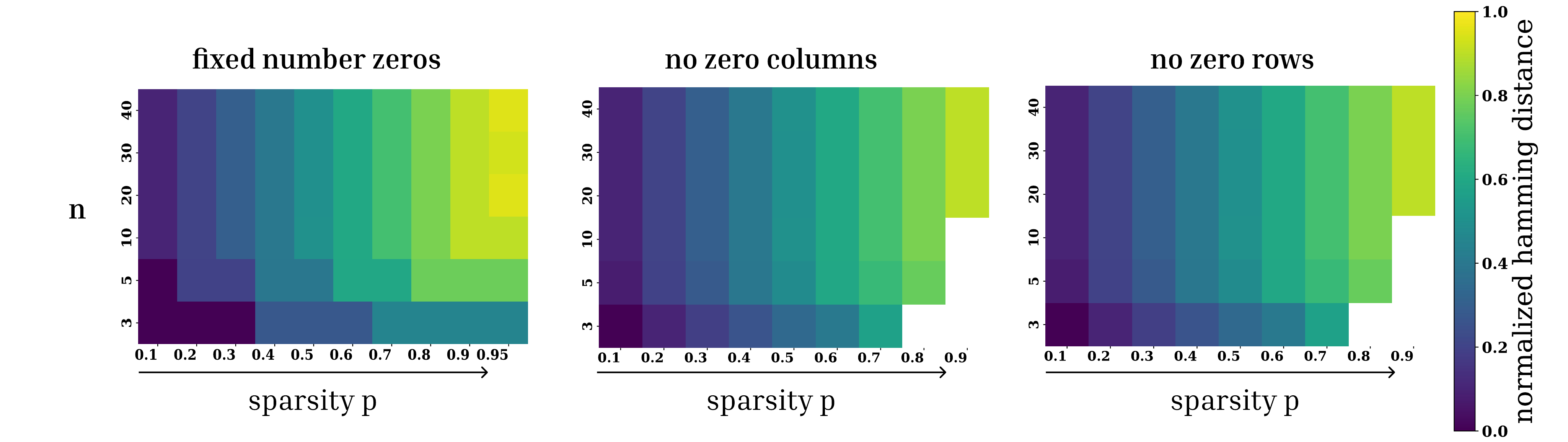}
  \end{subfigure}
  \caption{Hamming distance for different generating settings for SINDy on trajectories generated from \textbf{fixed number of zeros per row ensemble} (left), \textbf{sparse-continuous ensemble with no zero columns} (center), \textbf{sparse-continuous ensemble with no zero rows} (right).
  }
  \label{fig:sindy_extra}
\end{figure}

\begin{figure}[t!]
  \centering
    \begin{subfigure}{1.03\textwidth}
    \includegraphics[width=\linewidth]{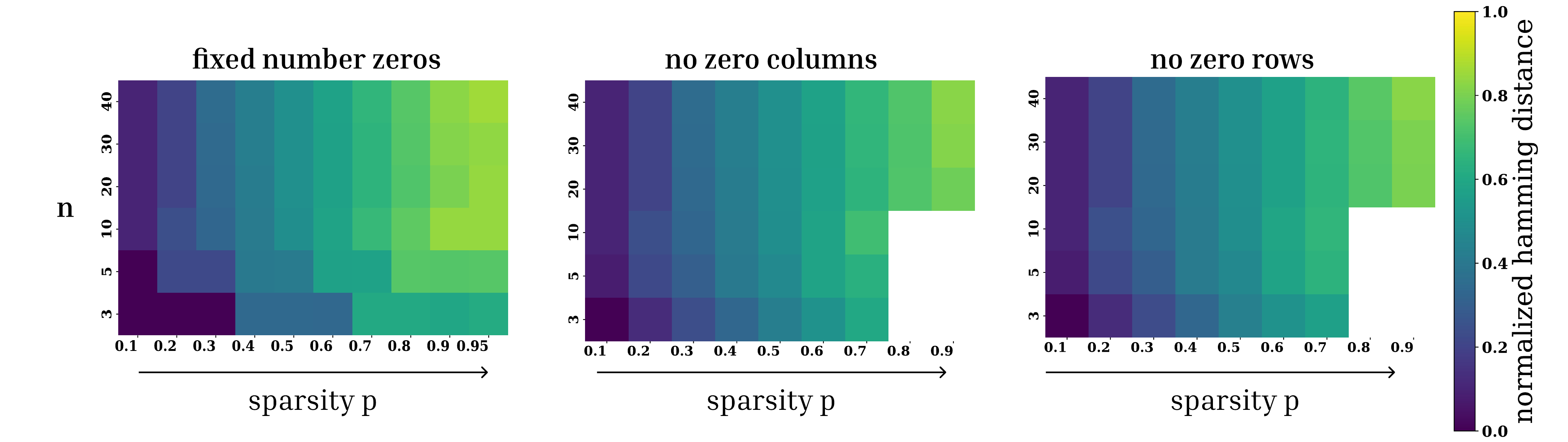}
  \end{subfigure}
  \caption{Hamming distance for different generating settings for NODE on trajectories generated from \textbf{fixed number of zeros per row ensemble} (left), \textbf{sparse-continuous ensemble with no zero columns} (center), \textbf{sparse-continuous ensemble with no zero rows} (right).
  }
  \label{fig:node_extra}
\end{figure}
\FloatBarrier

\end{document}